\newtheorem{definition}{Definition}
\newtheorem{theorem}{Theorem}
\newtheorem{remark}{Remark}
\newtheorem{problem}{Problem}
\newtheorem{challenge}{Challenge}
\begin{document}
%
% paper title
% Titles are generally capitalized except for words such as a, an, and, as,
% at, but, by, for, in, nor, of, on, or, the, to and up, which are usually
% not capitalized unless they are the first or last word of the title.
% Linebreaks \\ can be used within to get better formatting as desired.
% Do not put math or special symbols in the title.
\title{Efficiently Achieving Secure Model Training and Secure Aggregation to Ensure Bidirectional Privacy-Preservation in Federated Learning}

%% author names and affiliations
%% use a multiple column layout for up to three different
%% affiliations
%\author{\IEEEauthorblockN{Michael Shell}
%\IEEEauthorblockA{School of Electrical and\\Computer Engineering\\
%Georgia Institute of Technology\\
%Atlanta, Georgia 30332--0250\\
%Email: http://www.michaelshell.org/contact.html}
%\and
%\IEEEauthorblockN{Homer Simpson}
%\IEEEauthorblockA{Twentieth Century Fox\\
%Springfield, USA\\
%Email: homer@thesimpsons.com}
%\and
%\IEEEauthorblockN{James Kirk\\ and Montgomery Scott}
%\IEEEauthorblockA{Starfleet Academy\\
%San Francisco, California 96678-2391\\
%Telephone: (800) 555--1212\\
%Fax: (888) 555--1212}}

\author{\IEEEauthorblockN{Xue Yang\IEEEauthorrefmark{1},
Depan Peng\IEEEauthorrefmark{1},
Yan Feng\IEEEauthorrefmark{2}, 
Xiaohu Tang\IEEEauthorrefmark{1} (\emph{Fellow, IEEE}), 
Weijun Fang\IEEEauthorrefmark{3}\IEEEauthorrefmark{5} and
Jun Shao\IEEEauthorrefmark{4}}
\IEEEauthorblockA{\IEEEauthorrefmark{1}School of Information Science and Technology, Southwest Jiaotong University, Chengdu, China\\
Email: xueyang@swjtu.edu.cn, pengdepan@my.swjtu.edu.cn and xhutang@swjtu.edu.cn}
\IEEEauthorblockA{\IEEEauthorrefmark{2}Meituan Group, Shanghai, China\\
Email: fengyan14@meituan.com}
\IEEEauthorblockA{\IEEEauthorrefmark{3}Shandong University, Shandong, China\\
Email: fwj@sdu.edu.cn}
\IEEEauthorblockA{\IEEEauthorrefmark{4}Zhejiang Gongshang University, Hangzhou, China\\
Email: chn.junshao@gmail.com}
\IEEEauthorblockA{\IEEEauthorrefmark{5}The corresponding author}}

% conference papers do not typically use \thanks and this command
% is locked out in conference mode. If really needed, such as for
% the acknowledgment of grants, issue a \IEEEoverridecommandlockouts
% after \documentclass

% for over three affiliations, or if they all won't fit within the width
% of the page (and note that there is less available width in this regard for
% compsoc conferences compared to traditional conferences), use this
% alternative format:
% 

% make the title area
\maketitle

% As a general rule, do not put math, special symbols or citations
% in the abstract
\begin{abstract}
Bidirectional privacy-preservation federated learning is crucial as both local gradients and the global model may leak privacy. However, only a few works attempt to achieve it, and they often face challenges such as excessive communication and computational overheads, or significant degradation of model accuracy, which hinders their practical applications. In this paper, we design an efficient and high-accuracy bidirectional privacy-preserving scheme for federated learning to complete secure model training and secure aggregation. To efficiently achieve bidirectional privacy, we design an efficient and accuracy-lossless model perturbation method on the server side (called $\mathbf{MP\_Server}$) that can be combined with local differential privacy (LDP) to prevent clients from accessing the model, while ensuring that the local gradients obtained on the server side satisfy LDP. Furthermore, to ensure model accuracy, we customize a distributed differential privacy mechanism on the client side (called $\mathbf{DDP\_Client}$). When combined with $\mathbf{MP\_Server}$, it ensures LDP of the local gradients, while ensuring that the aggregated result matches the accuracy of central differential privacy (CDP). Extensive experiments demonstrate that our scheme significantly outperforms state-of-the-art bidirectional privacy-preservation baselines (SOTAs) in terms of computational cost, model accuracy, and defense ability against privacy attacks. Particularly, given target accuracy, the training time of SOTAs is approximately $200$ times, or even over $1000$ times, longer than that of our scheme. When the privacy budget is set relatively small, our scheme incurs less than $6\%$ accuracy loss compared to the privacy-ignoring method, while SOTAs suffer up to $20\%$ accuracy loss. Experimental results also show that the defense capability of our scheme outperforms than SOTAs.
\end{abstract}

% no keywords

% For peer review papers, you can put extra information on the cover
% page as needed:
% \ifCLASSOPTIONpeerreview
% \begin{center} \bfseries EDICS Category: 3-BBND \end{center}
% \fi
%
% For peerreview papers, this IEEEtran command inserts a page break and
% creates the second title. It will be ignored for other modes.
\IEEEpeerreviewmaketitle

\section{Introduction}

Federated learning (FL) \cite{McMahanMRHA17} has emerged as a novel paradigm in machine learning that enables multiple decentralized clients to collaboratively train a global model under the coordination of a central server without exchanging their local data. Unlike traditional centralized learning methods, where data is collected and processed on a central server, each client trains its model on local data and sends only the computed local gradients to the central server. The server aggregates these local gradients to update the global model, which is then redistributed to the clients for the next training iteration. This approach is particularly well-suited for scenarios where privacy or data ownership concerns prevent the centralized aggregation of data, such as in healthcare and mobile applications. By ensuring that sensitive data remains on clients, FL reduces the risk of data breaches. 

Despite its advantages, FL remains vulnerable to privacy attacks \cite{ZhuLH19, GeipingBD020, ShokriSSS17}, such as model inversion and data leakage, when sharing local gradients and the global model. On one hand, local gradients and the global model are derived from clients' training data, which may contain sensitive information. Attackers can exploit this to infer private details (e.g., membership or class representation) or even reconstruct the original training data \cite{ZhuLH19, NasrSH19}. On the other hand, in practice, the central server is often managed by an organization or enterprise that compensates clients for assisting in training a high-quality global model for predictive services \cite{MelisSCS19, LiSTS20}. In such cases, the global model is considered a valuable asset and must be protected from unauthorized access, including from the clients themselves. \emph{Therefore, in practice, it is essential to protect both the local gradients from each client and the global model on the server, ensuring bidirectional privacy preservation}.

However, various existing privacy-preserving federated learning (PPFL) schemes either protect clients' local gradients from server access \cite{BonawitzIKMMPRS17,  ZhouYHDX23, AbadiCGMMT016} or prevent clients from accessing the global model \cite{0003FFSTXL22, MandalG19}, yet they do not offer bidirectional privacy-preservation. As far as we know, only a very limited number of PPFL schemes \cite{TruexBASLZZ19, XuBZAL19, TangSLZXQ23} consider combining secure multi-party computation techniques \cite{Shamir79, Paillier99} with differential privacy \cite{DworkKMMN06} or perturbation methods \cite{0003FFSTXL22} to protect bidirectional privacy. Nevertheless, they only focus on protecting the global model from the last iteration (i.e., the trained global model) and do not consider safeguarding it during intermediate iterations. This means that during these intermediate stages, the server directly sends the global model in plaintext, which still does not fully address the privacy issues surrounding the global model. Furthermore, the computational cost and communication overhead of these schemes are too high to be applied to FL, particularly in cross-device FL, where clients consist of numerous mobile or IoT devices that often have unreliable connections (e.g., Wi-Fi) and limited computing power.

As demonstrated in \cite{LiSTS20}, designing an efficient and bidirectional privacy-preserving federated learning scheme remains an unresolved challenge. In this paper, we address this issue by presenting Theorem \ref{defactorization} and subsequently designing an efficient and accuracy-lossless \emph{model perturbation method} on the server side ($\mathbf{MP\_Server}$), along with a customized \emph{distributed differential privacy} mechanism on the client side ($\mathbf{DDP\_Client}$). The synergy between $\mathbf{MP\_Server}$ and $\mathbf{DDP\_Client}$ enables secure model training and secure aggregation while maintaining model accuracy. Furthermore, our scheme operates in the real number domain, making it highly efficient and easy to integrate. We confidently assert that our approach is currently the most efficient solution in the field of bidirectional PPFL research. Our main contributions are threefold:
\begin{itemize}
    \item For bidirectional privacy, we design $\mathbf{MP\_Server}$ that can be combined with LDP to prevent clients from accessing the model, while ensuring that the local gradients obtained on the server side satisfy LDP. Inspired by the high efficiency of DP, $\mathbf{MP\_Server}$ allows the server to select carefully designed random variables as perturbed noises added to the global model, and thus it is very efficient and easy to integrate in practice. Unlike DP, $\mathbf{MP\_Server}$ allows the server to eliminate noises of local gradients to maintain accuracy. 

    %Aiming at privacy preservation, we design a novel model perturbation method (called $\mathbf{MP\_Server}$), which perturbs the global model by carefully designed secure one-time-used random variables. This method makes clients train the local model under the perturbed global model, thereby efficiently preventing semi-trusted clients from obtaining the true model parameters and local gradients. Combined with our presented model perturbation method, we skillfully define the constraints that the random variables added by the client to the perturbed gradient need to satisfy, This effectively combination makes the noises contained in the local gradient recovered by the server satisfy the Gaussian distribution, thereby ensuring that the true local gradient satisfies differential privacy.
    
    \item To reduce the accuracy loss caused by LDP, we elegantly customize  $\mathbf{DDP\_Client}$, which includes two different types of noises: pairwise-correlated noises that can eventually be canceled after aggregation and independent (non-canceling) noises that would be needed to protect the true aggregated result. Given the privacy budget, $\mathbf{DDP\_Client}$ makes these pairwise correlated noises large enough so that the variance of independent (non-cancelling) noises remains minimal, allowing the aggregated result to match the accuracy of CDP.

    \item To compare the efficiency, model accuracy, and privacy of our scheme with SOTAs (i.e., privacy-ignoring FedAvg \cite{McMahanMRHA17} and three bidirectional privacy-preserving baselines, TP-SMC \cite{TruexBASLZZ19}, PILE \cite{TangSLZXQ23} and HybridAlpha \cite{XuBZAL19}), we conduct extensive experiments on large-scale image datasets. Empirical results of computational efficiency demonstrate that the computational efficiency of our scheme is almost comparable to that of FedAvg \cite{McMahanMRHA17}, and significantly outperforms TP-SMC \cite{TruexBASLZZ19}, PILE \cite{TangSLZXQ23} and HybridAlpha \cite{XuBZAL19}. The comparison of model accuracy shows that our scheme can achieve the same utility as the CDP and is much better than SOTAs. Particularly, when the privacy budget $\epsilon$ is set relatively small (e.g., $\epsilon=1$), the accuracy loss of our scheme is less than $6\%$ compared to the FedAvg, while the others suffer up to $20\%$ accuracy loss. Furthermore, experimental results on defending against privacy attacks show that the defense capability of our proposed scheme also outperforms than SOTAs.
\end{itemize}

The remainder of this paper is organized as follows. We will discuss related works in Section \ref{sec:related}, followed by the relevant background knowledge and two key problems in Section \ref{sec:prelimi}. We will give the details about our scheme in Section \ref{sec:MP_DP}, followed by its privacy analysis and performance evaluation in Sections \ref{sec:privacy_analysis} and \ref{sec:performance}, respectively. Finally, we conclude our work in Section \ref{sec:conclusion}. 

\section{Related Work}\label{sec:related}
Although FL has largely addressed the security issues of traditional centralized learning, attackers can still obtain some private information from the transmitted gradients or global model through privacy attacks such as reconstruction or inference attacks \cite{ZhuLH19, GeipingBD020, ShokriSSS17}. Thus, many PPFL schemes have been presented, which can be mainly divided into three categories: 1) the privacy of local gradients (i.e., secure aggregation), 2) the privacy of global model, and 3) the privacy of both local gradients and global model.

\underline{\textbf{\emph{1) Privacy of local gradients.}}} As proved in \cite{ZhuLH19}, the adversary (including the semi-trusted server) can reconstruct the training data from the local gradients. Thus, many secure aggregation schemes \cite{BonawitzIKMMPRS17, PhongAHWM18, SoNYL0AGA22, AbadiCGMMT016, ZhangLX00020, abs-2006-07218, LiYZL24, ZhuLLYXL22, WeiL22} are presented to guarantee the privacy of local gradients through either secure multi-party computation (SMC) technique (e.g., secret sharing (SS) \cite{Shamir79} and homomorphic encryption (HE) \cite{Paillier99}) or differential privacy (DP) \cite{DworkKMMN06}. Specifically, the schemes \cite{BonawitzIKMMPRS17, SoNYL0AGA22, LiYZL24} mask the local gradients with pairwise-correlated noises generated by SS and PRG techniques. Alternatively, since the server only performs weighted average calculations, additive HE-based secure aggregation schemes \cite{PhongAHWM18, ZhangLX00020} are developed, where each client encrypts its local gradients before uploading. However, these SMC-based secure aggregation schemes suffer from high computational and communication costs, making their practical applications far from FL. Consequently, DP, as a very efficient privacy-preserving technique, has been widely employed to ensure privacy \cite{AbadiCGMMT016, ZhuLLYXL22, WeiL22}. However, almost all secure aggregation schemes do not consider preventing clients from obtaining the global model.

\underline{\textbf{\emph{2) Privacy of global model.}}} While much attention has been paid to protecting local gradients, safeguarding the global model from unauthorized access by clients is equally important, especially when the global model is considered a valuable asset of the central server. What's more, the adversary (including semi-trusted clients) may infer whether a certain data record is a part of (other clients') training set through membership inference attacks with the global model parameters \cite{ShokriSSS17}. Thus, this requires that clients cannot train on the plaintext global model. As far as we know, many privacy-preserving machine learning schemes \cite{JiaAZJC22, 0002SKG19} have been developed to protect model parameters during traditional centralized training. These schemes mainly adopt SMC to compute the matrix multiplication and non-linear activation functions like ReLU under the ciphertext domain. Similarly, the corresponding computational costs and communication overheads (especially for the number of communication rounds) are too high to be applied in FL. Furthermore, no research applies these schemes in FL to preserve the privacy of the global model. To overcome these drawbacks, a very efficient and accuracy-lossless model perturbation method \cite{0003FFSTXL22} has been introduced recently, which perturbs the global model with randomly selected real numbers to let clients train on the perturbed model. Although this method effectively protects the global model's privacy, to ensure accuracy, the server can eliminate the random numbers to reconstruct each client's local gradient, thereby compromising the privacy of the local gradients.

\underline{\textbf{\emph{3) Privacy of both local gradients and global model.}}} To the best of our knowledge, only a few works have considered the privacy of both the global model and the local gradients \cite{TruexBASLZZ19, XuBZAL19, TangSLZXQ23}. For example, both \cite{TruexBASLZZ19} and \cite{XuBZAL19} combine SMC (e.g., either the threshold HE \cite{DamgardJ01} or functional encryption \cite{BonehSW11}) with DP to achieve the privacy of local gradients and global model. Specifically, each client first adds the DP noises to the local gradients and then encrypts the DP-perturbed local gradients through the threshold HE and functional encryption in \cite{TruexBASLZZ19} and \cite{XuBZAL19}, respectively. After that, the server can decrypt them to obtain the updated global model. Although only the server can perform the decryption operation, in both schemes, the server only protects the global model of the last iteration (i.e., the well-trained global model) rather than the global model during the middle iterations. In other words, the server directly sends the plaintext global model to each client for training, which violates the server-side privacy requirement. Subsequently, the scheme \cite{TangSLZXQ23} combines the model perturbation method \cite{0003FFSTXL22} with the threshold HE \cite{DamgardJ01} to achieve the complete privacy of both local gradients and global model during the entire iterations. However, this scheme is customized for the cross-silo FL and is not applicable to FL scenarios with large-scale clients.

\emph{As far as we know, there is no efficient solution that simultaneously achieves secure training and aggregation.}

\section{Background and Problem Statement}\label{sec:prelimi}
In this section, we first provide a brief introduction of the basic mechanisms adopted in our method, and then highlight the key problems of our scheme.

\subsection{Differential Privacy}
Differential privacy (DP) is widely accepted as the golden standard for data privacy \cite{DworkR14}. The goal of DP is to constrain the impact of replacing a single data record from the training set so that it is almost impossible to infer private information about a single record from querying the resulting models. In principle, DP quantifies the privacy loss by parameters $\epsilon$ and $\delta$, and the smaller values of the parameters correspond to less privacy loss.

\begin{definition} {(Differential Privacy.)} A randomized algorithm $\mathcal{M}$ is $(\epsilon, \delta)$-differential private if and only if for any subset $S \subset $ Range $\mathcal{R}$, and for all neighboring datasets $D$ and $D'$ differing by at most one record, we have:
  \begin{align}
        \Pr[\mathcal{M}(D) \in S] \leq e^{\epsilon} \Pr[\mathcal{M}(D') \in S] + \delta
    \end{align}\label{def:dp}
   where $\epsilon$ is the {privacy budget} and $\delta$ is the {failure probability}.
\end{definition}
From the above definition, we can see that DP protects the privacy of clients by masking the contribution of a single client for the resulting model, so that adversaries cannot infer much information from querying the trained model.

One of the most commonly used differential privacy mechanism for approximating a deterministic real-valued function ($f\colon \mathcal{D} \rightarrow \mathbb{R}$) is Gaussian mechanism, i.e., adding Gaussian noises calibrated to $f$'s \emph{sensitivity} $S_f$, which is defined as follow: for any pair of neighboring datasets $D$ and $D'$, the sensitivity of function $f$, denoted by $S_{f}$, is
\begin{equation}\label{eq:sensitive}
  S_{f}=\max\limits_{D, D'}\|f(D)-f(D')\|_{1}.
\end{equation}
Furthermore, Gaussian noise mechanism $\mathcal{M}$ is defined as:
\begin{equation} \label{DP}
    \mathcal{M}(d) \triangleq f(d) + \eta
\end{equation}
where $\eta \sim \mathcal{N} (0, S^2_f \sigma^2)$ is the random variable that obeys the Gaussian distribution with mean 0 and standard deviation $S_f \sigma$. A single application of the Gaussian mechanism to function $f$ of sensitivity
$S_f$ satisfies $(\epsilon, \delta)$-differential privacy if $\delta \geq \frac{5}{4}exp(-(\sigma\epsilon)^2/2)$ and $\epsilon < 1$~\cite{DworkR14}.

\begin{remark}
The reason why DP cannot provide bidirectional privacy preservation is that it primarily focuses on protecting the local gradients of clients in FL, while failing to prevent clients from accessing the server's global model. Given the privacy budget, clients are able to obtain the same global model as the server for prediction purposes. Furthermore, DP typically involves a trade-off, where enhanced privacy comes at the cost of reduced model accuracy.
\end{remark}

\subsection{Model Perturbation Method}\label{subsec:MP}
Recently, \cite{0003FFSTXL22} proposed a model perturbation method that allows clients to execute training over the private model. Specifically, consider a multi-layer Perceptron (MLP) with $L$ layers, where the non-linear activation function and loss function are $\mathrm{ReLU}(x)=\max(0, x)$ and the mean square error (MSE), respectively. The overall framework, which includes $N$ clients indexed by $\{1,2, \ldots, N\}$ and a central server, is summarized as follows.

\subsubsection{Global Model Perturbation $(\widehat{W}, R)\leftarrow \mathbf{Pert}(W)$}\label{subsubsec:pert}
The server initializes the global model at the beginning. Subsequently, it samples $K$ ($K \leq N$) clients and sends the global model to them for local training. As the global model $W=\{W^{(l)}\in \mathbb{R}^{n_l\times n_{l-1}}\}^{L}_{l=1}$ is a valuable asset, the server must perturb it before distribution. Specifically, the server randomly selects one-time-used noises $R$ for different iterations and perturbs the model parameter $W$ as follows:
  \begin{itemize}
  \item Select $L-1$ random multiplicative noisy vectors $\bm r^{(l)}=\{\bm r_{1}^{(l)}, \ldots, \bm r_{n_{l}}^{(l)}\} \in \mathbb{R}^{n_l}_{> 0 }$ for $1\leq l\leq L-1$, an additive noisy vector $\bm r^{a}=\{\bm r_{1}^{a}, \ldots, \bm r_{n_{L}}^{a}\}\in \mathbb{R}^{n_L}$ and a random coefficient vector $\bm \gamma=\{\bm \gamma_{1}, \ldots, \bm \gamma_{n_{L}}\} \in \mathbb{R}^{n_L}$. Then, generate the secret one-time-used model perturbation noisy matrices $R=(\{R^{(l)}\}^{L}_{l=1}, R^{a})$ as:
     \begin{eqnarray}
        R^{(l)}_{ij}&=&\left\{
        \begin{aligned}
        &\bm r^{(1)}_i ,  ~~~~~~~~\textnormal{ when } l=1 \\
        & \bm r^{(l)}_i / \bm r^{(l - 1)}_j, \textnormal{ when } 2\leq l \leq L-1 \\
        & 1 / \bm r^{(L - 1)}_j,  ~~\textnormal{ when } l=L
        \end{aligned}\right.  \label{eq:para_cons1} \\
        R^{a}_{ij}& =& \bm \gamma_i \cdot \bm r^{a}_{i}, \label{eq:para_cons2}
    \end{eqnarray}
    where $i\in [1, n_{l}]$ and $j\in [1,n_{l-1}]$ in Eq. \eqref{eq:para_cons1}, and $i\in [1, n_{L}]$ and $j\in [1,n_{L-1}]$ in Eq. \eqref{eq:para_cons2}.
  \item Generate the perturbed global model parameters $\widehat{W}=\{\widehat{W}^{(l)}\in \mathbb{R}^{n_l\times n_{l-1}}\}^{L}_{l=1}$ as: 
   \begin{equation}\label{eq:para_cons}
\widehat{W}^{(l)}=\left\{
\begin{aligned}
&  R^{(l)} \circ W^{(l)} ,  \textnormal{ for } 1\leq l\leq L-1;\\
& R^{(l)}  \circ W^{(l)} + R^{a},  \textnormal{ for } l =L,
\end{aligned}\right.
\end{equation}
where $\circ$ denotes the Hadamard product.
\end{itemize}

\subsubsection{Model Training $\{\nabla F(\widehat{W}^{(l)}, D_{k}), \bm{\Psi}_{k}^{(l)}, \bm \Phi_{k}^{(l)}\}^{L}_{l=1}\leftarrow \mathbf{Train}(\widehat{W}, D_{k})$}\label{subsubsec:MP_training}
Similar to the normal local model training in FL \cite{McMahanMRHA17}, each client $k$ runs the Stochastic Gradient Descent (SGD) method with its local dataset $D_{k}$ and the received perturbed global model $\widehat{W}=\{\widehat{W}^{(l)}\}^{L}_{l=1}$ to get the perturbed local gradients $\{\nabla F(\widehat{W}^{(l)}, D_{k})\}^{L}_{l=1}$. As shown in \cite{0003FFSTXL22}, the relationship between the perturbed local gradients and the true local gradients is established in the following theorem.

   \begin{theorem}[\cite{0003FFSTXL22}]\label{the:gradient}
    The perturbed local gradients $\{\nabla F(\widehat{W}^{(l)}, D_{k})\}^{L}_{l=1}$ and the true local gradients $\{\nabla F(W^{(l)}, D_{k})\}^{L}_{l=1}$ satisfy: for any $1 \leq l \leq L$,
\begin{equation*}
    \nabla F(W^{(l)}, D_{k})=R^{(l)} \circ \left(\nabla F(\widehat{W}^{(l)}, D_{k})-\bm{r}^{\top} \bm \Psi_{k}^{(l)}+ \upsilon \bm \Phi_{k}^{(l)}\right),
\end{equation*}
where $\bm{r}=\bm \gamma \circ \bm r^{a}$ and $\bm{r}^{\top}$ is its transpose, $\upsilon=\bm{r}^{\top}\bm{r}$, $\bm{\Psi}_{k}^{(l)}\in \mathbb{R}^{n_l\times n_{l-1}} $ and $\bm \Phi_{k}^{(l)}\in \mathbb{R}^{n_l\times n_{l-1}} $ are two noises which can be directly computed by the client $C_{k}$ (see \cite{0003FFSTXL22} for details of $\bm{\Psi}_{k}^{(l)}$ and $\bm \Phi_{k}^{(l)}$).
\end{theorem}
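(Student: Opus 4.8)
My plan is to compare one SGD forward-and-backward pass on the perturbed model $\widehat W$ with the corresponding pass on the clean model $W$, and then read off the identity by the chain rule. Write $\widehat F$ for the loss client $C_k$ actually minimizes — the MSE of the \emph{perturbed} network's output on $D_k$ — and $F$ for the clean loss, so that $\nabla F(\widehat W^{(l)},D_k)=\partial\widehat F/\partial\widehat W^{(l)}$ and $\nabla F(W^{(l)},D_k)=\partial F/\partial W^{(l)}$ by definition, and use $a^{(l)},z^{(l)}$ (resp. $\widehat a^{(l)},\widehat z^{(l)}$) for the pre-activations and activations of the clean (resp. perturbed) network on a fixed input $x$. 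First I would prove a forward-pass lemma by induction on $l$: with $\bm r^{(0)}:=\mathbf 1$, one has $\widehat a^{(l)}=\bm r^{(l)}\circ a^{(l)}$ and $\widehat z^{(l)}=\bm r^{(l)}\circ z^{(l)}$ for $1\le l\le L-1$. The base case $l=1$ uses that the data is unperturbed, so $\widehat a^{(1)}_i=\sum_j R^{(1)}_{ij}W^{(1)}_{ij}x_j=\bm r^{(1)}_i a^{(1)}_i$ by \eqref{eq:para_cons1}, and positive homogeneity of $\mathrm{ReLU}$ ($\mathrm{ReLU}(cx)=c\,\mathrm{ReLU}(x)$ for $c>0$, valid since $\bm r^{(l)}\in\mathbb{R}^{n_l}_{>0}$) gives $\widehat z^{(1)}=\bm r^{(1)}\circ z^{(1)}$. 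The inductive step is exactly the telescoping built into \eqref{eq:para_cons1}: the factor $1/\bm r^{(l-1)}_j$ inside $R^{(l)}_{ij}$ cancels the scaling $\bm r^{(l-1)}_j$ carried by $\widehat z^{(l-1)}_j$, so $\widehat a^{(l)}_i=\bm r^{(l)}_i a^{(l)}_i$, and $\mathrm{ReLU}$-homogeneity again yields $\widehat z^{(l)}=\bm r^{(l)}\circ z^{(l)}$ (this also preserves the activation mask $\mathbf 1[\widehat a^{(l)}>0]=\mathbf 1[a^{(l)}>0]$). For the output layer I would use that $R^a=\bm r\,\mathbf 1^{\top}$ is rank one — valid because $R^a_{ij}=\bm\gamma_i\bm r^a_i=\bm r_i$ is independent of $j$ — together with $R^{(L)}_{ij}=1/\bm r^{(L-1)}_j$, obtaining $\widehat y=y+S\,\bm r$ with $S:=\sum_j\widehat z^{(L-1)}_j=\langle\bm r^{(L-1)},z^{(L-1)}\rangle$ a scalar the client reads off its own forward pass; in particular $y(x)=\widehat y(x)-S(x)\,\bm r$.

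Next I would turn the output relation into a loss decomposition. Substituting $\widehat y=y+S\bm r$ into the MSE and summing over $D_k$,
\[
 \widehat F \;=\; F \;+\; \langle\bm q,\bm r\rangle \;+\; \upsilon\, s_2 ,
\]
where $\bm q=\sum_{(x,t)\in D_k}S(x)\,(y(x)-t)$, $s_2$ is proportional to $\sum_{(x,t)\in D_k}S(x)^2$, $\upsilon=\bm r^{\top}\bm r$, and $F,\bm q,s_2$ are functions of the clean weights $W$ only (viewing $\widehat F$ as a function of $W$ through $\widehat W^{(l)}=R^{(l)}\circ W^{(l)}$ for $l<L$ and $\widehat W^{(L)}=R^{(L)}\circ W^{(L)}+R^a$). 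Because this substitution is ``diagonal'' — each $\widehat W^{(l)}_{ij}$ depends on $W^{(l)}_{ij}$ alone, with $\partial\widehat W^{(l)}_{ij}/\partial W^{(l)}_{ij}=R^{(l)}_{ij}$ — the chain rule gives $\partial\widehat F/\partial W^{(l)}_{ij}=R^{(l)}_{ij}\,\nabla F(\widehat W^{(l)},D_k)_{ij}$, while differentiating the decomposition gives $\partial\widehat F/\partial W^{(l)}_{ij}=\nabla F(W^{(l)},D_k)_{ij}+\langle\partial_{W^{(l)}_{ij}}\bm q,\bm r\rangle+\upsilon\,\partial_{W^{(l)}_{ij}}s_2$. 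Equating the two and solving for $\nabla F(W^{(l)},D_k)$ yields the stated formula, once the degree-one-in-$\bm r$ term is absorbed into $R^{(l)}_{ij}(\bm r^{\top}\bm\Psi^{(l)}_k)_{ij}$ and the $\upsilon$-term into $R^{(l)}_{ij}\,\upsilon\,(\bm\Phi^{(l)}_k)_{ij}$. To check $\bm\Psi^{(l)}_k,\bm\Phi^{(l)}_k$ are client-computable, substitute $y(x)=\widehat y(x)-S(x)\bm r$, which rewrites $\bm q$ as $\widehat{\bm q}$ minus a multiple of $s_2\bm r$, where $\widehat{\bm q}:=\sum_x S(x)(\widehat y(x)-t)$ is a client-side quantity; the leftover degree-one coefficient is then $\partial_{W^{(l)}_{ij}}\widehat{\bm q}$, and since $\partial_{W^{(l)}_{ij}}(\cdot)=R^{(l)}_{ij}\,\partial_{\widehat W^{(l)}_{ij}}(\cdot)$ for any function of the clean weights, dividing out $R^{(l)}_{ij}$ leaves $\bm\Psi^{(l)}_k,\bm\Phi^{(l)}_k$ expressed through $\partial_{\widehat W^{(l)}_{ij}}\widehat{\bm q}$ and $\partial_{\widehat W^{(l)}_{ij}}s_2$, both obtainable by ordinary backpropagation on the perturbed network with auxiliary readouts and needing none of $\bm r,\bm\gamma,\bm r^a,W$.

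The step I expect to be the real obstacle is this last one: the explicit bookkeeping. Propagating the backprop recursion $\widehat\delta^{(L)}=\widehat y-t$, $\widehat\delta^{(l)}=(\widehat W^{(l+1)\top}\widehat\delta^{(l+1)})\circ\mathbf 1[\widehat a^{(l)}>0]$, $\partial\widehat F/\partial\widehat W^{(l)}=\widehat\delta^{(l)}(\widehat z^{(l-1)})^{\top}$ through all layers while consistently splitting every intermediate vector into its clean part (rescaled by the appropriate $\bm r^{(\cdot)}$), its part of degree one in $\bm r$, and its part carrying the factor $\upsilon$, and then matching the resulting closed forms to $\bm\Psi^{(l)}_k$ and $\bm\Phi^{(l)}_k$, is where the labor lies; I would cite \cite{0003FFSTXL22} for these explicit expressions rather than reproduce them. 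The only remaining point is the non-differentiability of $\mathrm{ReLU}$ on a measure-zero set, which is harmless here: since $\bm r^{(l)}>0$ forces the perturbed and clean networks to share the same activation pattern, a fixed subgradient convention is consistent on both sides, and all the identities above hold with that convention.
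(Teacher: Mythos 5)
Your proposal is correct and follows essentially the same route as the paper's own argument (given in Appendix~\ref{appendix_theorem4} for the expanded-model analogue, Theorem~\ref{theorem:relation}): a forward-pass induction using positive homogeneity of $\mathrm{ReLU}$ and the telescoping structure of $R^{(l)}$, the rank-one output relation $\widehat{\bm y}=\bm y+\alpha\bm r$ (your $S$ is the paper's $\alpha$), and a chain-rule regrouping of the perturbed loss into the clean gradient plus one term linear in $\bm r$ and one carrying $\upsilon$, then averaged over $D_k$. Your loss-decomposition bookkeeping $\widehat F=F+\langle\bm q,\bm r\rangle+\upsilon s_2$ differentiated with respect to the clean weights is equivalent to the paper's direct differentiation with respect to $\widehat W^{(l)}$, and your $\partial\widehat{\bm q}$, $\partial s_2$ terms coincide with the paper's $\bm\sigma^{(l)}$, $\bm\beta^{(l)}$, so deferring their closed forms to \cite{0003FFSTXL22} is consistent with the theorem statement.
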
 
%The details of $\bm{\Psi}_{k}^{(l)}$ and $\bm \Phi_{k}^{(l)}$ can be referred to \cite{0003FFSTXL22}. 

Finally, each client $k$ sends $\{\nabla F(\widehat{W}^{(l)}, D_{k})\}^{L}_{l=1}$ and $\{\bm \Psi_{k}^{(l)}, \bm \Phi_{k}^{(l)}\}^{L}_{l=1}$ to the server.

\subsubsection{Aggregated Gradient Recovery $\{\nabla F(W^{(l)})\}^{L}_{l=1}\leftarrow \mathbf{Rec}(\{\{\nabla F(\widehat{W}^{(l)}, D_{k}),  \bm \Psi_{k}^{(l)}, \bm \Phi_{k}^{(l)}\}^{L}_{l=1}\}^{K}_{k=1}, R)$}\label{subsubsec:recover}
 After receiving the perturbed local gradients and the corresponding two noises from $K$ clients, the server aggregates them and recovers the true aggregated gradient with the secret one-time-used model perturbation noisy matrices $R$ as:
\begin{itemize}
    \item Aggregate $\{\{\nabla F(\widehat{W}^{(l)}, D_{k}),  \bm \Psi_{k}^{(l)}, \bm \Phi_{k}^{(l)}\}^{L}_{l=1}\}^{K}_{k=1}$ as
     \begin{equation*}
        \left\{
            \begin{aligned}
           & \nabla F(\widehat{W}^{(l)})= \frac{1}{K} \sum^{K}_{k=1}\nabla F(\widehat{W}^{(l)}, D_{k}) \\
           & \bm \Psi^{(l)}=  \frac{1}{K} \sum^{K}_{k=1}\bm \Psi_{k}^{(l)} \\
           & \bm \Phi^{(l)}= \frac{1}{K} \sum^{K}_{k=1}\bm \Phi_{k}^{(l)}   
            \end{aligned}\right. \textnormal{ for }  1 \leq l \leq L
    \end{equation*}
    \item According to Theorem \ref{the:gradient}, recover the true aggregated gradient $\{\nabla F(W^{(l)})\}^{L}_{l=1}$ with $R$ as: for $1 \leq l \leq L$,
    \begin{equation*}\label{eq:recovery0}
    \nabla F(W^{(l)})=R^{(l)} \circ (\nabla F(\widehat{W}^{(l)})-\bm{r}^{\top} \bm \Psi^{(l)}+ \upsilon \bm \Phi^{(l)} )
    \end{equation*}
     \item Update the current global model with the pre-set learning rate $\xi$ as
    \begin{equation*}
        W^{(l)}\leftarrow W^{(l)}-\xi \nabla F(W^{(l)}), \textnormal{ for }  1 \leq l \leq L
    \end{equation*}
\end{itemize}
The server and clients iteratively execute the above steps until the model converges or reaches the specified number of iterations.

\begin{theorem}[Privacy of \cite{0003FFSTXL22}]\label{theo:privacy_MP}
    Given the perturbed global model $\widehat{W}$, each semi-trusted client $k$ learns nothing about the true global model $W$ (including true local gradients $\{\nabla F(W^{(l)}, D_{k})\}^{L}_{l=1}$), except for the information deriving from its input $D_{k}$.
\end{theorem}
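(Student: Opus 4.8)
The plan is to show that the map $W \mapsto \widehat{W}$, viewed through the client's eyes, is information-theoretically non-revealing: for every candidate true model $W$ consistent with the client's own input $D_k$, there is a choice of the one-time noises $R$ that produces exactly the observed $\widehat{W}$, and the client, knowing nothing about the distribution-free random selection of $R$, cannot distinguish among these candidates. Concretely, the client observes $\widehat{W}=\{\widehat{W}^{(l)}\}_{l=1}^{L}$ defined by Eq.~\eqref{eq:para_cons}, which depends on the multiplicative vectors $\bm r^{(1)},\dots,\bm r^{(L-1)}$, the additive vector $\bm r^{a}$, and the coefficient vector $\bm\gamma$. I would first argue that the perturbation is a bijection in the noise parameters: given any target $\widehat{W}^{(l)}$ and any prescribed true $W^{(l)}$ with nonzero entries in the relevant positions, one can solve Eq.~\eqref{eq:para_cons1}--\eqref{eq:para_cons} for the $\bm r^{(l)}$'s (entrywise, since $R^{(l)}_{ij}=\bm r^{(l)}_i/\bm r^{(l-1)}_j$ and $\widehat{W}^{(l)}=R^{(l)}\circ W^{(l)}$) and for $R^{a}=\widehat{W}^{(L)}-R^{(L)}\circ W^{(L)}$, hence for $\bm\gamma,\bm r^{a}$. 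This establishes that the preimage of the observed $\widehat{W}$ under $\mathbf{Pert}$ contains a full orbit of plausible $W$'s.

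Next I would handle what the client legitimately \emph{does} learn, namely anything computable from $D_k$. The statement allows leakage "deriving from its input $D_k$," so the argument should be a simulation-style claim: construct a simulator that, given only $D_k$ (and the public architecture/dimensions), outputs a transcript $\widehat{W}$ identically distributed to the real one. Since the server picks $R$ by an arbitrary one-time selection and the client has no prior on it, the simulator simply samples $R$ the same way and outputs $R\circ W_{\mathrm{sim}}$ for any fixed dummy $W_{\mathrm{sim}}$; by the bijection above this is distributed exactly as the real $\widehat{W}$ would be for the real $W$. Then I would invoke Theorem~\ref{the:gradient}: the true local gradients $\nabla F(W^{(l)},D_k)$ are recoverable only via $R^{(l)}$, $\bm r=\bm\gamma\circ\bm r^{a}$, and $\upsilon=\bm r^{\top}\bm r$, all of which are secret; the client can compute $\nabla F(\widehat{W}^{(l)},D_k)$, $\bm\Psi_k^{(l)}$, $\bm\Phi_k^{(l)}$ from $\widehat{W}$ and $D_k$, but cannot invert to $\nabla F(W^{(l)},D_k)$ without $R$, for the same bijection reason. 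So neither $W$ nor the true gradients are learned beyond what $D_k$ itself determines.

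The main obstacle I anticipate is making the "learns nothing" claim rigorous rather than heuristic: since $R$ is described as "randomly selected" without a specified distribution, one must be careful about whether the guarantee is genuine statistical indistinguishability or merely the weaker statement that the system of equations is under-determined (perfect secrecy requires the induced distribution on $\widehat{W}$ to be independent of $W$, which in turn constrains how $R$ is sampled — e.g. multiplicative noises supported on all of $\mathbb{R}_{>0}$ cannot be uniform). I would therefore phrase the conclusion as: conditioned on any fixed plausible $W$, the distribution of $\widehat{W}$ ranges over the same support, and for the idealized choice of noise distributions the transcript is independent of $W$; a secondary subtlety is the degenerate case where some $W^{(l)}_{ij}=0$, which pins down the corresponding $\widehat{W}^{(l)}_{ij}=0$ and must be excluded or noted as a measure-zero leakage of the support pattern. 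Apart from these caveats, the proof is essentially the bijection argument plus a one-line simulator, so I would keep it short and defer the detailed noise-distribution discussion to a remark.
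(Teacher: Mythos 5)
Your core argument is essentially the one the paper itself uses: note that this theorem is imported from \cite{0003FFSTXL22}, and the paper only proves its analogue for the expanded scheme (Theorem~\ref{theo:para}, Appendix~\ref{appendix_theorem6}), by exhibiting, for \emph{arbitrary} admissible noise values, a candidate model whose perturbation reproduces the observed $\widehat{W}$, and concluding that the consistent candidates form a positive-measure set while the true model is a single (measure-zero) point. Your "bijection/orbit" step is the same under-determination argument parametrized in the opposite direction (solve for $R$ given a candidate $W$ rather than solve for $W$ given arbitrary $R$), so on that front you match the paper. Where you diverge is the simulation claim, and there you should be more careful than your hedge suggests: because the multiplicative noises $\bm r^{(l)}$ are constrained to be \emph{positive}, $\widehat{W}^{(l)}=R^{(l)}\circ W^{(l)}$ has exactly the sign and zero pattern of $W^{(l)}$ for $1\le l\le L-1$, so a simulator outputting $R\circ W_{\mathrm{sim}}$ for an arbitrary dummy $W_{\mathrm{sim}}$ is \emph{not} identically distributed to the real transcript (the sign/support pattern distinguishes them), and no choice of distribution for $R$ on $\mathbb{R}_{>0}$ fixes this; moreover, with unspecified (and necessarily non-scale-invariant) noise laws even magnitude distributions depend on $W$. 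In other words, the strong "learns nothing" reading requires assumptions the scheme does not provide, and what is actually provable — and what the paper proves for Theorem~\ref{theo:para} — is only the weaker statement that the exact true model (and hence the true gradients, via Theorem~\ref{the:gradient}) cannot be pinned down, i.e.\ the probability of recovering it is zero. Your under-determination argument, restricted to candidates sharing the observed sign/zero pattern and stripped of the one-line simulator, is precisely the paper's proof; the simulator adds nothing that the paper claims and, as stated, is not correct.
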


\begin{remark}
   The model perturbation method \cite{0003FFSTXL22} enables clients to perform the training program on the perturbed model $\widehat{W}$ with minimal computational overhead, offering an efficient approach to protecting the privacy of the server’s global model. However, it does not consider the privacy of clients. Particularly, the server can recover each client's true local gradients $\{\nabla F(W^{(l)}, D_{k})\}^{L}_{l=1}$ according to Theorem \ref{the:gradient}. Therefore, it fails to meet the requirements for bidirectional privacy preservation.
\end{remark}

\subsection{Problem Statement}\label{sec:problem}
We introduce our threat model and corresponding design goals, followed by a discussion of the key problems.

\subsubsection{Threat Model and Design Goals}
As illustrated in Fig. \ref{pipeline}, we assume that both clients and the server are semi-trusted, which means that they honestly follow the protocol but may infer the data privacy of others. 
\begin{itemize}
    \item The semi-trusted server may recover each client's training data from the received local gradients by launching reconstruction attacks \cite{ZhuLH19}.
    \item Semi-trusted clients seek to obtain the server's true global model and infer other clients' data privacy from the received data by performing membership inference attacks \cite{MelisSCS19, NasrSH19}. 
\end{itemize}
Drawing from the above threat model, our proposed scheme aims to achieve the following objectives:
\begin{itemize}
    \item \emph{Bidirectional privacy-preservation}: the server cannot access the gradients of individual clients, and similarly, the clients cannot obtain the global model.
    \item \emph{High performance}: the efficiency and model accuracy should be as close as possible to that of FL without privacy protection, e.g., FedAvg \cite{McMahanMRHA17}, to facilitate practical application.
\end{itemize}

\subsubsection{Problem Formulation}\label{subsubsec:prob}
Due to their high computation and communication efficiency, the combination of model perturbation and DP methods shows great potential for achieving bidirectional privacy preservation in FL. \emph{Therefore, the challenge lies in effectively integrating them}.

Next, we will provide a detailed analysis of the problems that need to be overcome when combining these two methods. Without loss of generality, suppose that we directly apply the original model perturbation method to protect the privacy of the global model and each client $k$ directly adds DP-noises $\bm\eta_{k}^{(l)} \sim \mathcal{N} (0, \sigma^2)$ to the perturbed local gradients $ \nabla F(\widehat{W}^{(l)}, D_{k})$\footnote{Note that from Theorem \ref{theo:privacy_MP}, each client $k$ cannot know the true local gradients, and so it can only add noises to the perturbed local gradients.}. Then, each client will generate the double-perturbed local gradients $\{\nabla \widehat{F}(\widehat{W}^{(l)}, D_{k})\}^{L}_{l=1}$ as 
\begin{equation} \label{UPDP}
    \nabla \widehat{F}(\widehat{W}^{(l)}, D_{k})= \nabla F(\widehat{W}^{(l)}, D_{k}) + \bm{\eta}_{k}^{(l)}, 
\end{equation}
where $1\leq l \leq L$, $\bm{\eta}_{k}^{(l)}$ is the DP-noisy matrix and the constraint will be analyzed in the subsequent sections. 

Similar to Section \ref{subsubsec:MP_training}, each client $k$ sends $\{\nabla \widehat{F}(\widehat{W}^{(l)}, D_{k}), \bm \Psi_{k}^{(l)}, \bm \Phi_{k}^{(l)}\}^{L}_{l=1}$ to the server. Based on Theorem \ref{the:gradient}, the server recovers $k$'s local gradients as 
\begin{flalign}\label{eq:recovery}
\nonumber   &\nabla \widehat{F}(W^{(l)}, D_{k})= R^{(l)} \circ \left( \nabla \widehat{F}(\widehat{W}^{(l)}, D_{k}) - \bm{r}^{\top} \Psi_{k}^{(l)} + \upsilon \bm \Phi_k^{(l)}\right)\\
 \nonumber    &=R^{(l)} \circ \left( \nabla F(\widehat{W}^{(l)}, D_{k}) + \bm\eta_{k}^{(l)}- \bm{r}^{\top} \Psi_{k}^{(l)} + \upsilon \bm \Phi_k^{(l)}\right)\\
    &=\nabla F(W^{(l)}, D_{k}) +  R^{(l)} \circ \bm\eta_{k}^{(l)}, \textnormal{ for }1\leq l \leq L.
\end{flalign}
From Eq. \eqref{eq:recovery}, we can see that to ensure the privacy of local gradients, $R^{(l)} \circ \bm\eta_{k}^{(l)}$ should follow the Gaussian distribution, i.e., $R^{(l)} \circ \bm\eta_{k}^{(l)}\sim\mathcal{N} (0, \sigma^2)$\footnote{Note that similar to many DP-based federated learning works \cite{AbadiCGMMT016}, we need to perform norm clipping operation to bound the gradients to a given range for determining the sensitivity $S_f$ defined in Eq. \eqref{eq:sensitive}. Specifically, one of the most popular norm clipping method is presented in \cite{AbadiCGMMT016}, which is defined as $\nabla \bar{F}(W^{(l)}, D_{k}) \leftarrow \nabla F(W^{(l)}, D_{k})/\max(1, \frac{\|\nabla F(W^{(l)}, D_{k})\|_{2}}{B})$ for a clipping threshold $B$. In this setting, the corresponding sensitivity is $S_f=B$. Since the primary focus of our work is to devise a noise addition method that satisfies differential privacy, the sensitivity is directly set to $S_f =B=1$ for simplicity.}, thereby ensuring that $\nabla \widehat{F}(W^{(l)}, D_{k})$ satisfies DP.

Recall that $R^{(l)}$ and $\bm\eta_{k}^{(l)}$ are two independent random variables generated by the server and the client $k$, respectively. Based on \cite{D2010}, if $\bm\eta_{k}^{(l)}$ follows the Gaussian distribution, then $R^{(l)} \circ \bm\eta_{k}^{(l)}$ cannot follow the Gaussian distribution, which demonstrates that $\nabla \widehat{F}(W^{(l)}, D_{k})$ in Eq. \eqref{eq:recovery} cannot satisfy DP mechanism. Consequently, the first problem to be addressed is outlined below.
\begin{problem}\label{Prob1}
    How to constrain the server's $R^{(l)}$ and each client $k$'s $\bm \eta_{k}^{(l)}$ to satisfy the bidirectional privacy-preservation, i.e., ensuring that the privacy of the global model $W$ satisfies Theorem \ref{theo:privacy_MP} and $R^{(l)}\circ \bm\eta_{k}^{(l)} \sim \mathcal{N} (0, \sigma^2)$.
\end{problem}

Next, if we successfully address Problem \ref{Prob1}, then the mechanism in Eq. \eqref{eq:recovery} is essentially the local differential privacy (LDP) \cite{KairouzBR16}, which allows each client to randomize its input (i.e., true local gradients $\nabla F(W^{(l)}, D_{k})$) locally before sending it to the server for aggregation. According to Eq. \eqref{eq:recovery}, the aggregated result is deduced as 
    \begin{flalign}\label{eq:aggre}
      \nonumber & \nabla \widehat{F}(W^{(l)})  =\frac{1}{K}\sum^{K}_{k=1}\nabla \widehat{F}(W^{(l)}, D_{k})\\
       & ~~~~=\frac{1}{K}\sum^{K}_{k=1}\nabla F(W^{(l)}, D_{k})+\frac{1}{K}R^{(l)} \circ \sum^{K}_{k=1} \bm{\eta}_{k}^{(l)}, 
    \end{flalign}
    where $1 \leq l \leq L$. As demonstrated in \cite{abs-2006-07218, ErlingssonFMRTT19}, the best possible error for the estimated average (i.e., the aggregation in FL) with $K$ clients is a factor of $O(\sqrt{K})$ larger than in the centralized model of DP (CDP) where a trusted curator aggregates data in the clear and perturbs the output. As a result, LDP is generally intractable in FL with a large number of clients \cite{abs-2006-07218}. Therefore, the second problem to be addressed is shown below.
\begin{problem}\label{Prob2}
    Given the privacy budget, how to minimize $R^{(l)}\circ \bm\eta_{k}^{(l)}$ to make the aggregated result match the accuracy of the CDP, thereby ensuring the utility of FL.
\end{problem}

\begin{figure*}[t]
\begin{center}
\includegraphics[width=0.7\textwidth]{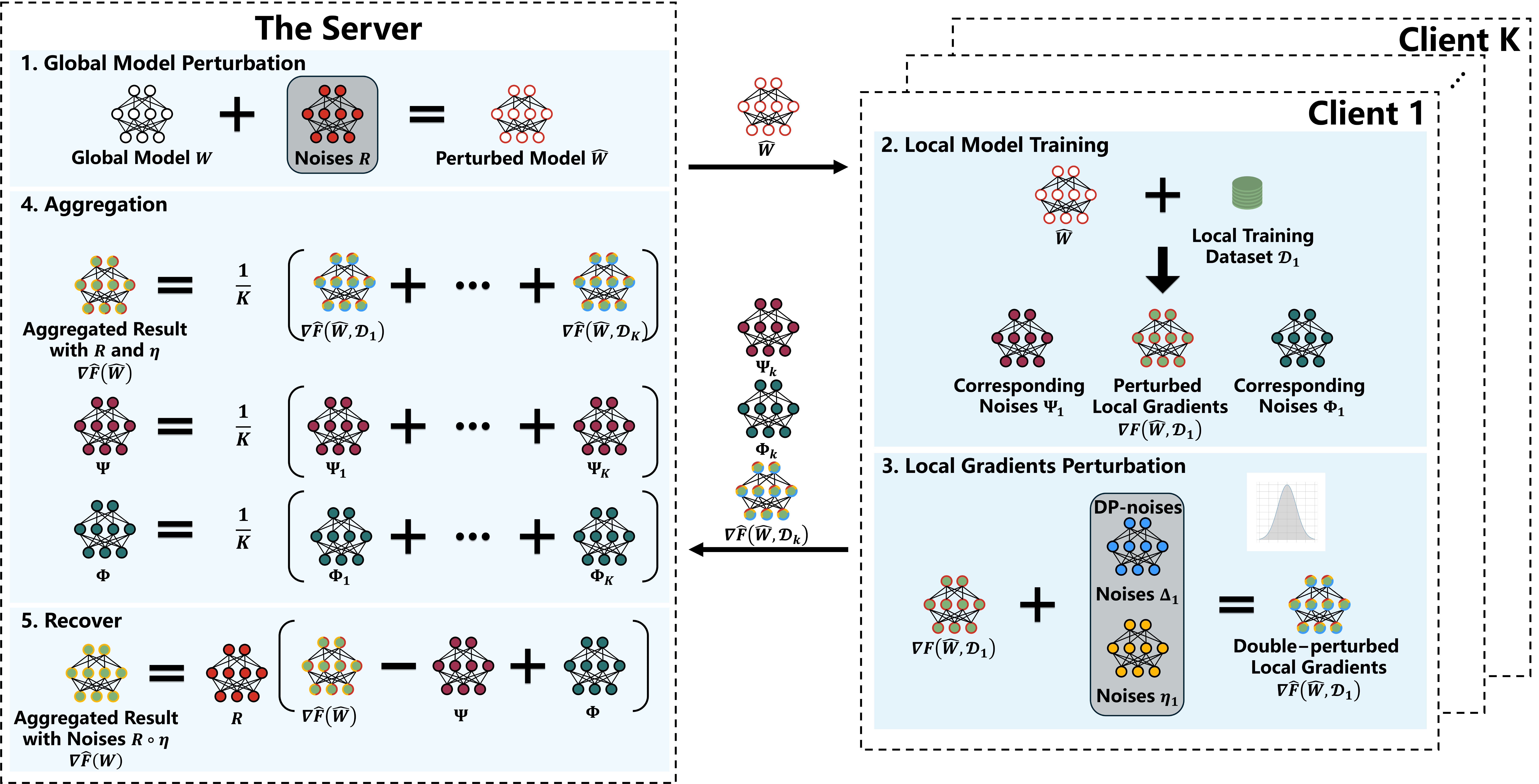}
\end{center}
  \caption{The architecture overview of the proposed MP-DP, where the grey shaded area is our focus.}
\label{pipeline}
\end{figure*}

\section{Our Proposed Scheme}\label{sec:MP_DP}
In this section, we describe the details of our scheme. Specifically, we first address the two problems summarized in Section \ref{subsubsec:prob} and then give the overall process of our scheme, as shown in Fig. \ref{pipeline}.

\subsection{The Solution for Problem \ref{Prob1}}\label{sub:Prob1}
As analyzed in Section \ref{subsubsec:prob}, since $R^{(l)}$ and $\bm\eta_{k}^{(l)}$ are random variables generated by the server and client, respectively, the result $R^{(l)} \circ \bm\eta_{k}^{(l)}$ in Eq. \eqref{eq:recovery} is the multiplication of two independent random variables, where any $i$-th row and $j$-th column of $R^{(l)} \circ \bm\eta_{k}^{(l)}$ is defined as
\begin{eqnarray}\label{eq:product0}
R^{(l)}_{ij} \cdot \bm\eta_{kij}^{(l)} &=&\left\{
\begin{aligned}
 & \bm\eta_{kij}^{(1)} \bm r^{(1)}_i,  \textnormal{ if } l=1 \\
& \frac{\bm\eta_{kij}^{(l)} \bm r^{(l)}_i}{\bm r^{(l - 1)}_j}, \textnormal{ if } 2\leq l \leq L-1 \\
 & \frac{\bm\eta_{kij}^{(L)}}{\bm r^{(L - 1)}_j},  \textnormal{ if } l=L
\end{aligned}\right.  
\end{eqnarray}
where $R^{(l)}_{ij}$ and $\bm{\eta}_{kij}^{(l)}$ are the elements in the $i$-th row and $j$-th column of $R^{(l)}$ and $\bm\eta_{k}^{(l)}$, respectively.

To guarantee $R^{(l)}\circ \bm\eta_{k}^{(l)} \sim \mathcal{N}(0, \sigma^2)$, intuitively, we should decompose the Gaussian variable into the product of several independent random variables. We present this decomposition in Theorem \ref{defactorization}, whose proof is given in Appendix \ref{appendix_theorem3}.

\begin{theorem}\label{defactorization}
Let $Z$ be the normal random variable $\mathcal{N}(0, \sigma^2)$, there exist $m$ independent identically distributed (i.i.d.) random variables $X_{1}, X_{2}, \dots,X_{m}$ and $n$ i.i.d. random variables $Y_{1}, Y_{2}, \dots,Y_{n}$ such that
\begin{equation}\label{eq:multiplic}
	Z\overset{D}{=} X_{1}X_{2}\cdots X_{m}Y_{1}Y_{2}\cdots Y_{n},
\end{equation}
where $\overset{D}{=}$ denotes the equality in distribution. More specifically, for any  $1 \leq i \leq m$ and $1 \leq j \leq n$, $X_{i}$ and $Y_{j}$ follow the distributions $\mathcal{DN}^{*}(m)$ and $\mathcal{DN}(\sigma, n)$, respectively, as defined below:
\begin{equation*}
    \left\{
        \begin{aligned}
            & \mathcal{DN}^{*}(m)\overset{D}{=}\beta \exp\Big\{-\sum_{\ell=1}^{\infty}\left[\frac{G_{1/m,\ell}}{2 \ell+1}-\frac{1}{2m} \ln \left(1+\frac{1}{\ell}\right)\right]\Big\},\\
            & \mathcal{DN}(\sigma, n)\overset{D}{=} \beta \exp\Big\{\frac{\ln(\sqrt{2}\sigma)}{n}-G_{1/n}\Big\}, \\
        \end{aligned}
    \right.
\end{equation*}
where $G_{1/n}, G_{1/m,1}, \dots$ are i.i.d. random variables, each with the gamma distribution $\operatorname{Gamma}(1/n,1)$ with shape parameter $1/n$ and scale parameter $1$ and $\beta$ is a Rademacher random variable so that $\Pr[\beta=1]=\Pr[\beta=-1]=1/2$.
\end{theorem}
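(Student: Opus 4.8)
The plan is to deduce the claimed equality in distribution from an identity of Mellin transforms, which reduces the factorization to a short computation once the (individually divergent) pieces hidden inside $\mathcal{DN}^{*}(m)$ are handled with care. \emph{First}, I would split off the signs: each $X_i$ and $Y_j$ carries its own independent Rademacher factor $\beta$, so $\operatorname{sign}(X_1\cdots X_m Y_1\cdots Y_n)$ is a product of $m+n$ i.i.d.\ Rademacher variables, hence again Rademacher and independent of the magnitude $|X_1\cdots X_m Y_1\cdots Y_n|$; similarly $Z=\operatorname{sign}(Z)\,|Z|$ with an independent Rademacher sign. Thus it suffices to prove $|Z|\overset{D}{=}\prod_{i=1}^{m}|X_i|\cdot\prod_{j=1}^{n}|Y_j|$. \emph{Second}, a direct Gaussian integral gives $\mathbb{E}[|Z|^{s}]=(\sqrt{2}\sigma)^{s}\,\Gamma\!\bigl(\tfrac{s+1}{2}\bigr)/\sqrt{\pi}$ for every real $s>-1$; since the moment generating function of $\ln|Z|$ is then finite on a neighborhood of the origin, the law of $|Z|$ is determined by this Mellin transform, so it is enough to match $\mathbb{E}\bigl[\prod_i|X_i|^{s}\prod_j|Y_j|^{s}\bigr]$ with it for $s$ near $0$.

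The $Y$-factor is immediate: by independence $\mathbb{E}\bigl[\prod_j|Y_j|^{s}\bigr]=\bigl(\mathbb{E}[|Y_1|^{s}]\bigr)^{n}$, and writing $\ln|Y_1|=\tfrac1n\ln(\sqrt{2}\sigma)-G_{1/n}$ with $G_{1/n}\sim\operatorname{Gamma}(1/n,1)$, the gamma transform $\mathbb{E}[e^{-sG_{1/n}}]=(1+s)^{-1/n}$ (for $s>-1$) gives $\mathbb{E}[|Y_1|^{s}]=(\sqrt{2}\sigma)^{s/n}(1+s)^{-1/n}$, hence $\mathbb{E}\bigl[\prod_j|Y_j|^{s}\bigr]=(\sqrt{2}\sigma)^{s}(1+s)^{-1}$. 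The $X$-factor is the core. Write $\ln|X_1|=-\sum_{\ell\ge1}T_{\ell}$ with independent $T_{\ell}=\frac{G_{1/m,\ell}}{2\ell+1}-\frac{1}{2m}\ln(1+\tfrac1\ell)$; the series converges almost surely by Kolmogorov's two-series theorem, since the centered terms have summable variances $\frac{1/m}{(2\ell+1)^2}$ and the recentering constants $\frac{1/m}{2\ell+1}-\frac{1}{2m}\ln(1+\tfrac1\ell)=O(\ell^{-3})$ are summable — exactly the cancellation built into the definition. One then checks $\mathbb{E}[|X_1|^{s}]=\prod_{\ell\ge1}\mathbb{E}[e^{-sT_{\ell}}]$ by an $L^{2}$-bound on the truncated products $e^{-s\sum_{\ell\le L}T_{\ell}}$ (their second moments are the partial products of a convergent infinite product, hence bounded, giving uniform integrability and the passage to the limit); since $\mathbb{E}[e^{-sT_{\ell}}]=(1+\tfrac1\ell)^{s/(2m)}\bigl(1+\tfrac{s}{2\ell+1}\bigr)^{-1/m}$, this yields $\bigl(\mathbb{E}[|X_1|^{s}]\bigr)^{m}=\prod_{\ell=1}^{\infty}(1+\tfrac1\ell)^{s/2}\bigl(1+\tfrac{s}{2\ell+1}\bigr)^{-1}$.

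It remains to evaluate this product in closed form: its $L$-th partial product equals $(L+1)^{s/2}\cdot\dfrac{\Gamma(L+3/2)\,\Gamma\!\bigl(\tfrac{3+s}{2}\bigr)}{\Gamma(3/2)\,\Gamma\!\bigl(L+\tfrac{3+s}{2}\bigr)}$, where $(L+1)^{s/2}$ telescopes out of $\prod_{\ell\le L}(1+\tfrac1\ell)^{s/2}$ and the $\Gamma$-quotient comes from $\prod_{\ell\le L}\frac{\ell+1/2}{\ell+(1+s)/2}$ via $\prod_{\ell\le L}(\ell+c)=\Gamma(L+1+c)/\Gamma(1+c)$; by Stirling $(L+1)^{s/2}\Gamma(L+3/2)/\Gamma\!\bigl(L+\tfrac{3+s}{2}\bigr)\to1$, so $\bigl(\mathbb{E}[|X_1|^{s}]\bigr)^{m}=\Gamma\!\bigl(\tfrac{3+s}{2}\bigr)/\Gamma(3/2)=(1+s)\,\Gamma\!\bigl(\tfrac{s+1}{2}\bigr)/\sqrt{\pi}$ — note this is independent of $m$, just as the $Y$-factor is independent of $n$. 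Multiplying, $\mathbb{E}\bigl[\prod_i|X_i|^{s}\prod_j|Y_j|^{s}\bigr]=(1+s)\,\tfrac{\Gamma((s+1)/2)}{\sqrt{\pi}}\cdot(\sqrt{2}\sigma)^{s}(1+s)^{-1}=(\sqrt{2}\sigma)^{s}\,\Gamma\!\bigl(\tfrac{s+1}{2}\bigr)/\sqrt{\pi}=\mathbb{E}[|Z|^{s}]$ for $s$ near $0$; by the uniqueness noted above this gives $|Z|\overset{D}{=}\prod_i|X_i|\prod_j|Y_j|$, and with the first paragraph the theorem follows.

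\emph{Main obstacle.} Everything delicate sits in the $X$-factor: the series $\sum_{\ell}T_{\ell}$ converges only by the exact cancellation of the two individually divergent series $\sum_\ell\frac{\mathbb{E}[G_{1/m,\ell}]}{2\ell+1}$ and $\sum_\ell\frac{\ln(1+1/\ell)}{2m}$, so the exponent cannot be split term by term, and interchanging the expectation with the infinite product must be justified by the uniform-integrability argument rather than a bare Fubini; the Stirling estimate that makes the limit independent of $m$ is the other point requiring attention. The Gaussian moment integral, the gamma moment transform, the telescoping, and the $\Gamma$-ratio asymptotics are all routine.
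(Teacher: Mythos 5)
Your proposal is correct and follows essentially the same route as the paper's proof: both identify the transform of $\ln|Z|$ with the Gamma function $(\sqrt{2}\sigma)^{s}\Gamma\bigl(\tfrac{1+s}{2}\bigr)/\sqrt{\pi}$, factor it via the Euler-type product (which your telescoping Gamma-ratio computation re-derives), and split the exponential pieces using the infinite divisibility $\operatorname{Gamma}(1,1)=\sum_{i=1}^{m}\operatorname{Gamma}(1/m,1)$, the only difference being that you work with the Mellin transform/MGF on real $s$ near $0$ while the paper uses the characteristic function of $\ln|Z|$. Your added justifications (a.s.\ convergence of the centered series, uniform integrability for interchanging expectation with the infinite product, and uniqueness from the MGF) tighten steps the paper leaves implicit, but the underlying argument is the same.
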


 Although Theorem \ref{defactorization} perfectly ensures that the product of multiple random variables follows $\mathcal{N}(0, \sigma^2)$, as given in Challenge \ref{challenge1}, directly applying it to Eq. \eqref{eq:product0} still cannot overcome $R^{(l)}\circ \bm\eta_{k}^{(l)} \sim \mathcal{N}(0, \sigma^2)$ in Problem \ref{Prob1}. 
 
\begin{challenge}\label{challenge1}
 The key is that the server-side random variable $\bm{r}_i^{(l)}$ ($1 \leq l \leq L - 1$) is used for the $l$-th layer, and its reciprocal $1/\bm{r}_i^{(l)}$ is also used for the $l+1$-th layer. For example, if $\bm{r}_i^{(l)}\sim \mathcal{DN^{*}}(2)$ and $\bm\eta_{kij}^{(l)} \sim \mathcal{DN}(\sigma, 1)$ for $2 \leq l \leq L - 1$, then $1/\bm{r}_i^{(l)}$ cannot conform $\mathcal{DN^{*}}(2)$ according to Theorem \ref{defactorization}. Thus, we cannot guarantee $R_{ij}^{(l)} \cdot \bm\eta_{kij}^{(l)}=\bm\eta_{kij}^{(l)} \bm r^{(l)}_i/\bm r^{(l - 1)}_j \sim \mathcal{N}(0, \sigma_{\eta}^2)$
\end{challenge}

Therefore, we solve the above challenge by introducing $L-1$ transitional layers. More specifically, for the original global model $\{W_{\mathbf{org}}^{(l)}\}_{l=1}^L$ with $L$ layers, we expand it into $2L-1$ layers, where the parameters of each layer are set as
\begin{equation}
   \left\{
\begin{aligned}
& W^{(2l-1)} \leftarrow W_{\mathbf{org}}^{(l)} , \textnormal{ if } 1\leq l \leq L \\
& W^{(2l)} \leftarrow \mathcal{I}^{(l)},  \textnormal{ if } 1\leq l \leq L-1
\end{aligned}  \right.\label{eq:revisedW}
\end{equation} 
where $\mathcal{I}^{(l)}$ is the $n_{l}\times n_{l}$ dimensional identity matrix and entirely different from the preceding layers. In addition to keeping random vectors $\{\{\bm r^{(l)} \in \mathbb{R}^{n_{l}}_{> 0 } \}_{l=1}^{L-1}, \bm \gamma, \bm r^{(a)}\}$ for the odd layers as stated in Section \ref{subsubsec:pert}, the server also generates random vectors $\{\bm{s}^{(l)} \in \mathbb{R}^{n_{l}}_{> 0 }\}^{L-1}_{l=1}$ for $L-1$ even layers. Based on Theorem \ref{defactorization}, these random vectors should satisfy the following constraints: for any $i \in[1, n_{l}]$, the $i$-th elements of vectors $\bm r^{(l)}$ and $\bm{s}^{(l)}$ are
\begin{equation}\label{constraint:r}
   \left\{
\begin{aligned}
&  \bm r_i^{(1)}, \bm{s}^{(L-1)}_{i} \sim \mathcal{DN}^{*}(1) \\
&  \bm r_i^{(l)} \sim\mathcal{DN}^{*}(2), \textnormal{ if } 1 < l \leq L-1 \\
&  \bm{s}^{(l)}_{i}\sim\mathcal{DN}^{*}(2),\textnormal{ if } 1 \leq l < L-1\\
%&  \bm{s}^{(L-1)}_{i} \sim \mathcal{DN}^{*}(1)\\
\end{aligned}\right.  
\end{equation}
According to the above constraints, each element of $R^{(l)}$ in Eq. \eqref{eq:para_cons1} is revised as:
\begin{equation}\label{eq:Rrevised1}
   \left\{
\begin{aligned}
&   R^{(1)}_{ij}=\bm r^{(1)}_i \textnormal{ and }  R^{(2L-1)}_{ij}=\bm{s}_{j}^{(L-1)}\\
&  R^{(2l-1)}_{ij}=\bm r_i^{(l)}\bm{s}^{(l-1)}_{j}, \textnormal{ if $1 < l \leq L-1$} \\
&  R^{(2l)}_{ij}=\left\{\begin{aligned}
        &\frac{1}{\bm{s}^{(l)}_{i}\bm r_i^{(l)}}, \textnormal{ if $i=j$}\\
        & 0, ~~~~~~~~\textnormal{ if $i\neq j$} \\
\end{aligned}\right.,\textnormal{if $1 \leq l \leq L-1$}\\
%&   R^{(2L-1)}_{ij}=\bm{s}_{j}^{(L-1)}\\
\end{aligned}\right. 
\end{equation}
From Eqs. \eqref{constraint:r} and \eqref{eq:Rrevised1}, we can see that the random variables $\bm r^{(l)}_{i}$ and $\bm{s}^{(l)}_{i}$ are used in the odd-indexed layers, which follow the distribution $\mathcal{DN}^{*}(m)$ in Theorem \ref{defactorization}. There corresponding reciprocals $1/\bm r^{(l)}_{i}$ and $1/\bm{s}^{(l)}_{i}$ are used in the even-indexed layers. Since the model parameters of even-indexed layers are independent of the true parameters, we do not need $1/\bm{s}^{(l)}_{i}$ and $1/\bm r_i^{(l)}$ to satisfy the distribution in Theorem \ref{defactorization}. Here, we call newly inserted layers as \textbf{transitional layers} as they only serve to address Challenge \ref{challenge1}. 

Similar to Eq. \eqref{eq:para_cons}, the perturbed global model $\widehat{W}=\{\widehat{W}^{(l)}\}^{2L-1}_{l=1}$ is generated as
          \begin{equation}\label{eq:para_consRevise}
                \widehat{W}^{(l)}=\left\{
                    \begin{aligned}
                        &  R^{(l)} \circ W^{(l)} ,  \textnormal{ if } 1\leq l\leq 2L-2,\\
                        & R^{(l)}  \circ W^{(l)} + R^{(a)},  \textnormal{ if } l =2L-1,
                    \end{aligned}\right.
            \end{equation}
where $R^{(a)}_{ij}= \bm \gamma_i \cdot \bm r^{(a)}_{i}$ is the same as in Eq. \eqref{eq:para_cons2}. 

Next, we present the following theorem to demonstrate that the perturbed local gradients computed from the expanded perturbed global model $\widehat{W}=\{\widehat{W}^{(l)}\}^{2L-1}_{l=1}$  can be used to update the original model $ \{ W_{\mathbf{org}}^{(l)}\}^{L}_{l=1}$. The proof of Theorem \ref{theorem:relation} is given in Appendix \ref{appendix_theorem4}.

\begin{theorem}\label{theorem:relation}
     The odd layers' perturbed local gradients $\{\nabla F(\widehat{W}^{(2l-1)}, D_{k})\}^{L}_{l=1}$ from the expanded perturbed global model $\{\widehat{W}^{(l)}\}^{2L-1}_{l=1}$ and true local gradients $\{\nabla F( W_{\mathbf{org}}^{(l)}, D_{k})\}^{L}_{l=1}$ computed from the original global model $ \{ W_{\mathbf{org}}^{(l)}\}^{L}_{l=1}$ satisfy
     \begin{flalign*}
          \nabla F( W_{\mathbf{org}}^{(l)}, D_{k})=&R^{(2l-1)} \circ \big(\nabla F(\widehat{W}^{(2l-1)}, D_{k})-\bm{r}^{\top} \bm \Psi_{k}^{(2l-1)}\\
          &+\upsilon \bm \Phi_{k}^{(2l-1)}\big) \textnormal{ for }1 \leq l \leq L,
     \end{flalign*}
     where $\bm{r}=\bm \gamma \circ \bm r^{(a)}$ and $\bm{r}^{\top}$ is its transpose, $\upsilon=\bm{r}^{\top}\bm{r}$, $\bm{\Psi}_{k}^{(2l-1)}$ and $\bm \Phi_{k}^{(2l-1)}$ are two noises which are directly computed by the client $k$ (see Appendix \ref{appendix_theorem4} for the details).
\end{theorem}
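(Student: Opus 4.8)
The plan is to deduce Theorem~\ref{theorem:relation} from the perturbation analysis underlying Theorem~\ref{the:gradient}, re-executed for the \emph{expanded} $(2L-1)$-layer network, together with a short argument that the inserted transitional layers are transparent to both the forward and the backward pass. As a first step I would establish a \textbf{forward-pass equivalence for the unperturbed expanded model}: a transitional layer produces $\mathrm{ReLU}(\mathcal I^{(l)}\bm a^{(2l-1)})=\mathrm{ReLU}(\bm a^{(2l-1)})=\bm a^{(2l-1)}$, because $\bm a^{(2l-1)}$ is already the output of a $\mathrm{ReLU}$ and hence coordinatewise non-negative; inductively, the odd-layer activations of the expanded network equal the layer activations of $\{W_{\mathbf{org}}^{(l)}\}_{l=1}^{L}$, so $F(W,D_k)=F(W_{\mathbf{org}},D_k)$ and, by the chain rule, the gradient with respect to $W^{(2l-1)}=W_{\mathbf{org}}^{(l)}$ in the expanded model equals $\nabla F(W_{\mathbf{org}}^{(l)},D_k)$.

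Next I would handle the \textbf{perturbed forward pass}, using the positive homogeneity $\mathrm{ReLU}(c\,x)=c\,\mathrm{ReLU}(x)$ for $c>0$ exactly as in \cite{0003FFSTXL22}; the new ingredient is the diagonal transitional perturbation. On its support one has $R^{(2l)}_{ij}=\mathcal I^{(l)}_{ij}\cdot(1/\bm s^{(l)}_i)\cdot(1/\bm r^{(l)}_j)$, so $R^{(2l)}$ carries a row-scaling $1/\bm s^{(l)}_i$ and a column-scaling $1/\bm r^{(l)}_j$; the column-scaling $1/\bm r^{(l)}_j$ cancels the row-scaling $\bm r^{(l)}_i$ introduced by $R^{(2l-1)}$, while the row-scaling $1/\bm s^{(l)}_i$ is cancelled by the column-scaling $\bm s^{(l)}_j$ of $R^{(2l+1)}$ (here I rely on the $\bm r^{(l)}_i,\bm s^{(l)}_i$ acting as positive scalars, the Rademacher sign of Theorem~\ref{defactorization} being carried by the client-side noise, consistent with $\bm r^{(l)},\bm s^{(l)}\in\mathbb R^{n_l}_{>0}$). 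Hence every multiplicative factor telescopes through a $\mathrm{ReLU}$, and the only surviving discrepancy between the perturbed and the true forward pass is the additive term $R^{(a)}=\bm\gamma\circ\bm r^{(a)}$ injected at layer $2L-1$.

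I would then \textbf{backpropagate through the perturbed expanded network and assemble the identity}. The gradient entering an odd layer from the transitional layer above is — after cancelling the same scalars as in the forward pass and using that the Jacobian of $\mathrm{ReLU}$ at a non-negative pre-activation is the identity — exactly the gradient backpropagated in the original network, up to the Hadamard factor $R^{(2l-1)}$ appearing in the statement; and the contamination by $\bm r=\bm\gamma\circ\bm r^{(a)}$ enters each odd layer precisely as the two correction terms $\bm r^{\top}\bm\Psi_k^{(2l-1)}$ and $\upsilon\bm\Phi_k^{(2l-1)}$ with $\upsilon=\bm r^{\top}\bm r$, which are assembled from pre-activations, post-activations and residuals that the client already computes during its own backward pass and are therefore known to client~$k$. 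Solving the resulting identity for $\nabla F(W_{\mathbf{org}}^{(l)},D_k)$ and retaining only the odd layers gives the claimed formula; the explicit forms of $\bm\Psi_k^{(2l-1)},\bm\Phi_k^{(2l-1)}$ are the ones recorded in Appendix~\ref{appendix_theorem4}.

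I expect the main obstacle to be the bookkeeping in the last step: confirming that, despite the doubled depth, the additive-noise contamination collapses at \emph{every} odd layer to exactly the two terms of Theorem~\ref{the:gradient} — i.e.\ that the transitional layers contribute no extra error terms and leave the coefficients $\bm r$ and $\upsilon$ unchanged — and carefully tracking the positivity and sign conventions so that $\mathrm{ReLU}$ homogeneity is legitimately invoked at each layer. Once the transitional layers are shown to be transparent to both the forward and the backward pass, what remains is essentially the telescoping computation of \cite{0003FFSTXL22} carried out with $L$ replaced by $2L-1$.
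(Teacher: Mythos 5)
Your proposal is correct and follows essentially the same route as the paper's own proof: the paper likewise first establishes the perturbed forward-pass relations $\hat{\bm y}^{(2l-1)}=\bm r^{(l)}\circ\bm y^{(2l-1)}$, $\hat{\bm y}^{(2l)}=\tfrac{1}{\bm s^{(l)}}\circ\bm y^{(2l)}$, $\hat{\bm y}^{(2L-1)}=\bm y^{(2L-1)}+\alpha\bm r$ by induction using ReLU positive homogeneity and the telescoping of the $\bm r^{(l)},\bm s^{(l)}$ factors through the transitional layers, then applies the chain rule to obtain the per-sample identity with the two correction terms $\bm r^{\top}\bm\sigma^{(2l-1)}$ and $\upsilon\bm\beta^{(2l-1)}$, shows the unperturbed expanded network is transparent (identity layers plus non-negative ReLU outputs) so that gradients at odd layers coincide with those of the original model, and finally averages over $D_k$ to define $\bm\Psi_k^{(2l-1)},\bm\Phi_k^{(2l-1)}$. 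The bookkeeping you flag as the main obstacle is exactly what the paper carries out, with no additional ideas needed.
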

\begin{remark}\label{remark-theorem4}
Theorem \ref{theorem:relation} indicates that the local gradients of the odd-indexed layers obtained from the expanded global model are identical to those from the original model (i.e., $\nabla F( W^{(2l-1)}, D_{k})=\nabla F( W_{\mathbf{org}}^{(l)}, D_{k})$ for $1 \leq l \leq L$). Therefore, only the odd-indexed layers' parameters corresponding to the expanded global model $\{W^{(l)}\}^{2L-1}_{l=1}$, like $\{\nabla F(\widehat{W}^{(2l-1)}, D_{k}), \bm \Psi_{k}^{(2l-1)}, \bm \Phi_{k}^{(2l-1)}\}^{L}_{l=1}$, are required to update the original global model $\{W_{\mathbf{org}}^{(l)}\}^{L}_{l=1}$. Thus, in the following sections, we omit the description of the even-indexed transition layers and focus on the implementation of the odd-indexed layers, including adding DP noise, model updates, and privacy analysis.
\end{remark}

\begin{algorithm}[htbp]
    \caption{$\mathbf{MP\_Server}$} \label{MP-Server}
    \begin{algorithmic}[1]
        \REQUIRE Global model $\{ W_{\mathbf{org}}^{(l)}\}^{L}_{l=1}$ 
        \ENSURE Perturbed global model $\widehat{W}=\{\widehat{W}^{(l)}\}^{2L-1}_{l=1}$ and secret one-time-used noises $R=(\{R^{(l)}\}^{2L-1}_{l=1}, R^{(a)})$ 
        \STATE{Expand the number of neural network layers from $L$ to $2L-1$, i.e., $\{ W_{\mathbf{org}}^{(l)}\}^{L}_{l=1} \rightarrow \{W^{(l)}\}^{2L-1}_{l=1} $, where the parameters of each layer are set as Eq. \eqref{eq:revisedW}.}
        \STATE {For $1\leq l < L$, draw random vectors $\bm{r}^{(l)}$ and $\bm{s}^{(l)}$ according to Eq. \eqref{constraint:r}. Besides, draw $\bm \gamma, \bm r^{(a)} \in \mathbb{R}^{n_{L}}$ following in Section \ref{subsubsec:pert}.}
        \STATE{Generate $(\{R^{(l)}\}^{2L-1}_{l=1}, R^{(a)})$ from Eqs. \eqref{eq:Rrevised1} and \eqref{eq:para_cons2}.}
        \STATE{Generate $\widehat{W}=\{\widehat{W}^{(l)}\}^{2L-1}_{l=1}$ based on Eq. \eqref{eq:para_consRevise}.}
          \RETURN{$\widehat{W}=\{\widehat{W}^{(l)}\}^{2L-1}_{l=1}$ and $R=(\{R^{(l)}\}^{2L-1}_{l=1}, R^{(a)})$ }
    \end{algorithmic}
    \label{alg:rndphase}
\end{algorithm}

Similarly, client-side random variables $\{\bm\eta_{k}^{(2l-1)}\}_{l=1}^{L}$ need to follow the constraint below: 
\begin{equation}\label{constraint:epsilon}
     \bm\eta_{kij}^{(2l-1)} \sim \mathcal{DN}(\sigma_{\eta}, 1), \textnormal{ for } 1\leq l \leq L,
\end{equation}
where $\sigma^2_{\eta}$ is the variance for the random variables $\{\bm\eta_{k}^{(2l-1)}\}_{l=1}^{L}$. Here we redefine the variance with a subscript $\eta$ to distinguish it from the variance of another variable to be introduced later. Combined with Eqs. \eqref{constraint:r}, \eqref{eq:Rrevised1} and \eqref{constraint:epsilon}, we derive Theorem \ref{distribution}, whose proof is given in Appendix \ref{appendix_theorem45}.
\begin{theorem}\label{distribution}
For 1 $\leq l \leq$ $L$, if $R^{(2l-1)}$ and $\bm\eta_{k}^{(2l-1)}$ satisfy Eqs. \eqref{constraint:r} and \eqref{constraint:epsilon}, respectively, then $R^{(2l-1)} \circ \bm\eta_{k}^{(2l-1)} \sim\mathcal{N}(0, \sigma_{\eta}^2)$.
\end{theorem}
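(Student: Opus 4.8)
The plan is to prove Theorem~\ref{distribution} by a direct case analysis on the layer index, reducing each case to an application of the decomposition in Theorem~\ref{defactorization}. The key observation is that the entries of $R^{(2l-1)}$ in Eq.~\eqref{eq:Rrevised1} are products of either one or two factors drawn from distributions of the form $\mathcal{DN}^{*}(\cdot)$, and the client noise $\bm\eta_{kij}^{(2l-1)}$ is a single factor of the form $\mathcal{DN}(\sigma_{\eta},1)$; so the product $R^{(2l-1)}_{ij}\cdot\bm\eta_{kij}^{(2l-1)}$ is always a product of independent factors whose distributional recipe matches, for some choice of $m$ and $n$, the right-hand side of Eq.~\eqref{eq:multiplic}. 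First I would fix $1\le l\le L$ and an index pair $(i,j)$, and write out $R^{(2l-1)}_{ij}\cdot\bm\eta_{kij}^{(2l-1)}$ explicitly using Eq.~\eqref{eq:Rrevised1}, splitting into the three sub-cases $l=1$, $1<l<L$, and $l=L$ (the last two coming from the first and last lines of Eq.~\eqref{eq:Rrevised1}).

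In the boundary cases $l=1$ and $l=L$, the product is $\bm r^{(1)}_i\cdot\bm\eta_{kij}^{(1)}$ and $\bm s^{(L-1)}_j\cdot\bm\eta_{kij}^{(2L-1)}$ respectively. By Eq.~\eqref{constraint:r}, $\bm r^{(1)}_i$ and $\bm s^{(L-1)}_j$ each follow $\mathcal{DN}^{*}(1)$, and by Eq.~\eqref{constraint:epsilon} the noise follows $\mathcal{DN}(\sigma_{\eta},1)$; so these are products of the shape $X_1\,Y_1$ with $m=n=1$, which is exactly the $m=n=1$ instance of Theorem~\ref{defactorization} and hence is $\mathcal{N}(0,\sigma_{\eta}^2)$. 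In the interior case $1<l<L$, the product is $\bm r^{(l)}_i\,\bm s^{(l-1)}_j\,\bm\eta_{kij}^{(2l-1)}$, where by Eq.~\eqref{constraint:r} both $\bm r^{(l)}_i$ and $\bm s^{(l-1)}_j$ follow $\mathcal{DN}^{*}(2)$ and are independent (one is generated for an odd layer index $l$, the other carried over from the construction of layer $l-1$, and all the $\bm r$'s and $\bm s$'s are drawn independently in Algorithm~\ref{MP-Server}), while the noise follows $\mathcal{DN}(\sigma_{\eta},1)$. This matches the shape $X_1X_2\,Y_1$ with $m=2$, $n=1$, which is precisely the $m=2,n=1$ instance of Theorem~\ref{defactorization}, hence $\mathcal{N}(0,\sigma_{\eta}^2)$.

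The remaining work is to justify that the relevant factors are genuinely independent and that nothing is being double-counted: $R^{(2l-1)}$ depends only on $\bm r^{(l)}$ and $\bm s^{(l-1)}$ (and for $l=1$ only on $\bm r^{(1)}$, for $l=L$ only on $\bm s^{(L-1)}$), all the vectors $\{\bm r^{(l)}\}$, $\{\bm s^{(l)}\}$ are sampled independently on the server, and $\bm\eta_{k}^{(2l-1)}$ is sampled independently on client~$k$; so within a single entry the factors are mutually independent, which is what Theorem~\ref{defactorization} requires. Finally, since the conclusion $R^{(2l-1)}_{ij}\cdot\bm\eta_{kij}^{(2l-1)}\sim\mathcal{N}(0,\sigma_{\eta}^2)$ holds for every $(i,j)$, the Hadamard-product statement $R^{(2l-1)}\circ\bm\eta_{k}^{(2l-1)}\sim\mathcal{N}(0,\sigma_{\eta}^2)$ follows entrywise. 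I expect the only real subtlety — not a deep obstacle, but the place where care is needed — to be the bookkeeping in Challenge~\ref{challenge1}: namely confirming that the reciprocal factors $1/\bm r^{(l)}_i$ and $1/\bm s^{(l)}_i$ live only in the even (transitional) layers, so they never appear in any $R^{(2l-1)}$ and therefore never force us to ask whether a reciprocal of a $\mathcal{DN}^{*}$ variable is again $\mathcal{DN}^{*}$ (it is not); the transitional-layer construction of Eqs.~\eqref{eq:revisedW}--\eqref{eq:Rrevised1} is exactly what makes the case analysis go through cleanly.
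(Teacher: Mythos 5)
Your proposal is correct and follows essentially the same argument as the paper's proof: a fixed-entry case analysis on $l=1$, $1<l<L$, and $l=L$, identifying $R^{(2l-1)}_{ij}\cdot\bm\eta_{kij}^{(2l-1)}$ with the $(m,n)=(1,1)$ or $(2,1)$ instances of Theorem~\ref{defactorization} via Eqs.~\eqref{constraint:r}, \eqref{eq:Rrevised1} and \eqref{constraint:epsilon}. Your added remarks on independence of the server- and client-side draws and on the reciprocals being confined to the even transitional layers are consistent with (and slightly more explicit than) the paper's treatment.
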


\emph{So far, the first main contribution of our scheme is to design a novel global model perturbation method ($\mathbf{MP\_Server}$) and constrain the client-side random variables skillfully to address Problem \ref{Prob1}.} The overall pipeline for $\mathbf{MP\_Server}$ is shown in Algorithm \ref{MP-Server}.

\subsection{The Solution for Problem \ref{Prob2}}\label{subsec:Prob2}
Based on Eq. \eqref{eq:aggre}, the aggregated result is computed as
\begin{equation*}
    \nabla \widehat{F}(W^{(2l-1)})=\nabla F(W^{(2l-1)})+\frac{1}{K}R^{(2l-1)} \circ \sum^{K}_{k=1} \bm{\eta}_{k}^{(2l-1)},
\end{equation*}
where $\nabla F(W^{(2l-1)})=\frac{1}{K}\sum^{K}_{k=1} \nabla F(W^{(2l-1)}, D_{k})$ for $1\leq l \leq L$. We can see that $\nabla \widehat{F}(W^{(2l-1)})$ has expected value $\nabla F(W^{(2l-1)})$ and variance $\sigma^{2}_{\eta}/K$. To address \textbf{Problem \ref{Prob2}}, we want the amount of noise to be of the same order of magnitude as that required to protect the true aggregated result using CDP, i.e., $\sigma^{2}_{\eta}=O(1/K)$ \cite{DworkR14, abs-2006-07218}. This setting will result in less noise being added to local gradients, thereby reducing the protective capacity of LDP.

To address this, we compensate the independent Gaussian noise with pairwise-correlated noises that can eventually be canceled. The high-level idea is to have each client $k$ mask its local gradient by adding two different types of noises. The first is a sum of pairwise-correlated noises over the set of neighbors such that all correlated noises can be canceled out in the aggregated result. The second type of noise is an independent term $\{\bm\eta_{k}^{(2l-1)}\}_{l=1}^{L}$, which dose not cancel out in the aggregated result. \emph{These pairwise-correlated noises can be sufficiently large to make the variance $\sigma_{\eta}^{2}$ of the independent (non-canceling) noises as small as in CDP to keep the same utility level as CDP}.

Let the clients communicate over a network represented by a connected undirected graph $G=(C, E)$, where $C$ is the set of clients, and if clients $k$ and $v$ can exchange messages (i.e., negotiate pairwise-correlated noises), then they are neighbors denoted as $\{k, v\}\in E$. For a given client $k$, we denote by $N(k)=\{v: \{k, v\}\in E\}$ the set of its neighbors. The client $k$ first communicates with each of its neighbors $v \in N(k)$ so as to generate the correlated noise matrix $\bm\Delta^{(2l-1)}_{k, v} = - \bm\Delta^{(2l-1)}_{v, k}$ for $1\leq l \leq L$. Then, the client $k$ selects the independent noises $\{\bm\eta_{k}^{(2l-1)}\}_{l=1}^{L}$ based on Eq. \eqref{constraint:epsilon}. After that, the client $k$ adds these two types of noises into the perturbed gradients as 
\begin{flalign}\label{eq:double-pertube}
 \nonumber \nabla \widehat{F}(\widehat{W}^{(2l-1)}, D_{k})= &  \nabla F(\widehat{W}^{(2l-1)}, D_{k})+\sum_{v\in N(k)}\bm\Delta^{(2l-1)}_{k, v}\\ 
  &+\bm{\eta}_{k}^{(2l-1)}, \textnormal{ for }1\leq l \leq L
\end{flalign}
Similar to Eq. \eqref{eq:recovery}, with the above $\nabla \widehat{F}(\widehat{W}^{(2l-1)}, D_{k})$, the final recovered outputs for the server are deduced as 
\begin{flalign}
\nonumber    \nabla \widehat{F}(W^{(2l-1)}, D_{k})=& \nabla F(W^{(2l-1)}, D_{k})+R^{(2l-1)}\circ \big(\bm{\eta}_{k}^{(2l-1)} \\
    &+\sum_{v\in N(k)}\bm\Delta^{(2l-1)}_{k, v}\big), \label{eq:doublenoise}
\end{flalign}
where $1 \leq l \leq L$. Similar to \cite{abs-2006-07218}, for any $1 \leq l \leq L$, $R^{(2l-1)}\circ \bm\Delta^{(2l-1)}_{k, v}$ should also satisfy the standard Gaussian mechanism $\mathcal{N}(0, \sigma_{\Delta}^2)$, where $\sigma^{2}_{\Delta}$ is the variance for the random variables $\{\bm\Delta^{(2l-1)}_{k, v}\}^{L}_{l=1}$. Thus, based on Theorem \ref{defactorization}, each element in $i$-th row and $j$-th column of $\bm\Delta^{(2l-1)}_{k, v}$, denoted as $\bm\Delta^{(2l-1)}_{k, v}[ij]$, should satisfy 
\begin{equation} \label{constraint:Delta}
 \bm\Delta^{(2l-1)}_{k, v}[ij] \sim \mathcal{DN}(\sigma_{\Delta}, 1), \textnormal{ for }  1 \leq l \leq L,
\end{equation}
Combined with Eqs. \eqref{constraint:r}, \eqref{eq:Rrevised1} and \eqref{constraint:Delta}, we can derive the following theorem, whose proof is shown in Appendix \ref{appendix_theorem45}.
\begin{theorem}\label{distribution2}
For any 1 $\leq l \leq$ $L$, $R^{(2l-1)}\circ \bm\Delta^{(2l-1)}_{k, v}\sim \mathcal{N}(0, \sigma_{\Delta}^2)$ when $R^{(2l-1)}$ and $\bm\Delta^{(2l-1)}_{k, v}$ satisfy Eqs. \eqref{constraint:r} and \eqref{constraint:Delta}, respectively.
\end{theorem}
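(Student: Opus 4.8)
The plan is to prove Theorem~\ref{distribution2} by the same route as Theorem~\ref{distribution}, since the two statements are structurally identical: $\bm\Delta^{(2l-1)}_{k,v}$ plays exactly the role of $\bm{\eta}_k^{(2l-1)}$, both being matrices whose entries are i.i.d.\ $\mathcal{DN}(\sigma_\Delta,1)$ (resp.\ $\mathcal{DN}(\sigma_\eta,1)$), drawn by the pair $\{k,v\}$ (resp.\ by client $k$) independently of the server's perturbation vectors $\bm r^{(\cdot)},\bm s^{(\cdot)}$. So it suffices to fix an odd index $2l-1$ with $1\le l\le L$, isolate a single entry $R^{(2l-1)}_{ij}\cdot\bm\Delta^{(2l-1)}_{k,v}[ij]$, and show it is distributed as $\mathcal{N}(0,\sigma_\Delta^2)$; this is precisely the displayed conclusion read coordinate-wise, exactly as the constraints \eqref{constraint:epsilon} and \eqref{constraint:Delta} are read. (The claim is about the marginal law of each entry: entries of $R^{(2l-1)}\circ\bm\Delta^{(2l-1)}_{k,v}$ lying in a common row or column of $R^{(2l-1)}$ share a server-side factor, so they are not mutually independent.) I would then split according to the three branches of Eq.~\eqref{eq:Rrevised1}.

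For $l=1$, $R^{(1)}_{ij}=\bm r^{(1)}_i$ with $\bm r^{(1)}_i\sim\mathcal{DN}^{*}(1)$ by \eqref{constraint:r}, so the entry is a product of two independent variables, one $\mathcal{DN}^{*}(1)$ and one $\mathcal{DN}(\sigma_\Delta,1)$, and Theorem~\ref{defactorization} with $m=n=1$ gives exactly $\mathcal{N}(0,\sigma_\Delta^2)$. The case $l=L$ is the same, with $R^{(2L-1)}_{ij}=\bm s^{(L-1)}_j\sim\mathcal{DN}^{*}(1)$ in place of $\bm r^{(1)}_i$. For $1<l\le L-1$, $R^{(2l-1)}_{ij}=\bm r^{(l)}_i\,\bm s^{(l-1)}_j$, so the entry is a product of three independent factors: $\bm r^{(l)}_i\sim\mathcal{DN}^{*}(2)$, $\bm s^{(l-1)}_j\sim\mathcal{DN}^{*}(2)$ --- here one checks $1\le l-1<L-1$, so \eqref{constraint:r} indeed assigns $\bm s^{(l-1)}_j$ the distribution $\mathcal{DN}^{*}(2)$ and not $\mathcal{DN}^{*}(1)$ --- and $\bm\Delta^{(2l-1)}_{k,v}[ij]\sim\mathcal{DN}(\sigma_\Delta,1)$. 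Applying Theorem~\ref{defactorization} with $m=2$, $n=1$ (taking $X_1,X_2\sim\mathcal{DN}^{*}(2)$, $Y_1\sim\mathcal{DN}(\sigma_\Delta,1)$) again yields $\mathcal{N}(0,\sigma_\Delta^2)$. Collecting the three cases gives $R^{(2l-1)}\circ\bm\Delta^{(2l-1)}_{k,v}\sim\mathcal{N}(0,\sigma_\Delta^2)$ for every $1\le l\le L$; Theorem~\ref{distribution} (for $\bm{\eta}_k^{(2l-1)}$) follows from the identical three-case check.

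I do not expect a genuine obstacle here: almost all of the content is carried by Theorem~\ref{defactorization}, which was built precisely so that products of $\mathcal{DN}^{*}$ and $\mathcal{DN}$ variables collapse to a Gaussian. The only things that need attention are the boundary bookkeeping --- matching the correct $\mathcal{DN}^{*}(m)$ to each server-side factor in the layers $l=1$, $l=L-1$, $l=L$ --- and, more importantly, the point underlying Challenge~\ref{challenge1}: the reciprocals $1/(\bm s^{(l)}_i\bm r^{(l)}_i)$ live only in the \emph{even}-indexed matrices $R^{(2l)}$, which never appear in Theorem~\ref{distribution2}, so each odd-layer factor is a genuine $\mathcal{DN}^{*}(m)$ variable and the coordinate-wise appeal to Theorem~\ref{defactorization} is legitimate for every odd layer. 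This is exactly what the transitional layers were inserted to guarantee.
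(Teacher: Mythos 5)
Your proposal is correct and follows essentially the same route as the paper's own proof: a coordinate-wise case split on $l=1$, $1<l\le L-1$, and $l=L$ according to Eq.~\eqref{eq:Rrevised1}, matching each server-side factor to $\mathcal{DN}^{*}(1)$ or $\mathcal{DN}^{*}(2)$ via Eq.~\eqref{constraint:r} and then invoking Theorem~\ref{defactorization} with the client-side factor $\mathcal{DN}(\sigma_\Delta,1)$. Your added remarks on the marginal (entry-wise) reading of the claim and on why the even-layer reciprocals never enter are consistent with, and slightly more explicit than, the paper's treatment.
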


As shown in Section \ref{subsubsec:recover}, with the double-perturbed local gradients $\nabla \widehat{F}(\widehat{W}^{(2l-1)}, D_{k})$, the server computes the aggregated result as 
\begin{flalign}\label{eq:aggregated_gradient}
 \nonumber   &\nabla \widehat{F}(\widehat{W}^{(2l-1)}) = \frac{1}{K}\sum^{K}_{k=1}\nabla \widehat{F}(\widehat{W}^{(2l-1)}, D_{k}) \\
  \nonumber  &= \frac{1}{K} \sum^{K}_{k=1}\left(\nabla F(\widehat{W}^{(2l-1)}, D_{k})+\sum_{v\in N(k)}\bm\Delta^{(2l-1)}_{k, v} + \bm{\eta}_{k}^{(2l-1)}\right)\\
   &\stackrel{(a)}{=} \frac{1}{K} \sum^{K}_{k=1}\nabla F(\widehat{W}^{(2l-1)}, D_{k})+\frac{1}{K}\sum^{K}_{k=1} \bm{\eta}_{k}^{(2l-1)}, 
\end{flalign}
where Eq. $(a)$ comes by $\sum^{K}_{k=1}\sum_{v\in N(k)}\bm\Delta^{(2l-1)}_{k, v}=0$ as $\bm\Delta^{(2l-1)}_{k, v}=-\bm\Delta^{(2l-1)}_{v, k}$ for $1 \leq l \leq L$. 

Similarly, two noisy items are also aggregated as
\begin{equation}\label{eq:aggregated_noises}
    \left\{
    \begin{aligned}
        & \bm \Psi^{(2l-1)}= \frac{1}{K} \sum^{K}_{k=1}\bm \Psi_{k}^{(2l-1)}; \\
    & \bm \Phi^{(2l-1)}=\frac{1}{K} \sum^{K}_{k=1} \bm \Phi_{k}^{(2l-1)}; 
    \end{aligned} \textnormal{ for } 1 \leq l \leq L.
    \right.
\end{equation}
Based on Eqs. \eqref{eq:aggregated_gradient} and \eqref{eq:aggregated_noises}, the aggregated result that only contains independent Gaussian noises, denoted as $\nabla \widehat{F}(W^{(2l-1)})$, can be deduced with the secret one-time-used noises $\{R^{2l-1}\}^{L}_{l=1}$:  for $1\leq l\leq L$,
  \begin{align}\label{noises_elimination}
 \nonumber  \nabla \widehat{F}(W^{(2l-1)})=&R^{(2l-1)} \circ \big(\nabla \widehat{F}(\widehat{W}^{(2l-1)}) - \bm{r}^{\top} \bm \Psi^{(2l-1)}  \\
 \nonumber & + \upsilon \bm \Phi^{(2l-1)}\big)\\
\stackrel{(b)}{=}& \nabla F(W^{(2l-1)})+R^{(2l-1)}\circ \bm{\eta}^{(2l-1)}, 
  \end{align}
where Eq. $(b)$ follows from the result in Eq. \eqref{eq:recovery}, $\nabla F(W^{(2l-1)})=\frac{1}{K} \sum^{K}_{k=1}\nabla F(W^{(2l-1)}, D_{k})$ and $\bm{\eta}^{(2l-1)}=\frac{1}{K}\sum^{K}_{k=1} \bm{\eta}_{k}^{(2l-1)}$.

From Eq. \eqref{noises_elimination}, we can see that the final aggregation result only contains the independent Gaussian noises $R^{(2l-1)}\circ \bm{\eta}^{(2l-1)}=R^{(2l-1)}\circ \frac{1}{K}\sum^{K}_{k=1} \bm{\eta}_{k}^{(2l-1)}$ with the variance $\sigma^{2}_{\eta}/K$. As discussed before, under the premise of ensuring the privacy of local gradients (i.e., LDP), the variance $\sigma_{\Delta}^{2}$ of pairwise-correlated noises should be sufficiently large to make the variance $\sigma_{\eta}^{2}$ of independent noises as small as in the CDP (i.e., $\sigma^{2}_{\eta}=O(1/K)$) to guarantee model accuracy. The detailed constraints on $\sigma^{2}_{\eta}$ and $\sigma_{\Delta}^{2}$ will be discussed based on privacy budget in Theorems \ref{thm:diffprivacy-complete} and \ref{thm:diffprivacy-random}.

\emph{Consequently, the second main contribution and innovation of our scheme is the elegantly designed mechanism for adding differential noises, which makes the aggregated result achieve the same accuracy as CDP while ensuring the privacy of local gradients}. The overall pipeline for generating double-perturbed local gradients (called $\mathbf{DDP\_Client}$) is shown in Algorithm \ref{DPclient}.

\begin{algorithm}[htbp]
    \caption{$\mathbf{DDP\_Client}$} \label{DPclient}
    \begin{algorithmic}[1]
        \REQUIRE Graph $G=(C, E)$, the variances $\sigma_{\eta}^2, \sigma_{\Delta}^{2}\in \mathbb{R}^+$, perturbed local gradients 
        $\{\nabla F(\widehat{W}^{(2l-1)}, D_{k})\}^{L}_{l=1}$.
        \ENSURE Double-perturbed gradients $\nabla \widehat{F}(\widehat{W}^{(2l-1)}, D_k)$ for $1 \leq l \leq L$. 
        \STATE {Draw random matrices $\{\bm{\eta}_{k}^{(2l-1)}\}^{L}_{l=1}$ based on Eq. \eqref{constraint:epsilon}}
        \STATE{Determine the set of the neighbors $N(k)$ from the graph $G=(C, E)$}
        \FORALL{$v \in N(k)$ and $k < v$}
         \STATE{Clients $k$ and $v$ draw $\{\bm\Delta^{(2l-1)}_{k, v}\}^{L}_{l=1}$ following Eq. \eqref{constraint:Delta} and set $\bm\Delta^{(2l-1)}_{v, k}=-\bm\Delta^{(2l-1)}_{k, v} $ for any $1\leq l\leq L$}
         \ENDFOR
        \STATE{Compute $\{\nabla \widehat{F}(\widehat{W}^{(2l-1)}, D_{k})\}^{L}_{l=1}$ based on Eq. \eqref{eq:double-pertube} } 
          \RETURN{$\{\nabla \widehat{F}(\widehat{W}^{(2l-1)}, D_{k})\}^{L}_{l=1}$}
    \end{algorithmic}\label{alg:rndphase}
\end{algorithm}

\begin{algorithm}[htbp]
    \caption{$\mathbf{MU\_Server}$} \label{MU-Server}
    \begin{algorithmic}[1]
        \REQUIRE Double perturbed local gradients and two noisy items $\{\{\nabla \widehat{F}(\widehat{W}^{(2l-1)}, D_{k}), \bm \Psi_{k}^{(2l-1)}, \bm \Phi_{k}^{(2l-1)}\}^{L}_{l=1}\}^{K}_{k=1}$, secret one-time-used noises $R=(\{R^{(l)}\}^{2L-1}_{l=1}, R^{(a)})$, current global model $\{ W_{\mathbf{org}}^{(l)}\}^{L}_{l=1}$ and learning rate $\xi$.
        \ENSURE Updated global model for the next iteration.
        \STATE{Aggregate $K$ double-perturbed local gradients and the corresponding noisy items based on Eqs. \eqref{eq:aggregated_gradient} and \eqref{eq:aggregated_noises}. }
        \STATE {Recover the aggregated result that only contains independent noises, i.e., $ \nonumber \nabla \widehat{F}(W^{(2l-1)})$, from Eq. \eqref{noises_elimination}.}
        \STATE{According to Theorem \ref{theorem:relation}, update the global model as
        \begin{equation*}
         W_{\mathbf{org}}^{(l)}\leftarrow  W_{\mathbf{org}}^{(l)}-\xi \nabla \widehat{F}(W^{(2l-1)}), \textnormal{ for } 1\leq l < L
      \end{equation*}
        } 
        \RETURN{Updated global model $\{ W_{\mathbf{org}}^{(l)}\}^{L}_{l=1}$}
    \end{algorithmic}
    \label{alg:rndphase}
\end{algorithm}

\begin{remark}
    In Algorithm \ref{DPclient}, each client $k$ should negotiate with its neighbors $N(k)$ to securely determine the correlated random variables $\{\bm\Delta^{(2l-1)}_{k, v}\}^{L}_{l=1}$ for any $v \in N(k)$. In a nutshell, for any $v \in N(k)$ and $k < v$, the client $k$ randomly selects $\{\bm\Delta^{(2l-1)}_{k, v}\}^{L}_{l=1}$ according to Eq. \eqref{constraint:Delta} and securely sends it to the client $v$ through the Diffie-Hellman key agreement (DH) \cite{DiffieH76} and Pseudo-random Generator (PRG) \cite{Yao82a}. The corresponding details can be referred to \cite{BonawitzIKMMPRS17}. However, the scheme in \cite{BonawitzIKMMPRS17} considers a complete graph $G=(C, E)$, i.e., for any two clients $k, v\in C$, $\{k, v\}\in E$ (all clients are neighbors). Then each client $k$ needs to interact with all $K-1$ remaining clients in $C$ to determine the correlated random variables, where $K$ is the number of clients in $C$. Obviously, this setting is not fully satisfactory from the practical perspective when the number of clients is large. On the contrary, our $\mathbf{DDP\_Client}$ algorithm considers a sparse network graph $G$ with random $n$ clients interconnected to accommodate the large-scale federated learning. The idea is to make each client $k$ select $n$ other clients uniformly at random among all clients in $C$, where $n\leq K-1$ and $n$ is the number of clients in the set $N(k)$. Then, the edge $\{u, v\}\in E$ is created if $u$ selected $v$ or $v$ selected $u$ (or both). This is known as a random $n$-out graph \cite{Bollobas11}.  Fig. \ref{fig_graph} gives an example of Algorithm \ref{DPclient} for the complete and random $n$-out graphs.
\end{remark}

\begin{figure*}[!t]
\centering
\includegraphics[width=1\textwidth]{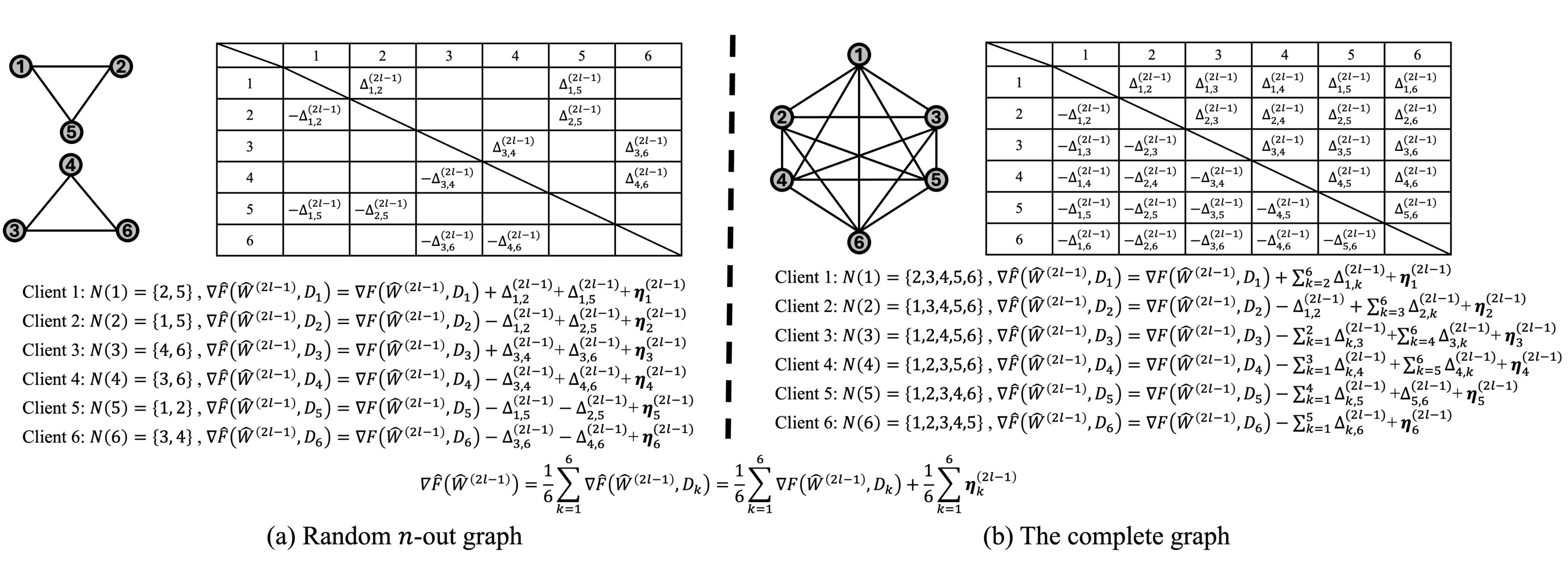}
  \caption{An example of Algorithm \ref{DPclient} for the complete and random $n$-out graphs, where $C=\{1,2,3,4,5,6\}$, $n=3$.}
\label{fig_graph}
\end{figure*}

\begin{algorithm*}[htbp]
    \caption{The overall framework of our scheme} \label{MP-DP}
    \begin{algorithmic}[1]
        \REQUIRE The variances $\sigma^2_{\eta}, \sigma^2_{\Delta}\in \mathbb{R}^+$, the number of rounds $T$, $N$ clients indexed by $\{1,2,\ldots, N\}$, the fraction $\rho$ of clients that perform local training on each round, the local training dataset $D_{k}$ of the client $k$, the pre-set learning rate $\xi$.
        \ENSURE The well-trained global model $W$. 
        \STATE{The server initializes the global model $W_{\mathbf{org\_1}}=\{W_{\mathbf{org\_1}}^{(l)}\}^{L}_{l=1}$}
        \FOR{each round $t=1,2,\ldots, T$}
        \STATE{Server executes: 
            \begin{itemize}
                \item $\left\{\widehat{W}_{t}=\left\{\widehat{W}^{(l)}_{t}\right\}^{2L-1}_{l=1}, R_{t}=\left(\left\{R_{t}^{(l)}\right\}^{2L-1}_{l=1}, R_{t}^{(a)}\right)\right\}\leftarrow\mathbf{MP\_Server}(W_{\mathbf{org\_t}})$
                \item Sample the set $S_{t}$ of $K$ clients from $\{1,\ldots, N\}$, where $K=\rho N$, and send $\widehat{W}_{t}=\left\{\widehat{W}^{(l)}_{t}\right\}^{2L-1}_{l=1}$ to clients in $S_{t}$
            \end{itemize}
        }
            \FOR{each client $k\in S_{t}$ \textbf{in parallel}}
                \STATE{$\left\{\nabla F(\widehat{W}_{t}^{(l)}, D_{k}), \bm{\Psi}_{k}^{(l)}, \bm \Phi_{k}^{(l)}\right\}^{2L-1}_{l=1}\leftarrow \verb"LMT"\left(\widehat{W}_{t}, D_{k}\right)$}
                \STATE{
                    $\left\{\nabla \widehat{F}(\widehat{W}_{t}^{(2l-1)}, D_{k})\right\}^{L}_{l=1}\leftarrow\mathbf{DDP\_Client}\left(G_{t}=(S_{t}, E_{t}), \left\{\nabla F(\widehat{W}_t^{(2l-1)}, D_{k})\right\}^{L}_{l=1}, \sigma^2\right)$}
                \STATE{Send $\left\{\nabla \widehat{F}(\widehat{W}_{t}^{(2l-1)}, D_{k}), \bm{\Psi}_{k}^{(2l-1)}, \bm \Phi_{k}^{(2l-1)}\right\}^{L}_{l=1}$ to the server}
            \ENDFOR
            \STATE { $W_{\mathbf{org\_t+1}}=\{W_{\mathbf{org\_t+1}}^{(l)}\}^{L}_{l=1}\leftarrow\mathbf{MU\_Server}\left(\left\{W_{\mathbf{org\_t}}^{(l)}\right\}^{L}_{l=1}, \left\{\left\{\nabla \widehat{F}(\widehat{W}_{t}^{(2l-1)}, D_{k}), \bm{\Psi}_{k}^{(2l-1)}, \bm \Phi_{k}^{(2l-1)}\right\}^{L}_{l=1}\right\}_{k\in S_{t}}, R_{t}\right)$}
        \ENDFOR
          \RETURN{$W=\left\{W_{\mathbf{org\_T+1}}^{(l)}\right\}^{L}_{l=1}$}
    \end{algorithmic}
    \label{alg:rndphase}
\end{algorithm*}

\subsection{The Overall of Our Scheme}
After solving the above two problems, we can generate the overall process of our scheme, which is illustrated in Fig. \ref{pipeline}. We describe the detailed operations in Algorithm \ref{MP-DP}, where the global model update is similar to Section \ref{subsubsec:recover}. Due to the page limitation, we omit the detailed introduction and directly show the overall pipeline in Algorithm \ref{MU-Server}.

\section{Privacy Analysis}\label{sec:privacy_analysis}
The design goal of our method is to prevent the server from obtaining any information about each client's private training data while preventing clients from obtaining any information related to the global model as well as other clients' training data. Therefore, we analyze privacy from two different perspectives: the privacy of the global model and the privacy of the local gradient.

\subsection{Privacy-preservation of The Global Model}\label{sec:privacy:client}
In this section, we demonstrate that the semi-trusted clients can obtain neither the true global model nor the local gradients. Since each client $k$ can obtain the perturbed global model parameters $\{\widehat{W}^{(l)}\}^{2L-1}_{l=1}$ from the server and compute the perturbed local gradients $\{\nabla F(\widehat{W}^{(l)}, D_{k})\}_{l=1}^{2L-1}$, we need to prove that the client cannot get the true global model $\{W^{(2l-1)}\}^{L}_{l=1}$ and true local gradients $\{\nabla F(W^{(2l-1)}, D_{k})\}_{l=1}^L$ from them. Note that in Theorem \ref{theorem:relation}, we only care about the privacy of model parameters and local gradients in odd-indexed layers.

The privacy of the global model is given in Theorem \ref{theo:para}, whose proof is shown in Appendix \ref{appendix_theorem6}.
\begin{theorem}\label{theo:para}
For any given perturbed global model $\widehat{W}=\{\widehat{W}^{(l)}\}^{2L-1}_{l=1}$, the possibility that clients obtain true global model $\{W^{(2l-1)}\}^{L}_{l=1}$ is equal to zero.
\end{theorem}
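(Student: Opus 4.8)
The plan is to show that, conditioned on everything a client can see, the true odd-layer parameters $\{W^{(2l-1)}\}_{l=1}^{L}$ stay hidden inside a continuum of equally consistent candidates, so that any (possibly randomized) reconstruction rule outputs the true model with probability zero over the server's draw of the one-time noises $R=(\{R^{(l)}\}_{l=1}^{2L-1},R^{(a)})$; the argument parallels the privacy proof of the base model-perturbation method (Theorem \ref{theo:privacy_MP}), adapted to the expanded $2L-1$-layer model. First I would fix the client's view: besides its own dataset $D_k$ (which is independent of $R$, hence useless for recovering $W$), a client receives only $\widehat W=\{\widehat W^{(l)}\}_{l=1}^{2L-1}$; the perturbed local gradients and the noisy terms $\bm\Psi_k,\bm\Phi_k$ it can compute are deterministic functions of $(\widehat W,D_k)$ and carry no further information about $W$, so it suffices to analyze $\widehat W$.

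Next I would isolate the output layer, which already suffices. By Eqs. \eqref{eq:para_consRevise}, \eqref{eq:Rrevised1} and \eqref{eq:para_cons2}, $\widehat W^{(2L-1)}_{ij}=\bm s^{(L-1)}_j\,W^{(2L-1)}_{ij}+\bm\gamma_i\bm r^{(a)}_i$, and the additive mask $\bm\gamma_i\bm r^{(a)}_i$ occurs nowhere else in $\widehat W$ (the transitional layers and the earlier odd layers involve only the scalars $\bm r^{(\cdot)},\bm s^{(\cdot)}$). Therefore, keeping every noise at its true value except $(\bm\gamma_i,\bm r^{(a)}_i)$, replacing the latter's product on row $i$ by an arbitrary real $\widetilde a_i$, and setting $\widetilde W^{(2L-1)}_{ij}:=W^{(2L-1)}_{ij}-(\widetilde a_i-\bm\gamma_i\bm r^{(a)}_i)/\bm s^{(L-1)}_j$ produces a configuration that reproduces exactly the same view $\widehat W$ but a different model; as $\widetilde a_i$ ranges over $\mathbb{R}$, the entry $\widetilde W^{(2L-1)}_{i1}$ ranges over all of $\mathbb{R}$. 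I would then turn this into a probability statement: $\bm\gamma_i\bm r^{(a)}_i$ is a product of independent continuous random variables, hence absolutely continuous and independent of $W$ and of every other noise, so conditioned on the rest of the view the law of $W^{(2L-1)}_{i1}=(\widehat W^{(2L-1)}_{i1}-\bm\gamma_i\bm r^{(a)}_i)/\bm s^{(L-1)}_1$ is non-atomic; hence for any guess $\widetilde W$ computed from the view, $\Pr[\widetilde W^{(2L-1)}_{i1}=W^{(2L-1)}_{i1}]=0$, and a fortiori the client recovers the whole model with probability zero.

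For completeness, the remaining odd layers $1\le 2l-1\le 2L-3$ admit a parallel ``gauge-freedom'' argument: by Eq. \eqref{eq:Rrevised1}, each $\widehat W^{(2l-1)}_{ij}$ equals $W^{(2l-1)}_{ij}$ times a product of positive noise scalars ($\bm r^{(1)}_i$ when $l=1$; $\bm r^{(l)}_i\bm s^{(l-1)}_j$ when $1<l\le L-1$), while the transitional layer $2l$ reveals only the products $\bm s^{(l)}_i\bm r^{(l)}_i$; one checks that the rescaling $\bm r^{(1)}_i\mapsto c_i\bm r^{(1)}_i$ together with the compensations $W^{(1)}_{ij}\mapsto W^{(1)}_{ij}/c_i$, $\bm s^{(1)}_i\mapsto\bm s^{(1)}_i/c_i$, $W^{(3)}_{ij}\mapsto c_j W^{(3)}_{ij}$ leaves $\widehat W$ invariant, so the continuum of consistent models persists, and since the priors $\mathcal{DN}^{*}(\cdot)$ on these scalars (Eq. \eqref{constraint:r}) are absolutely continuous the induced conditional law of the relevant multiplier stays non-atomic. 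The one degenerate case, a model all of whose entries vanish, carries nothing to protect and is set aside.

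The step I expect to be the main obstacle is making the ``probability zero'' claim rigorous --- passing from ``the view is consistent with a continuum of models'' to a precise probabilistic statement. One must decide the right source of randomness (a prior on $W$, or the server's draw of $R$ for a generic fixed $W$), carefully identify which noise coordinates stay free after conditioning on every observed entry, and check that this conditioning does not concentrate the relevant noise onto atoms. I would handle this with the explicit reparametrizations above: parametrize the fiber over $\widehat W$ by the free quantities (the additive masks $\bm\gamma_i\bm r^{(a)}_i$ for the output layer, the scalars $c_i$ for the others), show that the push-forward of the product prior under this parametrization has a density with respect to Lebesgue measure on the fiber, and observe that the true $W$ sits at a single point of the fiber --- the additive masks at their actual values, respectively $c_i\equiv 1$ --- which has Lebesgue, hence conditional, measure zero.
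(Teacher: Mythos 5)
Your proposal is correct and follows essentially the same route as the paper's proof: the paper likewise exhibits, for arbitrary choices of the multiplicative noises and the additive mask, an explicit alternative model reproducing the same perturbed parameters $\widehat{W}$, and concludes that the set of consistent models has positive measure while the true model is a single (measure-zero) point of that fiber. Your treatment of the output-layer additive mask and the rescaling gauge freedom is just a more detailed parametrization of the same measure-zero argument.
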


Similarly, we conclude the privacy of the local gradients, as shown in Theorem \ref{theo:gradient}. We omit the proof since it is similar to Theorem \ref{theo:para}.
\begin{theorem}\label{theo:gradient}
For any computed perturbed gradients $\{\nabla F(\widehat{W}^{(2l-1)}, D_{k})\}^{L}_{l=1}$, the possibility that clients obtain the true gradients $\{\nabla F(W^{(2l-1)}, D_{k})\}^{L}_{l=1}$ is equal to zero.
\end{theorem}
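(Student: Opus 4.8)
The plan is to mirror the argument behind Theorem~\ref{theo:para}: show that, conditioned on everything a semi-trusted client can actually compute, the true local gradients remain a non-degenerate (atom-free) random object, so the chance of pinning down their exact value is zero. Concretely, a client $k$ sees only the perturbed model $\{\widehat{W}^{(l)}\}_{l=1}^{2L-1}$ and hence can form the perturbed gradients $\{\nabla F(\widehat{W}^{(2l-1)},D_k)\}_{l=1}^{L}$ together with the self-computable noises $\{\bm\Psi_k^{(2l-1)},\bm\Phi_k^{(2l-1)}\}_{l=1}^{L}$. By Theorem~\ref{theorem:relation}, the true gradient of each odd layer is the explicit image
\[
\nabla F(W^{(2l-1)},D_k)=R^{(2l-1)}\circ\bigl(\nabla F(\widehat{W}^{(2l-1)},D_k)-\bm r^{\top}\bm\Psi_k^{(2l-1)}+\upsilon\bm\Phi_k^{(2l-1)}\bigr)
\]
of the server-side secrets $R=(\{R^{(l)}\}_{l=1}^{2L-1},R^{(a)})$, i.e. of the random vectors $\{\bm r^{(l)},\bm s^{(l)}\}_{l=1}^{L-1}$, $\bm\gamma$ and $\bm r^{(a)}$, none of which is revealed to the client. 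So it suffices to argue that the client's view leaves these secrets (hence the gradient) undetermined in a strong, measure-theoretic sense.

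First I would show that the client's view does not pin down the secrets. For each multiplicative layer $\widehat{W}^{(2l-1)}=R^{(2l-1)}\circ W^{(2l-1)}$ with $R^{(2l-1)}_{ij}=\bm r_i^{(l)}\bm s_j^{(l-1)}$, every positive rank-one matrix $M$ yields a consistent pair $(R^{(2l-1)},W^{(2l-1)})$ by entrywise division of $\widehat{W}^{(2l-1)}$ by $M$, and the set of such $M$ is an $(n_l+n_{l-1}-1)$-dimensional family of positive measure; for the last layer the additive term $R^{(a)}_{ij}=\bm\gamma_i\bm r_i^{(a)}$ absorbs any residual, so even fewer constraints bind. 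Combined with the fact that the priors $\mathcal{DN}^{*}$ and $\mathcal{DN}$ are continuous (absolutely continuous on their supports), the posterior over $R$, and over $\bm r=\bm\gamma\circ\bm r^{(a)}$ and $\upsilon=\bm r^{\top}\bm r$, given the client's view is supported on a positive-dimensional set and is absolutely continuous there.

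Next I would push this through the gradient map above: it is a smooth function of the secrets that is non-constant in the relevant directions — varying a single $\bm r_i^{(l)}$ rescales a whole row of the $l$-th layer's true gradient, and varying $\bm r$ shifts the $\bm r^{\top}\bm\Psi_k^{(2l-1)}$ and $\upsilon\bm\Phi_k^{(2l-1)}$ terms — so outside a measure-zero exceptional set its Jacobian has full rank. Hence the induced conditional law of $\{\nabla F(W^{(2l-1)},D_k)\}_{l=1}^{L}$ is atom-free, and the probability that the client's single guess equals the actual true gradient is $0$, which is the claim. The main obstacle I anticipate is exactly this non-degeneracy step: one must rule out the pathological data configurations $D_k$ for which the bracketed quantity $\nabla F(\widehat{W}^{(2l-1)},D_k)-\bm r^{\top}\bm\Psi_k^{(2l-1)}+\upsilon\bm\Phi_k^{(2l-1)}$ collapses (making the gradient insensitive to parts of $R$) and show these form a null event, so the conclusion holds almost surely; the remaining bookkeeping — tracking how the rank-one constraints on $R^{(2l-1)}$ interact across the inserted transitional layers — is routine once the single-layer case is set up exactly as in Theorem~\ref{theo:para}.
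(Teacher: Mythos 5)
Your proposal is correct and takes essentially the same route as the paper: the paper omits this proof, noting it is analogous to Theorem~\ref{theo:para}, whose argument is exactly your ambiguity construction --- for any observed perturbed quantities there is a positive-measure family of server-side noises (hence of consistent true values), constrained only by the products $\bm r_i^{(l)}\bm s_i^{(l)}$ revealed by the transitional layers, while the single true value is a zero-measure set. Your additional care about the Jacobian non-degeneracy of the map in Theorem~\ref{theorem:relation} goes slightly beyond what the paper spells out, but it is refinement of, not a departure from, the same argument.
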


Based on Theorems \ref{theo:para} and \ref{theo:gradient}, the privacy-preservation of the global model is shown in Theorem \ref{theo:client}.
\begin{theorem}\label{theo:client}
Given the perturbed global model $\{\widehat{W}^{(l)}\}^{2L-1}_{l=1}$, our scheme ensures that each semi-trusted client learns nothing about the true global model $\{W^{2l-1}\}^{L}_{l=1}$ (including true local gradients $\{\nabla F(W^{(2l-1)}, D_{k})\}^{L}_{l=1}$), except for the information deriving from its input $D_{k}$.
\end{theorem}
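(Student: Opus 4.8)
The plan is to reduce Theorem~\ref{theo:client} to the two already-established Theorems~\ref{theo:para} and~\ref{theo:gradient} by first delimiting exactly what a semi-trusted client can see. I would observe that the entire transcript available to client~$k$ is the pair $(\widehat{W}, D_{k})$: the perturbed global model $\{\widehat{W}^{(l)}\}_{l=1}^{2L-1}$ received from the server together with its own input $D_{k}$. Every other quantity the client manipulates --- the perturbed local gradients $\{\nabla F(\widehat{W}^{(l)}, D_{k})\}_{l=1}^{2L-1}$ and the auxiliary noisy matrices $\{\bm\Psi_{k}^{(l)},\bm\Phi_{k}^{(l)}\}$ --- is a deterministic function of this pair, and by the construction in Eq.~\eqref{eq:revisedW} and Remark~\ref{remark-theorem4} the even-indexed (transitional) layers carry no information about $W_{\mathbf{org}}$. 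Hence it suffices to bound what $(\widehat{W}, D_{k})$ reveals about $\{W^{(2l-1)}\}_{l=1}^{L}$ and $\{\nabla F(W^{(2l-1)}, D_{k})\}_{l=1}^{L}$.

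Next I would invoke Theorem~\ref{theo:para}: since the one-time noise matrices $R=(\{R^{(l)}\}_{l=1}^{2L-1}, R^{(a)})$ are generated from the continuous distributions of Section~\ref{sub:Prob1} with rich support (the $\mathcal{DN}^{*}(\cdot)$ vectors together with $\bm\gamma$ and $\bm r^{(a)}$), for the given $\widehat{W}$ every candidate value of $\{W^{(2l-1)}\}_{l=1}^{L}$ is consistent with some admissible choice of $R$, so the probability that the client pins down the true parameters is zero. An identical argument via Theorem~\ref{theo:gradient} handles the true local gradients, which the client can access only through the perturbed gradients and the secret factor $R^{(2l-1)}$ appearing in Eq.~\eqref{eq:recovery}. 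I would then combine the two statements: conditioned on $\widehat{W}$, the joint distribution of $\big(\{W^{(2l-1)}\}, \{\nabla F(W^{(2l-1)}, D_{k})\}\big)$ remains non-degenerate and, from the client's viewpoint, unchanged from its prior, so any residual knowledge the client has about $W$ is already implied by $D_{k}$ itself --- precisely the ``information deriving from its input $D_{k}$'' exception in the statement. Assembling these observations yields the claim.

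The main obstacle I expect is this last separation step: rigorously distinguishing information about $W$ that is genuinely leaked by $\widehat{W}$ from information the client could in principle derive from $D_{k}$ alone. Making this precise calls for a simulation-style formulation --- exhibiting, for each fixed $D_{k}$ and each candidate $\big(\{W^{(2l-1)}\},\{\nabla F(W^{(2l-1)}, D_{k})\}\big)$, an admissible noise tuple $R$ that reproduces the client's exact view --- and verifying that the structural constraints on $R$ in Eqs.~\eqref{constraint:r}--\eqref{eq:Rrevised1} (in particular the reciprocals $1/\bm r^{(l)}_{i}$, $1/\bm s^{(l)}_{i}$ shared between adjacent layers) do not shrink the set of reachable $\widehat{W}$. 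This is exactly what the transitional layers were designed to guarantee, so I would lean on that construction together with Theorems~\ref{theo:para} and~\ref{theo:gradient}, which already encapsulate the measure-theoretic content, leaving only the bookkeeping of merging the two statements and isolating the $D_{k}$-derived part.
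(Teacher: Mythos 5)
Your proposal matches the paper's own treatment: the paper derives Theorem~\ref{theo:client} directly by combining Theorems~\ref{theo:para} and~\ref{theo:gradient}, whose proofs are exactly the simulation-style, measure-theoretic argument you describe (for any candidate odd-layer parameters, exhibiting admissible one-time noises $R$ consistent with the observed $\widehat{W}$, so that the consistent set has positive measure while the true parameters form a null set). Your additional bookkeeping about the client's view being a function of $(\widehat{W}, D_{k})$ and the transitional layers carrying no information is consistent with Remark~\ref{remark-theorem4} and does not diverge from the paper's route.
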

%\begin{proof}
%\cite{ZhuLH19} and \cite{NasrSH19} demonstrate that only under the premise of owing the true parameters, the adversary may infer private training data by launching some state-of-the-art privacy attacks (e.g., membership inference or reconstruction attacks). Theorems \ref{theo:para} and \ref{theo:gradient} show that clients cannot obtain any true values from the perturbed values. Hence, our scheme protects the privacy of the server's global model.
%\end{proof}

\subsection{Privacy-preservation of Local Gradients}
In this part, we prove that our scheme can prevent the semi-trusted server from getting the privacy training data of each client. Specifically, with the private noises $\{R^{a}, \{R^{(l)}\}_{l=1}^{2L-1}\}$, the server can recover the local gradient as: for any $1 \leq l \leq L$, 
\begin{flalign*}
    \nabla \widehat{F}(W^{(2l-1)}, D_{k}) =&\nabla F(W^{(2l-1)}, D_{k})+ R^{(2l-1)}\circ\bm{\eta}_{k}^{(2l-1)}\\
     &+ R^{(2l-1)}\circ \sum_{v\in N(k)}\bm\Delta^{(2l-1)}_{k, v},
\end{flalign*}
where $R^{(2l-1)}\circ \bm\Delta^{(2l-1)}_{k, v}\sim \mathcal{N}(0, \sigma_{\Delta}^2)$ and $R^{(2l-1)}\circ \bm{\eta}_{k}^{(2l-1)}\sim \mathcal{N}(0, \sigma_{\eta}^2)$. This result follows the Gopa protocol \cite{abs-2006-07218}, which demonstrates that the privacy of the local gradients satisfies the following theorems.

\begin{theorem}[\cite{abs-2006-07218}]\label{thm:diffprivacy-complete}
Let $\delta \in (0,1)$ and $G=(C, E)$ be the complete graph, if $\varepsilon$, $\delta$, $\sigma_{\eta}$ and $\sigma_{\Delta}$ satisfy Eq. \eqref{eq:gopa_dp} with $\theta = \frac{1}{\sigma_{\eta}K} + \frac{1}{\sigma_{\Delta}K}$,
then our $\mathbf{DDP\_Client}$ algorithm satisfies $(\varepsilon,\delta)$-differential privacy.
\begin{equation}\label{eq:gopa_dp}
    \varepsilon \ge \theta/2 + \theta^{1/2} \textnormal{ and } (\varepsilon-\theta/2)^2 \ge 2\log(2/\delta\sqrt{2\pi})\theta
\end{equation}
\end{theorem}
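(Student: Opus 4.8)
The plan is to deduce this from the privacy analysis of the Gopa protocol \cite{abs-2006-07218}: once we observe that, after the server's recovery step, the (semi-honest) server's view of each honest client's contribution is \emph{distributionally} the Gopa transcript on the complete graph, the statement becomes a verbatim instantiation of Gopa's $(\varepsilon,\delta)$-bound with the stated parameter $\theta$.

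First I would dispose of the transitional layers: by Theorem \ref{theorem:relation} and Remark \ref{remark-theorem4} the even-indexed layers carry no information about $D_k$, so it suffices to reason about the odd-indexed outputs $\{\nabla\widehat F(\widehat W^{(2l-1)},D_k)\}_{l=1}^{L}$. Next I would pin down the sensitivity: under the norm-clipping convention $S_f=B=1$ adopted in Section \ref{subsubsec:prob}, replacing one record of $D_k$ changes $\nabla F(W^{(2l-1)},D_k)$ by at most $1$ in $\ell_2$-norm, hence the per-client sensitivity is $1$ and, after the $\frac{1}{K}$ averaging of Eq. \eqref{eq:aggregated_gradient}, the sensitivity of the aggregate the server forms is $\frac{1}{K}$. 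The key distributional step is then the appeal to Theorems \ref{distribution} and \ref{distribution2}: in the recovered expression of Eq. \eqref{eq:doublenoise}, $\nabla F(W^{(2l-1)},D_k)+R^{(2l-1)}\circ\bm\eta_k^{(2l-1)}+R^{(2l-1)}\circ\sum_{v\in N(k)}\bm\Delta_{k,v}^{(2l-1)}$, every entry of $R^{(2l-1)}\circ\bm\eta_k^{(2l-1)}$ is exactly $\mathcal{N}(0,\sigma_\eta^2)$ and every entry of $R^{(2l-1)}\circ\bm\Delta_{k,v}^{(2l-1)}$ is exactly $\mathcal{N}(0,\sigma_\Delta^2)$; since the server draws $R$ independently of all client randomness and the clients set $\bm\Delta_{k,v}^{(2l-1)}=-\bm\Delta_{v,k}^{(2l-1)}$, the \emph{joint} law of the whole family of post-recovery perturbations matches the joint law in Gopa with independent-noise variance $\sigma_\eta^2$, pairwise-noise variance $\sigma_\Delta^2$, and complete graph $G=(C,E)$. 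Consequently the collection of masked gradients the server receives (together with the aggregate $\nabla\widehat F(W^{(2l-1)})$ of Eq. \eqref{noises_elimination}) is a post-processing of the Gopa transcript.

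With this correspondence in hand, the remaining step is purely to transcribe Gopa's bound. On the complete graph the relevant effective noise-to-sensitivity ratio is the single parameter $\theta=\frac{1}{\sigma_\eta K}+\frac{1}{\sigma_\Delta K}$ — the first summand being the residue of the non-cancelling noise $\frac{1}{K}\sum_k\bm\eta_k^{(2l-1)}$ that survives aggregation, the second the contribution of the pairwise noise that the server cannot strip from the individual messages — and plugging $\theta$ into the analytic Gaussian-type tail computation of \cite{abs-2006-07218} gives precisely that $\varepsilon\ge\theta/2+\theta^{1/2}$ together with $(\varepsilon-\theta/2)^2\ge 2\log(2/\delta\sqrt{2\pi})\theta$ suffices for $(\varepsilon,\delta)$-differential privacy; since differential privacy is closed under post-processing, every quantity the server subsequently derives inherits the guarantee. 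If one wants the statement over all $L$ layers or all $T$ rounds rather than a single invocation, a standard composition over Eq. \eqref{eq:gopa_dp} is appended.

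The step I expect to be the real obstacle is the one hidden in the phrase ``distributionally the Gopa transcript'': making rigorous that replacing Gopa's directly-added Gaussian noise by our ``$\mathcal{DN}$-noise scaled by the server's $\mathcal{DN}^{*}$ factors'' does not weaken privacy even though the semi-honest server \emph{knows} the factors $R$ it drew. What makes the argument go through is that Theorems \ref{distribution}--\ref{distribution2} give genuine equality in distribution of the product (not merely of its moments) and that the server's factors are drawn independently of $\bm\eta_k,\bm\Delta_{k,v}$, so that the conditional law relevant to the DP comparison — given everything the server legitimately holds apart from the one-time noise it is meant to keep secret — is still exactly the Gaussian law Gopa requires; one must also check, as in \cite{0003FFSTXL22}, that the auxiliary matrices $\bm\Psi_k^{(2l-1)},\bm\Phi_k^{(2l-1)}$ the client transmits reveal nothing about $D_k$ beyond what is already protected. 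This bookkeeping about \emph{which} randomness is secret from the adversary, rather than any new inequality, is where the care is needed; the Gaussian tail estimate leading to Eq. \eqref{eq:gopa_dp} is then routine.
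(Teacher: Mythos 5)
Your proposal takes essentially the same route as the paper: the paper does not reprove this bound but, exactly as you do, uses Theorems \ref{distribution} and \ref{distribution2} to show that the server's recovered gradient $\nabla F(W^{(2l-1)},D_k)+R^{(2l-1)}\circ\bm{\eta}_k^{(2l-1)}+R^{(2l-1)}\circ\sum_{v\in N(k)}\bm\Delta^{(2l-1)}_{k,v}$ carries independent $\mathcal{N}(0,\sigma_\eta^2)$ and pairwise-cancelling $\mathcal{N}(0,\sigma_\Delta^2)$ Gaussian noise, and then imports the $(\varepsilon,\delta)$ guarantee verbatim from the Gopa protocol \cite{abs-2006-07218}. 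Your additional bookkeeping (sensitivity after clipping, independence of the server-drawn $R$ from the client noise, post-processing closure) is consistent with, and somewhat more explicit than, the paper's brief reduction.
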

\begin{theorem}[\cite{abs-2006-07218}]\label{thm:diffprivacy-random}
    Let $\delta \in (0,1)$ and $G=(C, E)$ be the random $n$-out graph where each client randomly chooses $n< K$ neighbors. Let $K\geq 81$ and $n$ meets that $n\geq 4\log(2K/3\delta)$, $n\geq 6\log(K/3)$ and $n\geq \frac{3}{2}+\frac{9}{4}\log(2e/\delta)$. If $\varepsilon$, $\delta$, $\sigma_{\eta}$ and $\sigma_{\Delta}$ satisfy Eq. \eqref{eq:gopa_dp} in Theorem \ref{thm:diffprivacy-complete} with
    \begin{equation*}
        \theta = K^{-1}\sigma_{\eta}^{-2}+\left(\frac{1}{\lfloor(n-1)/3\rfloor-1}+\frac{12+6\log K}{K}\right)\sigma_{\Delta}^{-2},
    \end{equation*}
    then our $\mathbf{DP\_Client}$ satisfies $(\varepsilon,3\delta)$-differential privacy.
\end{theorem}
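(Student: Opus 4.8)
The plan is to obtain the statement as a direct instantiation of the Gopa protocol's privacy analysis for random $n$-out graphs \cite{abs-2006-07218}, using Theorems \ref{distribution} and \ref{distribution2} to certify that the noise the server ultimately sees on each true local gradient is precisely the Gopa noise model. Throughout, the released object is the whole clipped gradient vector $\{\nabla\widehat{F}(\widehat{W}^{(2l-1)},D_k)\}_{l=1}^{L}$ with sensitivity $S_f=1$, so the per-layer indexing in Eq. \eqref{eq:doublenoise} is merely a block decomposition and a single invocation of Gopa suffices. First I would rewrite the recovered quantity of Eq. \eqref{eq:doublenoise}, $\nabla\widehat{F}(W^{(2l-1)},D_k)=\nabla F(W^{(2l-1)},D_k)+R^{(2l-1)}\circ\bm\eta_k^{(2l-1)}+R^{(2l-1)}\circ\sum_{v\in N(k)}\bm\Delta_{k,v}^{(2l-1)}$, as the transcript of the Gopa averaging mechanism on the graph $G=(C,E)$: by Theorem \ref{distribution} the independent term $R^{(2l-1)}\circ\bm\eta_k^{(2l-1)}$ is $\mathcal{N}(0,\sigma_\eta^2)$ coordinatewise, by Theorem \ref{distribution2} each pairwise term $R^{(2l-1)}\circ\bm\Delta_{k,v}^{(2l-1)}$ is $\mathcal{N}(0,\sigma_\Delta^2)$, and the sign-coupling $\bm\Delta_{k,v}^{(2l-1)}=-\bm\Delta_{v,k}^{(2l-1)}$ survives the Hadamard product entrywise, so the pairwise terms cancel in the aggregate exactly as in Eq. \eqref{eq:aggregated_gradient}. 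Since the one-time matrices $R^{(2l-1)}$ are data-independent and drawn honestly by the semi-trusted server, one may reason about the marginal law over $R$; I would then verify that, for fixed data, the joint law of $\{\nabla\widehat{F}(W^{(2l-1)},D_k)\}_{k}$ under this marginal coincides with that of a Gopa transcript, so that any $(\varepsilon,\delta')$-DP statement proved for Gopa carries over.

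Second, I would quote the random-graph branch of Gopa's analysis. Its backbone is a purely combinatorial lemma: for a random $n$-out graph on $K\ge 81$ vertices, conditioned on an arbitrary corrupted set, with probability at least $1-2\delta$ over the graph the honest-induced subgraph is sufficiently richly connected that, for every honest client $k$, the part of the correlated-noise mass the adversary cannot subtract off has per-coordinate variance at least $(\lfloor(n-1)/3\rfloor-1)^{-1}\sigma_\Delta^{2}$, up to the additive $\tfrac{12+6\log K}{K}\sigma_\Delta^{-2}$ slack appearing in that lemma. The three hypotheses $n\ge 4\log(2K/3\delta)$, $n\ge 6\log(K/3)$ and $n\ge\tfrac32+\tfrac94\log(2e/\delta)$ are exactly the thresholds that force the union bounds over badly connected vertex subsets in that lemma to sum to at most $2\delta$. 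On the good-graph event the effective noise protecting client $k$ is Gaussian with variance at least $\theta^{-1}$ for $\theta=K^{-1}\sigma_\eta^{-2}+\big(\tfrac{1}{\lfloor(n-1)/3\rfloor-1}+\tfrac{12+6\log K}{K}\big)\sigma_\Delta^{-2}$, so the Gaussian-mechanism bound --- the very relation in Eq. \eqref{eq:gopa_dp} already used in Theorem \ref{thm:diffprivacy-complete} --- yields $(\varepsilon,\delta)$-DP there, and a union bound with the $\le 2\delta$ graph-failure event gives $(\varepsilon,3\delta)$-DP.

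The step I expect to be the main obstacle is the faithfulness of the reduction in the first paragraph. The coordinatewise marginals are immediate from Theorems \ref{distribution} and \ref{distribution2}, but Gopa's proof manipulates the full joint law of its noise vectors, so one must rule out that the shared factor $R^{(2l-1)}$ --- which multiplies $\bm\eta_k^{(2l-1)}$ for every client $k$ and every $\bm\Delta_{k,v}^{(2l-1)}$ within the layer --- injects dependence beyond the mutually independent Gaussians and sign-coupled pairs that Gopa assumes. Concretely one must show that, after marginalizing over $R^{(2l-1)}$, these products are jointly distributed as that family, or at least that whatever residual dependence survives is dominated, in the sense needed for the DP inequality, by Gopa's worst case; once this is settled, the connectivity lemma and the Gaussian-mechanism computation are a verbatim transcription from \cite{abs-2006-07218} with our $K$, $n$, $\sigma_\eta$, and $\sigma_\Delta$ substituted in.
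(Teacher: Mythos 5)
Your proposal follows essentially the same route as the paper, which offers no independent proof of this statement: it derives the recovered form $\nabla\widehat{F}(W^{(2l-1)},D_k)=\nabla F(W^{(2l-1)},D_k)+R^{(2l-1)}\circ\bm{\eta}_k^{(2l-1)}+R^{(2l-1)}\circ\sum_{v\in N(k)}\bm\Delta^{(2l-1)}_{k,v}$, invokes Theorems \ref{distribution} and \ref{distribution2} to identify the noise with the Gopa model, and then imports the random-$n$-out-graph theorem of \cite{abs-2006-07218} verbatim with the stated $\theta$, $K$, and $n$ conditions. The joint-law obstacle you flag (the shared factor $R^{(2l-1)}$ coupling all noise terms, and the fact that the server actually knows $R$) is a real subtlety, but it is not addressed in the paper either, so your reduction is, if anything, more carefully qualified than the paper's own citation-level argument.
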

From Theorems \ref{thm:diffprivacy-complete} and \ref{thm:diffprivacy-random}, we can see that both $\sigma_\eta$ and $\sigma_\Delta$ contribute to the privacy parameters  $(\varepsilon,\delta)$, while only $\sigma_\eta$ affects the aggregated result. Therefore, our scheme ensures LDP for the local gradients while simultaneously providing better utility (see Section \ref{sec:performance} for details).

\section{Performance evaluation}\label{sec:performance}
We evaluate our scheme's performance from three perspectives: 1) \emph{computational efficiency}, 2) \emph{model accuracy}, and 3) \emph{defense ability against privacy attacks}.  

\subsection{Experimental Setup} 
All experiments are conducted on a 20-core Intel Xeon Gold 6148 CPU and a single RTX 3090 GPU. We use the SGD optimizer with a learning rate of 0.1, training models for 100 epochs. The batch size for each client is set to 256. Note that similar to \cite{TruexBASLZZ19, XuBZAL19}, we simulate FL setup instead of conducting real distributed experiments, thereby excluding network latency from the comparisons. To represent different deployment settings, we test with two numbers of clients, $K=5$ and $K=100$, corresponding to small and large scenarios, respectively. We set $n=5$ to indicate that random $5$ of $K$ clients interconnected to negotiate pairwise-correlated noises $\Delta^{(2l-1)}_{k, v}$.

\subsubsection{Baseline Methods} To show the performance advantage, we compare our scheme with SOTAs, i.e., FedAvg \cite{McMahanMRHA17}, PILE \cite{TangSLZXQ23}, TP-SMC \cite{TruexBASLZZ19} and HybridAlpha \cite{XuBZAL19}. FedAvg \cite{McMahanMRHA17} is one of the widely used baselines in FL-related works, which does not consider privacy. PILE \cite{TangSLZXQ23}, TP-SMC \cite{TruexBASLZZ19} and HybridAlpha \cite{XuBZAL19} are SOTAs that consider the privacy of the global model and local gradients. Both \cite{TruexBASLZZ19} and \cite{XuBZAL19} adopt DP to protect the privacy of local gradients and exploit SMC to encrypt DP-perturbed gradients so that only the server can decrypt the aggregated result. However, the server in both schemes directly sends the plaintext global model to each client during the middle iterations. PILE \cite{TangSLZXQ23} is the first to fully implement bidirectional privacy protection by combining model perturbation \cite{0003FFSTXL22} with threshold HE \cite{DamgardJ01}.

%\subsubsection{Baseline methods} To show the performance advantage, we compare our MP-DP with the state-of-the-art baselines, i.e., \textbf{FedAvg} \cite{McMahanMRHA17} and two hybrid privacy-preserving FL schemes \textbf{TP-SMC} \cite{TruexBASLZZ19} and \textbf{HybridAlpha} \cite{XuBZAL19}. The FedAvg \cite{McMahanMRHA17} is one of the widely used baselines in current FL-related works, which does not consider privacy preservation. \textbf{TP-SMC} \cite{TruexBASLZZ19} and \textbf{HybridAlpha} \cite{XuBZAL19} are two state-of-the-art privacy-preserving federated learning schemes that simultaneously consider the privacy of the global model and local gradients. Both schemes \cite{TruexBASLZZ19} and \cite{XuBZAL19} adopt the DP to protect the privacy of local gradients and exploit the SMC (either the threshold HE \cite{DamgardJ01} or functional encryption \cite{BonehSW11}) to encrypt DP-perturbed gradients so that only the server can decrypt the aggregated result. Although only the server can perform the decryption operation, the server in both schemes directly sends the plaintext global model to each client during the middle iterations. 

\subsubsection{Datasets and Models}
We use two widely recognized datasets in classification tasks: CIFAR10 and CIFAR100. CIFAR10 contains 60,000 32x32 color images across 10 classes (50,000 for training and 10,000 for testing). CIFAR100 is a more challenging dataset, containing the same number of images as CIFAR10, but divided into 100 classes. Regarding the model architecture, we adopt ResNet20 with 20 layers and ResNet56 with 56 layers for the CIFAR10 and CIFAR100, respectively. Each model is implemented in PyTorch, chosen for its balance of computational feasibility and high classification accuracy on its corresponding dataset.

%\subsubsection{Benchmark datasets and models} In the comparison, we use the CIFAR-10 dataset, which is a widely recognized benchmark in image classification tasks. The CIFAR-10 dataset contains 60,000 32x32 color images divided into 10 different classes, with 6,000 images per class. This dataset is split into 50,000 training images and 10,000 testing images. Regarding the model architecture, we adopted ResNet-20, a variant of the ResNet (Residual Network) \cite{HeZRS16} family, which is well known for its ability to alleviate the vanishing gradient problem in deep networks. ResNet-20 consists of 20 layers, including convolutional and fully connected layers, arranged in a series of residual blocks, with gradient flow enabled by skip connections for efficient training. The model, implemented in PyTorch, contains about 0.27 million parameters and is suitable for the CIFAR-10 dataset in terms of computational feasibility and achieving high classification accuracy.

\subsubsection{Privacy Attack Setup} 
We compare the defense capabilities of our scheme with SOTAs against membership inference and reconstruction attacks.
\begin{itemize}
\item \textbf{Membership Inference Attacks}
% {\color{red} aims to determine whether a certain data record belongs to the training set using a surrogate attack model\cite{NasrSH19}.}
aims to tell if a certain data record is part of a training set with a surrogate attack model \cite{NasrSH19}. 
The attack model takes the final prediction or intermediate output of target models as input and outputs two classes ``\textit{Member}'' or ``\textit{Non-member}''. We implemented it based on the method proposed in \cite{BalutaSHTS22}. We assume the attacker accesses a fraction of ($50\%$) the training set and some non-member samples. To balance the training, we select half of each batch to include member instances and the other half non-member instances from the attacker's background knowledge, which prevents the attack model from being biased toward member or non-member instances. 

\item \textbf{Reconstruction Attacks} strive to reconstruct original training data utilizing the local gradients uploaded by each client \cite{ZhuLH19}. We implemented it based on the method proposed in \cite{FangCWWX23}. We evenly split the training set based on labels and use one half as the private set and the remainder as the public set. We evaluate the defense ability under the most strict setting, i.e., the attacker cannot own any auxiliary knowledge about private images, where he/she will recover the image from scratch. 
\end{itemize}

\subsection{Computational Efficiency}
We compare the computational costs of our scheme with SOTAs under different numbers of clients, and the corresponding results are given in Tab. \ref{tab:Computational}. 
% {\color{red}From the table, it is evident that across setups with different numbers of clients,}
From the table, we can see that for different numbers of clients, 
the computational efficiency of our scheme is almost the same as the privacy-ignoring FedAvg \cite{McMahanMRHA17} while outperforming PILE \cite{TangSLZXQ23}, TP-SMC \cite{TruexBASLZZ19} and HybridAlpha \cite{XuBZAL19}. 

The main reason for this advantage lies in the different methods used to protect the privacy of the global model. In our scheme, we introduce a novel model perturbation technique as an alternative to the time-consuming cryptographic methods (e.g., threshold HE in TP-SMC \cite{TruexBASLZZ19} and functional encryption in HybridAlpha \cite{XuBZAL19}). Our scheme requires only a few element-wise matrix multiplication and addition operations, while local model training follows the FedAvg framework. Thus, the additional computational costs of these element-wise operations over the real number field are almost negligible. To explain the complexity of our scheme, especially in terms of setup and maintenance of noise mechanisms, we add detailed comparisons in appendix \ref{appendix_Computational_efficiency_details}.

\begin{table*}[h]
    \centering
    \caption{Computational efficiency comparison of SOTAs. For each dataset, we report server, client, and per-epoch computation times, along with the total time to reach target accuracy (ACC: 65\% for CIFAR10 and 40\% for CIFAR100).}
    \begin{tabular}{cccccccccc}
        \toprule
        \multirow{3}{*}{\textbf{Dataset}}&\multirow{3}{*}{\textbf{Method}} & \multicolumn{4}{c}{$K=n=5$}& \multicolumn{4}{c}{$K=100$ and $n=5$}\\
        \cmidrule(lr){3-6}\cmidrule(lr){7-10}
        &&	Server&Client& One Epoch&	ACC  &Server &Client&One Epoch  & ACC   \\
       % \midrule   
    %    \multirow{5}{*}{\textbf{FMNIST}}&
    %    \textbf{FedAVG \cite{McMahanMRHA17}} 
     %     & 0.01s & 0.97s & 0.99s & 22.38s & 0.02s & 0.99s & 1.02s & 17.38s \\
	 % 	&\textbf{PILE \cite{TangSLZXQ23}} 
	 % 	& 7.89m & 1.63m & 9.54m & 2.11h & 34.81m & 1.63m & 36.15m & 10.03h \\
	 % 	&\textbf{TP-SMC \cite{TruexBASLZZ19}} 
	 % 	& 0.66s & 2.97s & 3.63s & 85.62s & 0.87s & 2.74s & 3.61s & 51.81s \\
	 % 	&\textbf{HybridAlpha \cite{XuBZAL19}} 
	 % 	& 1.68m & 46.68s & 2.46m & 7.95m & 5.21m & 39.12s & 5.86m & 1.70h \\
	 % 	&\textbf{Our scheme} 
	 % 	& 0.02s & 1.58s & 1.60s & 31.99s & 0.03s & 1.62s & 1.65s & 19.43s \\
        
        \midrule
        \multirow{5}{*}{\textbf{CIFAR10}}&
        \textbf{FedAVG \cite{McMahanMRHA17}} 
        & 0.05s & 1.32s & 1.37s & 65.68s & 0.07s & 1.40s & 1.47s & 71.31s\\
        &\textbf{PILE \cite{TangSLZXQ23}}     
        & 0.87h & 0.14h & 1.01h & 40.40h & 2.85h & 0.15h & 3.00h & 5.54d \\
        &\textbf{TP-SMC \cite{TruexBASLZZ19}} 
        & 2.25s & 32.12s & 34.37s & 36.69m & 40.88s & 35.77s & 1.28m & 1.37h\\
        &\textbf{HybridAlpha \cite{XuBZAL19}} 
        & 15.16m & 7.35m & 22.51m & 20.29h & 1.29h & 7.46m & 1.42h & 3.89d\\
        &\textbf{Our scheme}   	
        & 0.08s & 2.11s & 2.19s & 2.03m & 0.26s & 2.18s & 2.44s & 2.46m \\
        
        \midrule
        \multirow{5}{*}{\textbf{CIFAR100}}&
        \textbf{FedAVG \cite{McMahanMRHA17}} 
        & 0.25s & 5.36s & 5.61s & 218.44s & 0.54s & 5.38s & 6.12s & 237.06s\\
        &\textbf{PILE \cite{TangSLZXQ23}}     
        & 1.90h & 0.38h & 2.28h & 91.34h & 6.50h & 0.42h & 6.90h & 12.64d\\
        &\textbf{TP-SMC \cite{TruexBASLZZ19}} 
        & 15.50s & 84.24s & 68.74s & 73.38m & 1.36m & 1.36m & 2.72m & 2.73h\\
        &\textbf{HybridAlpha \cite{XuBZAL19}} 
        & 1.01h & 0.38h & 1.39h & 76.34h & 4.31h & 0.39h & 4.72h & 10.90d\\
        &\textbf{Our scheme} 
        & 0.27s & 6.46s & 6.63s & 5.43m & 0.71s & 5.93s & 7.18s & 7.40m\\

        \bottomrule
    \end{tabular}
    \label{tab:Computational}
\end{table*}

\begin{figure*}[htbp]
    \centering
    \caption{Accuracy comparison under different privacy budgets, where $n=5$.}%n=5 epsilon=3
    \label{fig:accuracy}
    \includegraphics[width=1\textwidth]{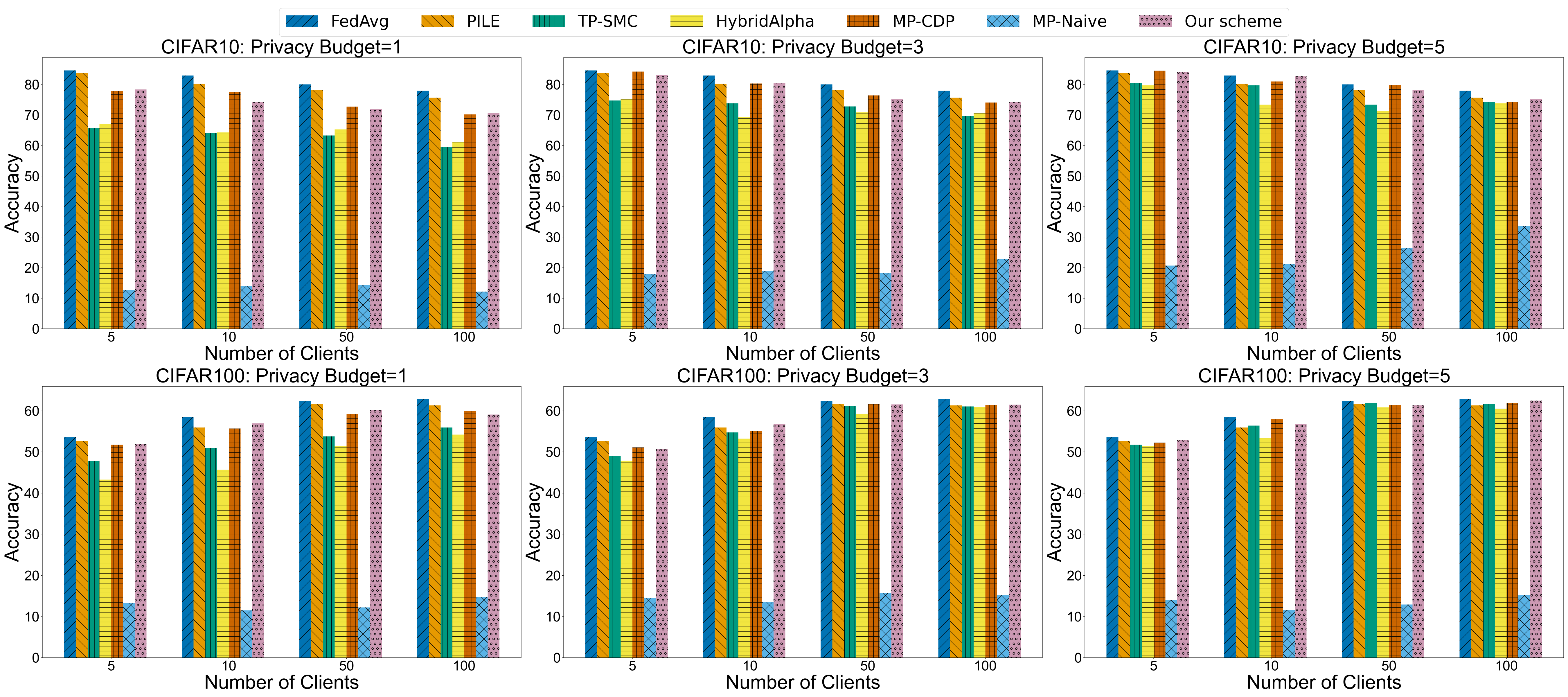}
\end{figure*}
% \begin{table*}[htbp]
% \centering
% \caption{\color{red}Accuracy comparison under different privacy budgets.}%n=5 epsilon=3
% \begin{tabular}{ccccccccc}
% \toprule
%  \multirow{2}{*}{\textbf{Method}}&\multicolumn{4}{c}{\textbf{CIFAR10}}&\multicolumn{4}{c}{\textbf{CIFAR100}}\\
%  \cmidrule(lr){2-5} \cmidrule(lr){6-9}
%  & {$K=5$}  & {$K=10$} & {$K=50$}&{$K=100$ }& {$K=5$}  & {$K=10$} & {$K=50$}&{$K=100$ } \\   
% \midrule
% \textbf{FedAVG \cite{McMahanMRHA17}}	&{$84.60\%$}&{$82.94\%$}&{$80.03\%$}&{$77.96\%$}&{$53.58\%$}&{$58.44\%$}&{$62.31\%$}&{$62.79\%$}\\
% \cmidrule(lr){1-9}

% \textbf{PILE \cite{TangSLZXQ23}}		&{$\mathbf{83.74\%}$}&{$80.28\%$}&{$\mathbf{78.20\%}$}&{$\mathbf{75.67\%}$}&{$\mathbf{52.70\%}$}&{$55.94\%$}&{$\mathbf{61.70\%}$}&{$61.31\%$}\\
% \textbf{TP-SMC \cite{TruexBASLZZ19}}	&{$74.75\%$}&{$73.79\%$}&{$72.80\%$}&{$69.73\%$}&{$48.98\%$}&{$54.74\%$}&{$61.25\%$}&{$61.05\%$}\\
% \textbf{HybridAlpha \cite{XuBZAL19}}	&{$67.13\%$}&{$69.43\%$}&{$70.87\%$}&{$70.76\%$}&{$47.86\%$}&{$53.23\%$}&{$59.22\%$}&{$60.93\%$}\\
% \textbf{Our scheme}							&{$83.15\%$}&{$\mathbf{80.41\%}$}&{$75.34\%$}&{$74.19\%$}&{$50.64\%$}&{$\mathbf{56.78\%}$}&{$61.56\%$}&{$\mathbf{61.48\%}$}\\ 
%  \bottomrule
% \end{tabular}
% \label{tab:accuracy}
% \end{table*}

\subsection{Model Accuracy} \label{performance}
We compare the model accuracy of our scheme with SOTAs. Besides, to evaluate specific components of our scheme, we introduce two additional baselines, MP-DP-Naive and MP-CDP. MP-DP-Naive is the scheme that directly concatenates \cite{0003FFSTXL22} and DP (i.e., $\bm\eta_{k}^{(l)}$ in Eq. \eqref{UPDP} satisfies $\mathcal{N} (0, \sigma^2)$ rather than $R^{(l)}\circ \bm\eta_{k}^{(l)}$ in Eq. \ref{eq:recovery}). We conduct the model accuracy of MP-DP-Naive to show the ingenuity of our design. The MP-CDP is the scheme that simulates the model accuracy with CDP, where we directly add the DP-noises to the true aggregated result $\nabla F(W^{(l)})$ rather than local gradients of each client. We simulate the MP-CDP scheme to show that our scheme can indeed effectively address Problem \ref{Prob2}.
%In this section, we compare the model accuracy of our MP-DP with baseline methods under different privacy budgets (i.e., $\epsilon = \{1, 3, 5\}$). In order to reflect the ingenuity of our MP-DP design and experimentally prove that our MP-DP scheme can achieve the same model accuracy as the CDP, we also design the other two baselines, called \textbf{MP-DP-Naive} and \textbf{MP-CDP}. Specifically, the MP-DP-Naive is the scheme that directly concatenates the original MP and DP, which is described in Section \ref{sub:problem} (i.e., $\bm\eta_{k}^{(l)}$ in Eq. \eqref{UPDP} satisfies the Gaussian distribution $\mathcal{N} (0, \sigma^2)$ rather than $R^{(l)}\circ \bm\eta_{k}^{(l)}$ in Eq. \ref{eq:recovery}). We conduct the model accuracy experiment of MP-DP-Naive to show the ingenuity of our MP-DP design. The MP-CDP is the scheme that simulates the model accuracy through adopting CDP, where we directly add the DP-noises $\bm\eta_{k}^{(l)}\sim \mathcal{N} (0, \sigma^2)$ to the true aggregated result $\nabla F(W^{(l)})$ in Eq. \eqref{eq:recovery0} rather than local gradients of each client. We simulate the MP-CDP scheme to show that our MP-DP scheme can indeed effectively address Problem \ref{Prob2}.

Figure \ref{fig:accuracy} gives the comparison result of model accuracy. It shows that as the privacy budget increases, the accuracy loss of schemes decreases except the MP-DP-Naive. Specifically, the accuracy of our scheme is significantly higher than that of TP-SMC and HybridAlpha. Furthermore, for different numbers of clients, the model accuracy of our scheme is almost the same as that of MP-CDP as the privacy budget increases. Besides, the accuracy loss of the MP-DP-Naive is too large to be acceptable. The main reason is that $R^{(l)}$ is the positive real matrix, if each client directly adds DP-noises to the perturbed local gradient, then $R^{(l)}\circ \bm\eta_{k}^{(l)}$ will cause DP-noises expansion, thus drastically reducing model accuracy. 
%Table \ref{tab:accuracy} gives the comparison of model accuracy. It shows that as the privacy budget increases, the accuracy loss of all privacy-preserving schemes decreases except the MP-DP-Naive. Specifically, the accuracy of our MP-DP scheme is significantly higher than that of TP-SMC and HybridAlpha and the advantage is even more noticeable for smaller $\epsilon$ (i.e., $\epsilon=1$). Compared with the privacy-ignoring FedAvg, even with a smaller privacy budget (i.e., $\epsilon=1$), the accuracy loss of our MP-DP only loses about $6\%$ for different numbers of clients on the CIFAR-10 dataset. Furthermore, for the different numbers of clients, the model accuracy of our MP-DP scheme is almost the same as that of MP-CDP as the privacy budget increases. Besides, the accuracy loss of the MP-DP-Naive is too large to be acceptable. The main reason is that $R^{(l)}$ is the positive real matrix, if each client directly adds the DP-noises to the perturbed local gradient, then $R^{(l)}\circ \bm\eta_{k}^{(l)}$ will cause DP-noises expansion, thus drastically reducing model accuracy. \emph{These results efficiently prove the advancement of our $\mathbf{DP\_Client}$ algorithm.}

\begin{table*}[htbp]
\centering
\caption{Comparison of different defense methods against reconstruction attacks, where privacy budget=3.}
\begin{tabular}{ccccccccc}
\toprule
 \multirow{3}{*}{\textbf{Method}}&\multicolumn{4}{c}{\textbf{CIFAR10}}&\multicolumn{4}{c}{\textbf{CIFAR100}}\\
 \cmidrule(lr){2-5} \cmidrule(lr){6-9}
 & \textbf{Accuracy}  & \textbf{PSNR} & \textbf{Test-MSE}&\textbf{Feat-MSE}& \textbf{Accuracy}  & \textbf{PSNR} & \textbf{Test-MSE}&\textbf{Feat-MSE} \\   
\midrule
\textbf{FedAVG \cite{McMahanMRHA17}}	
& $\textbf{84.60\%}$    & $13.56$ & $0.80$ &  $0.15$	&$53.58\%$	&  $14.10$&  $0.46$&  $0.12$\\
\cmidrule(lr){1-9}

\textbf{PILE \cite{TangSLZXQ23}}		
& $83.74\%$             & $13.68$ & $0.78$ &  $0.14$  	&$52.70\%$	&  $13.67$&  $0.54$&  $0.14$\\
\textbf{TP-SMC \cite{TruexBASLZZ19}}	
& $65.66\%$             & $10.39$ & $1.54$ &  $1.29$  	&$48.98\%$	&  $10.70$&  $1.72$&  $1.25$  \\

\textbf{HybridAlpha \cite{XuBZAL19}}	
& $67.13\%$             & $10.21$ & $1.53$ &  $1.91$  	&$47.86\%$	&  $10.54$&  $1.40$&  $1.34$  \\
\textbf{MP-CDP}                       
& $84.21\%$             &$12.15$  & $0.91$ &  $0.51$  	&$51.03\%$	&  $12.85$&  $0.40$&  $0.23$ \\
\textbf{Our scheme}							
& $83.15\%$             &$\textbf{9.22}$ & $\textbf{2.00}$ & $\textbf{4.13}$  	&$50.64\%$	&  $\textbf{8.49}$&  $\textbf{2.57}$&  $\textbf{3.78}$\\ 

\bottomrule
\end{tabular}
\label{tab:reResult}
\end{table*}

\subsection{Defense Ability Against Privacy Attacks}
We evaluate the defense ability against membership inference and reconstruction attacks. 

\subsubsection{Membership Inference Attacks}
We use the global model obtained by clients to identify whether a data record is used during the training phase. For a comprehensive comparison, we report two metrics: 1) \textbf{accuracy}, which is the prediction accuracy on classification tasks for the finally trained model, and 2) \textbf{attack success rate (ASR) against client-side models}.

We present the result in Tab. \ref{tab:miaResult}, which shows that the defense ability of our scheme is much better than that of SOTAs. Particularly, the ASR of our scheme and PILE is $\sim50\%$, which means clients cannot infer any information from the given perturbed model except blind guessing. Note that for membership inference attacks that infer a particular record is either a “member” or a “non-member”, the worst attack success rate is $50\%$ (i.e., blind guessing). However, the attack success rates of SOTAs are much higher than $50\%$. Compared to the $50\%$ baseline, FedAvg shows a $22\%$ advantage in ASR on CIFAR10 and a $17\%$ advantage on CIFAR100, while TP-SMC and HybridAlpha show $\sim 10\% $ advantage in ASR on both datasets. This implies that clients have a distinct advantage in successfully launching an attack compared to blind guessing.
The main reason is that, unlike SOTAs that either exchange plaintext global model during the middle iterations (i.e., TP-SMC and HybridAlpha) or even ignore privacy (i.e., FedAvg), both our scheme and PILE require clients to train on the perturbed global model. In this way, the server's global model is effectively protected from being accessed by clients.

%We present the results in Tab. \ref{tab:miaResult}, which shows that the defense ability against membership inference attacks in our MP-DP is much better than that of baselines. Particularly, the ASR of our MP-DP scheme is around $50\%$, which means clients cannot infer any information from the given perturbed model except blind guessing. Note that for membership inference attacks that infer a particular record is either a “member” or a “non-member”, the worst attack success rate is $50\%$ (i.e., blind guessing). However, the attack success rates of other baselines are much higher than $50\%$, e.g., FedAvg, TP-SMC and HybridAlpha are about $24\%$, $14\%$ and $15\%$ higher, respectively.  The main reason is that compared with the baselines that either exchange plaintext global model during the middle iterations (i.e., TP-SMC and HybridAlpha) or even privacy-ignoring FedAvg, the $\mathbf{MP\_Server}$ algorithm in our MP-DP scheme remains to allow clients to train on the perturbed global model. \textbf{Consequently, these results fully demonstrate the advantage of our $\mathbf{MP\_Server}$ algorithm in protecting global model.}
\begin{table*}[htbp]
    \centering
    \caption{Performance Comparison of Privacy-Preserving Methods in Breast Cancer Classification, where privacy budget=3.}
    \begin{tabular}{cccccc}
        \toprule
         &\textbf{Method}& \textbf{Accuracy}  & \textbf{Epoch Time} & \textbf{PSNR}  & \textbf{ASR}\\
        \midrule
        \multirow{5}{*}{$K=n=5$}&
         \textbf{FedAVG \cite{McMahanMRHA17}} 
         & $\textbf{56.95\%}$ & \textbf{25.99}s & 15.61 & $73.05\%$\\
         &\textbf{PILE \cite{TangSLZXQ23}}     
         & $55.47\%$ & 54146.37s & 11.03 & $58.95\%$\\         
         &\textbf{TP-SMC \cite{TruexBASLZZ19}} 
         & $55.15\%$ & 1252.25s & 12.26 & $50.27\%$\\
         &\textbf{HybridAlpha \cite{XuBZAL19}} 
         & $53.84\%$ & 23377.67s & 12.71 & $56.82\%$\\
         &\textbf{Our scheme} 
         & $56.47\%$ & 49.28s & \textbf{9.81} & $\textbf{50.18\%}$\\
         
         \midrule
          \multirow{5}{*}{\makecell{$K=100$ \\ and \\$n=5$}}&
         \textbf{FedAVG \cite{McMahanMRHA17}} 
         & $\textbf{60.31\%}$ & \textbf{26.01s} & 15.30 & $72.36\%$\\
         &\textbf{PILE \cite{TangSLZXQ23}}     
         & $59.92\%$ & 170174.57s & 12.13 & $55.11\%$\\         
         &\textbf{TP-SMC \cite{TruexBASLZZ19}} 
         & $58.49\%$ & 1533.95s & 12.38 & $50.45\%$\\
         &\textbf{HybridAlpha \cite{XuBZAL19}} 
         & $57.09\%$ & 82898.49s & 12.50 & $55.68\%$\\
         &\textbf{Our scheme} 
         & $58.98\%$ & 51.62s & \textbf{9.79} & $\textbf{50.24}\%$\\
        \bottomrule
    \end{tabular}
    \label{tab:medical}
\end{table*}

\begin{table}[h]
    \centering
    \caption{Comparison of different defense methods against membership inference attacks, where privacy budget=3.}%Client-side ASR
    \begin{tabular}{ccccc}
        \toprule
           \multirow{3}{*}{\textbf{Method}}& \multicolumn{2}{c}{\textbf{CIFAR10}}  & \multicolumn{2}{c}{\textbf{CIFAR100}} \\
         \cmidrule(lr){2-3} \cmidrule(lr){4-5}
         & \textbf{Accuracy}  & \textbf{ASR} & \textbf{Accuracy}  & \textbf{ASR}\\
        \midrule
        
         \textbf{FedAVG \cite{McMahanMRHA17}} 
         & $84.60\%$  & $72.01\%$ & $53.58\%$	& $67.30\%$\\
         \textbf{PILE \cite{TangSLZXQ23}}     
         & $83.74\%$   & $50.44\%$ & $52.70\%$	& $50.85\%$\\         
         \textbf{TP-SMC \cite{TruexBASLZZ19}} 
         & $65.66\%$   & $58.19\%$ & $48.98\%$	& $57.18\%$\\
         \textbf{HybridAlpha \cite{XuBZAL19}} 
         & $67.13\%$   & $59.49\%$& $47.86\%$	& $60.77\%$\\
         \textbf{Our scheme} 
         & $83.15\%$  & $\mathbf{50.30\%}$  & $50.64\%$	& $\mathbf{50.23\%}$\\
        \hline
    \end{tabular}
    \label{tab:miaResult}
\end{table}

\subsubsection{Reconstruction Attacks}
We launch reconstruction attacks to recover the original training data from local gradients. For a comprehensive comparison, in addition to model accuracy, we report the following three metrics:

%{\color{red}The results are derived by performing reconstruction attacks on the top 100 data samples in the dataset index, with the following three metrics averaged for comprehensive evaluation:}
%We launch reconstruction attacks to recover the original training data from local gradients. For a comprehensive comparison, in addition to model accuracy, we report the following two metrics:
\begin{itemize}  
    \item Peak Signal-to-Noise Ratio (PSNR) is a widely utilized metric in image processing for evaluating the quality of reconstructed images. A higher PSNR indicates better quality and higher similarity to the original image.
    \item Test Mean Squared Error (Test-MSE) quantifies the average squared difference between reconstructed images and ground truth images. The lower the Test-MSE, the higher the similarity between reconstructed images and original images.
    \item Feature Mean Squared Error (Feat-MSE) calculates the mean squared error between high-level feature representations of reconstructed and original images. Lower Feat-MSE suggests a closer match in the underlying image features.
\end{itemize}

Tab. \ref{tab:reResult} gives the comparison results of reconstruction attacks launched by the server. From the table, we can see that our scheme outperforms the privacy-ignoring FedAVg and other SOTAs (i.e., TP-SMC, HybridAlpha, PILE, and MP-CDP) in defense against reconstruction attacks. It is worth noting that the model accuracy of our scheme is consistent with that of MP-CDP, but the defense ability is better. This is because of the introduction of pairwise-correlated noises, which can eventually be canceled. These pairwise-correlated noises compensate for the independent noises well, making the noises added by the local gradient larger but the aggregated noises relatively small. We also provide the visualization of reconstructed images in the appendix \ref{appendix_Reconstruct}.

%\begin{table}[h]
%  \caption{Comparison of different defense methods against reconstruction attack on \textbf{CIFAR10} dataset. }
%  \label{tab:reResult}
%  \centering
%  \begin{tabular}{cccccc}
%  	\toprule
%	  &Method & Accuracy  & PSNR & Test-MSE  & Feat-MSE \\
%	\midrule
%        \multirow{6}{*}{\textbf{CIFAR10}}
%        &FedAVG \cite{McMahanMRHA17}    & $\textbf{84.60\%}$    & $13.56$ & $0.80$ &  $0.15$  \\
%        &PILE \cite{TangSLZXQ23}        & $84.42\%$             & $13.68$ & $0.78$ &  $0.14$  \\
%        &TP-SMC \cite{TruexBASLZZ19}    & $65.66\%$             & $10.39$ & $1.54$ &  $1.29$  \\
%        &HybridAlpha \cite{XuBZAL19}}   & $67.13\%$             & $10.21$ & $1.53$ &  $1.91$  \\
%        &\textbf{MP-DP}                          & $78.36\%$             &$\textbf{9.22}$ & $\textbf{2.00}$ & $\textbf{4.13}$\\
%        &\textbf{MP-DP-CDP}                      & $77.79\%$             &$12.15$  & $0.91$ &  $0.51$ \\
%
%       \bottomrule
%    \end{tabular}
%\end{table}

\subsection{Performance in The Real-world Scenario}
In this section, we evaluate the performance of our scheme in a realistic medical scenario, specifically for a breast cancer classification task \cite{7312934}. We apply a ResNet11 model to identify breast cancer from patient images and diagnostic data. The evaluation metrics include accuracy (to assess diagnostic reliability), epoch time (to measure computational efficiency), PSNR (to evaluate resilience against reconstruction attacks), and ASR (to gauge privacy protection against membership inference attacks)

Tab \ref{tab:medical} demonstrates that compared with SOTAs, our scheme achieves an optimal balance among computational efficiency, model accuracy, and robustness against privacy attacks. Whether in terms of PSNR, ASR values or accuracy, our scheme has the strongest privacy protection capability, and its accuracy is close to FedAvg, thus maintaining the reliability of diagnosis. Additionally, the computational efficiency significantly outperforms than that of SOTAs. Consequently, our scheme presents a practical solution for healthcare applications that require both high diagnostic accuracy and robust privacy protection.

\section{Conclusion}\label{sec:conclusion}
In this paper, we present an efficient and bidirectional privacy-preserving scheme for FL, which can ensure the privacy of both local gradients and the global model while maintaining model accuracy. Specifically, we skillfully design the constraint on the random one-time-used noises for the model perturbation method to prevent semi-trusted clients from obtaining the true global model. Meanwhile, we elegantly present the novel differential privacy mechanism on the client side to achieve the privacy of local gradients while maintaining the same utility level as the CDP. Extensive experiments demonstrate the computational efficiency of our scheme is almost comparable to that of the privacy-ignoring scheme and significantly outperforms SOTAs. Besides, empirical results of model accuracy show that our scheme can achieve the same utility as the CDP and is much better than SOTAs. Particularly, when the privacy budget $\epsilon$ is set relatively small, the accuracy loss of our scheme is less than $6\%$ compared to FedAvg, while the other SOTAs suffer up to $20\%$ accuracy loss. Furthermore, experimental results on defending against privacy attacks show that the defense capability of our scheme also outperforms than SOTAs.

\bibliographystyle{IEEEtran}
\bibliography{example_paper}

\appendices
\section{Proofs}
In this section, we give the proofs of Theorems \ref{defactorization}-\ref{theo:para}.
\subsection{Proof of Theorem \ref{defactorization}}\label{appendix_theorem3}
\begin{proof}
Let $f(t)$ be the characteristic function of the random variable $\ln |Z|$,  we have
$$
\begin{aligned}
f(t)&=\mathrm{E}( e^{i t \ln |Z|})=\int_{-\infty}^{\infty} e^{i t \ln |z|} \frac{1}{\sigma\sqrt{2 \pi}} e^{-\frac{z^2}{2\sigma^2}} \mathrm{~d} z \\
& =\frac{2}{\sigma\sqrt{2 \pi}} \int_0^{\infty} z^{i t} e^{-\frac{z^2}{2\sigma^2}} \mathrm{~d} z =\frac{(\sqrt{2}\sigma)^{i t}}{\sqrt{\pi}} \int_0^{\infty} \frac{u^{\frac{it-1}{2}}}{e^{u}}  \mathrm{~d} u \\
& =(\sqrt{2}\sigma)^{i t}\Gamma\left(\frac{1+i t}{2}\right) / \Gamma\left(\frac{1}{2}\right) \\
&  \stackrel{(a)}{=}\exp \left\{i t \ln(\sqrt{2}\sigma)\right\} \frac{1}{1+i t} \prod_{\ell=1}^{\infty} \frac{\exp \left\{\frac{i t}{2} \ln \left(1+\frac{1}{\ell}\right)\right\}}{1+\frac{i t}{2 \ell+1}},
\end{aligned}
$$
where Eq. $(a)$ follows from the famous Euler's product formula $\Gamma(z)=\frac{1}{z} \prod_{\ell=1}^{\infty} \frac{\left(1+\frac{1}{\ell}\right)^z}{1+\frac{z}{\ell}}$. Then by the property of characteristic function, we have
\[\begin{aligned}
    \ln |Z| &= -\sum_{\ell=1}^{\infty}\left[\frac{E_\ell}{2 \ell+1}-\frac{1}{2} \ln \left(1+\frac{1}{\ell}\right)\right]+\ln(\sqrt{2}\sigma)-E_0\\
    &=-(\sum_{\ell=1}^{\infty}\left[\frac{G_{1/m, j}}{2 \ell+1}-\frac{1}{2m} \ln \left(1+\frac{1}{\ell}\right)\right]+\\
    & \sum_{\ell=1}^{\infty}\left[\frac{G_{1/m, j}}{2 \ell+1}-\frac{1}{2m} \ln \left(1+\frac{1}{\ell}\right)\right]+\cdots+\\
    &\sum_{\ell=1}^{\infty}\left[\frac{G_{1/m, j}}{2 \ell+1}-\frac{1}{2m} \ln \left(1+\frac{1}{\ell}\right)\right])+\frac{\ln(\sqrt{2}\sigma)-E_0}{n}\\
    &~~~~+\frac{\ln(\sqrt{2}\sigma)-E_0}{n}+\cdots+\frac{\ln(\sqrt{2}\sigma)-E_0}{n}
\end{aligned}\]
where $E_0, E_1, \ldots$ are i.i.d. standard exponential random variables.
\end{proof}

\subsection{Proof of Theorem \ref{theorem:relation}}\label{appendix_theorem4}
We present some properties of $\{R^{(l)}\}_{l=1}^{2L-1}$ given in Eq. \eqref{eq:Rrevised1}:
\begin{eqnarray*}
    \left\{
        \begin{aligned}
            & R^{(1)}= D_{\bm r^{(1)}}E^{(1)} \textnormal{ and } R^{(2L-1)} D_{\frac{1}{\bm s^{(L-1)}}}=E^{(L)},\\
            &\begin{aligned}
                & R^{(2l)}D_{\bm r^{(l)}}=  D_{\frac{1}{\bm s^{(l)}}}E^{(l)}, \\
            & R^{(2l-1)}D_{ \frac{1}{\bm s^{(l-1)}} }=D_{\bm r^{(l)}}E^{(l)},\\
            \end{aligned} \textnormal{ for } 2\leq l\leq L-1,
        \end{aligned}
    \right.
\end{eqnarray*}
 where $D_{\bm r^{(l)}}$ (resp. $D_{\frac{1}{\bm s^{(l)}}}$) is the $n_{l} \times n_{l}$ diagonal matrix whose main diagonal is $\bm r^{(l)}$ (resp. $\frac{1}{\bm s^{(l)}}$) and $E^{(l)}$ is the $n_{l} \times n_{l}$ matrix whose entries are all 1s.
For $1 \leq l \leq L-1$, denote $\bm{\hat{y}}^{(2l-1)}$, $\bm{\hat{y}}^{(2l)}$ as the perturbed output vectors,  and $\bm y^{(2l-1)}$, $\bm y^{(2l)}$ as the true output vector. 
\begin{theorem}\label{theorem:output}
Let $\alpha=\sum_{i=1}^{n_{L-1}} \hat {y}^{(2L - 2)}_i$ and $\bm{r}=\bm \gamma \circ \bm{r}^{(a)}$, then for $1 \leq l \leq L-1$, $\bm{\hat{y}}^{(2l-1)}=  \bm r^{(l)} \circ \bm y^{(2l-1)}$,~$\bm{\hat{y}}^{(2l)}=  \frac{1}{\bm s^{(l)}} \circ \bm y^{(2l)}$, and $\bm{\hat{y}}^{(2L-1)} = \bm y^{(2L-1)} +  \alpha \bm{r}$.
\end{theorem}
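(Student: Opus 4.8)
The plan is to establish the three identities simultaneously by a single induction on the layer index, running the forward pass through the expanded $(2L-1)$-layer perturbed network in lockstep with the original expanded network. Two elementary facts do all the work. First, $\mathrm{ReLU}$ is positively homogeneous: for any strictly positive vector $\bm c$ and the diagonal matrix $D_{\bm c}$ with diagonal $\bm c$, $\mathrm{ReLU}(D_{\bm c}\bm v)=D_{\bm c}\,\mathrm{ReLU}(\bm v)$; in particular a strictly positive coordinate-wise rescaling does not change which coordinates $\mathrm{ReLU}$ keeps active. Second, the product structure of the perturbation matrices recorded just before the statement: $R^{(2l-1)}\circ W^{(2l-1)}=D_{\bm r^{(l)}}W^{(2l-1)}D_{\bm s^{(l-1)}}$ for $1<l\le L-1$ (with the one-sided factors $D_{\bm r^{(1)}}$ at the first layer and $D_{\bm s^{(L-1)}}$ at the last), and the transitional layer $\widehat W^{(2l)}=R^{(2l)}\circ\mathcal I^{(l)}=D_{1/(\bm s^{(l)}\circ\bm r^{(l)})}$, which is diagonal.

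First I would do the base case: $\widehat W^{(1)}\bm x=D_{\bm r^{(1)}}W^{(1)}\bm x$, so $\bm{\hat y}^{(1)}=\mathrm{ReLU}(D_{\bm r^{(1)}}W^{(1)}\bm x)=D_{\bm r^{(1)}}\bm y^{(1)}=\bm r^{(1)}\circ\bm y^{(1)}$. Then, assuming $\bm{\hat y}^{(2l-1)}=\bm r^{(l)}\circ\bm y^{(2l-1)}$, I would push through the transitional layer: $\widehat W^{(2l)}\bm{\hat y}^{(2l-1)}=D_{1/(\bm s^{(l)}\circ\bm r^{(l)})}(\bm r^{(l)}\circ\bm y^{(2l-1)})=\tfrac{1}{\bm s^{(l)}}\circ\bm y^{(2l-1)}$, which is nonnegative, so $\mathrm{ReLU}$ acts as the identity, and using $\bm y^{(2l)}=\mathrm{ReLU}(\mathcal I^{(l)}\bm y^{(2l-1)})=\bm y^{(2l-1)}$ (the incoming vector is a nonnegative ReLU output) we get $\bm{\hat y}^{(2l)}=\tfrac{1}{\bm s^{(l)}}\circ\bm y^{(2l)}$. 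Into the next odd layer, $\widehat W^{(2l+1)}=D_{\bm r^{(l+1)}}W^{(2l+1)}D_{\bm s^{(l)}}$, and the right diagonal cancels the $1/\bm s^{(l)}$ carried in $\bm{\hat y}^{(2l)}$, so $\widehat W^{(2l+1)}\bm{\hat y}^{(2l)}=D_{\bm r^{(l+1)}}W^{(2l+1)}\bm y^{(2l)}$; applying positive homogeneity once more gives $\bm{\hat y}^{(2l+1)}=\bm r^{(l+1)}\circ\bm y^{(2l+1)}$. This closes the induction for $1\le l\le L-1$.

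Finally I would handle the output layer $2L-1$, which carries no nonlinearity and the additive term $R^{(a)}$. Writing $\widehat W^{(2L-1)}=W^{(2L-1)}D_{\bm s^{(L-1)}}+R^{(a)}$ and feeding in $\bm{\hat y}^{(2L-2)}=\tfrac{1}{\bm s^{(L-1)}}\circ\bm y^{(2L-2)}$: the first summand yields $W^{(2L-1)}\bm y^{(2L-2)}=\bm y^{(2L-1)}$ after the diagonals cancel, and the second yields $R^{(a)}\bm{\hat y}^{(2L-2)}$; since every column of $R^{(a)}$ equals $\bm r=\bm\gamma\circ\bm r^{(a)}$, this is $\big(\sum_i\hat y^{(2L-2)}_i\big)\bm r=\alpha\bm r$. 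Hence $\bm{\hat y}^{(2L-1)}=\bm y^{(2L-1)}+\alpha\bm r$, which completes the proof.

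The step I expect to require the most care — though it is conceptual rather than computational — is the $\mathrm{ReLU}$ bookkeeping: one must verify that at every hidden layer the perturbed pre-activation is exactly a strictly positive coordinate-wise rescaling of the original pre-activation, so that the $\mathrm{ReLU}$ active sets coincide and positive homogeneity applies verbatim, and that each transitional layer is genuinely transparent, namely its $\mathrm{ReLU}$ acts as the identity because the vector entering it is a positive rescaling of a nonnegative ReLU output. Once that invariant is in place, the rest is routine diagonal-matrix algebra driven entirely by the factorizations of $\{R^{(l)}\}$ stated above, together with the telescoping cancellation $\tfrac{1}{\bm s^{(l)}\circ\bm r^{(l)}}\circ\bm r^{(l)}=\tfrac{1}{\bm s^{(l)}}$ followed by $D_{\bm r^{(l+1)}}D_{\bm s^{(l)}}D_{1/\bm s^{(l)}}=D_{\bm r^{(l+1)}}$ at the adjacent odd layer.
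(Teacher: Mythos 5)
Your proof is correct and follows essentially the same route as the paper's: a layer-by-layer induction using the positive homogeneity of $\mathrm{ReLU}$, the diagonal structure of the perturbations (your factorization $\widehat{W}^{(2l-1)}=D_{\bm r^{(l)}}W^{(2l-1)}D_{\bm s^{(l-1)}}$ is just a restatement of the paper's identities $R^{(2l-1)}D_{1/\bm s^{(l-1)}}=D_{\bm r^{(l)}}E^{(l)}$), and the rank-one column structure of $R^{(a)}$ for the final additive term $\alpha\bm r$. No gaps.
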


\emph{Proof.}  We prove the theorem by mathematical induction. Firstly, when $l=1$, we can obtain
$$\begin{aligned}
    \hat {\bm y}^{(1)} &= ReLU\left(\widehat W^{(1)} \bm x\right)=  ReLU\left(\big(R^{(1)} \circ W^{(1)}\big) \bm x\right) \\
    &=ReLU\left(\big(D_{\bm r^{(1)}}E^{(1)} \circ W^{(1)}\big) \bm x\right)\\
    &=ReLU\left(D_{\bm r^{(1)}}\big(E^{(1)} \circ W^{(1)}\big) \bm x\right)\\
    &=  ReLU\left(D_{\bm r^{(1)}} W^{(1)} \bm x\right)=  ReLU\left(\bm r^{(1)}\circ (W^{(1)} \bm x)\right)\\
    & =\bm r^{(1)}\circ ReLU( W^{(1)} \bm x)
    =\bm r^{(1)} \circ \bm y^{(1)}.
\end{aligned}$$
\begin{eqnarray*}
    \hat {\bm y}^{(2)} &=& ReLU\left(\big(R^{(2)} \circ W^{(2)}\big) (\bm r^{(1)} \circ \bm y^{(1)})\right) \\
    &=&  ReLU\left(\big(R^{(2)}D_{\bm r^{(1)}} \circ W^{(2)}\big) \bm y^{(1)}\right) \\
    &=&ReLU\left(\big(D_{\frac{1}{\bm s^{(1)}}}E^{(1)} \circ W^{(2)}\big) \bm y^{(1)}\right)\\
    &= & ReLU\left(D_{\frac{1}{\bm s^{(1)}}} W^{(1)} \bm y^{(1)}\right)\\
    & =&\frac{1}{\bm s^{(1)}}\circ ReLU( W^{(2)} \bm y^{(1)})=\frac{1}{\bm s^{(1)}} \circ \bm y^{(2)}.
\end{eqnarray*}
For $2 \leq l \leq L-1$, assuming $\hat {\bm y}^{(2l - 3)}= \bm r^{(l-1)} \circ \bm y^{(2l - 3)}$ and $\bm{\hat{y}}^{(2l-2)}=  \frac{1}{\bm s^{(l-1)}} \circ \bm y^{(2l-2)}$ by induction. Then, we have

\begin{eqnarray*}
     \hat {\bm y}^{(2l-1)} &=& ReLU\left(\widehat W^{(2l-1)} \hat {\bm y}^{(2l - 2)}\right)\\
    &=&ReLU\left(\big(R^{(2l-1)} \circ W^{(2l-1)}\big) D_{ \frac{1}{\bm s^{(l-1)}} }  \bm y^{(2l - 2)}\right)\\
    &=&ReLU\left(\big( (R^{(2l-1)}D_{ \frac{1}{\bm s^{(l-1)}} }) \circ W^{(2l-1)}\big) \bm y^{(2l - 2)}\right) \\
    &=&ReLU\left(\big( D_{\bm r^{(l )}}E^{(l)} \circ W^{(2l-1)} \big) \bm y^{(2l - 2)}\right)\\
   &=&ReLU\left( D_{\bm r^{(l)} }W^{(2l-1)} \bm y^{(2l - 2)}\right)\\
   &=& ReLU\left(\bm r^{(l)} \circ (W^{(2l-1)} \bm y^{(2l - 2)})\right)\\
   & =& \bm r^{(l)} \circ ReLU\left(W^{(2l-1)} \bm y^{(2l - 2)}\right)=\bm r^{(l)} \circ \bm y^{(2l-1)}.
\end{eqnarray*}

\begin{align*}
    \hat {\bm y}^{(2l)} &=ReLU\left(\big(R^{(2l)} \circ W^{(2l)}\big) D_{ \bm r^{(l)}}  \bm y^{(2l - 1)}\right)\\
    &=ReLU\left(\big( D_{\frac{1}{\bm s^{(l)}}}E^{(l)} \circ W^{(2l)} \big) \bm y^{(2l - 1)}\right)\\
   &= ReLU\left(\frac{1}{\bm s^{(l)}}\circ (W^{(2l)} \bm y^{(2l - 1)})\right)\\
   & = \frac{1}{\bm s^{(l)}} \circ ReLU\left(W^{(2l)} \bm y^{(2l - 1)}\right)=\frac{1}{\bm s^{(l)}}\circ \bm y^{(2l)}.
\end{align*}

%\vskip 0mm \noindent
Finally,  we prove the last equality as follows.
\begin{eqnarray*}
    &&\hat {\bm y}^{(2L-1)}=\widehat W^{(2L-1)} \hat {\bm y}^{(2L - 2)}  \\
&=& (R^{(2L-1)} \circ W^{(2L-1)} + R^{(a)}) \hat {\bm y}^{(2L-2)} \\
   &=& (R^{(2L-1)} \circ W^{(2L-1)})(\frac{1}{\bm s^{(L-1)}}\circ \bm y^{(2L - 2)}) + R^{(a)}\hat {\bm y}^{(2L-2)}  \\
&=& (R^{(2L-1)} D_{\frac{1}{\bm s^{(L-1)}}}) \circ W^{(2L-1)} \bm y^{(2L - 2)} + R^{(a)}\hat {\bm y}^{(2L-2)}\\
 &=&  W^{(2L-1)} \bm y^{(2L - 2)} + R^{(a)}\hat {\bm y}^{(2L-2)}= \bm y^{(2L-1)} +  \alpha \bm{r}.
\end{eqnarray*}

 \begin{theorem}\label{gradient}
 For each training sample $(\bm{x}, \bm{\bar{y}}) \in \mathcal{D}_{k}$ with label $\bar{\bm y}$ and $1 \leq l \leq 2L-1$, the perturbed gradients $\frac{\partial \mathcal{\widehat L}\left(\widehat{W}; (\bm{x},\bm{\bar{y}})\right)}{\partial \widehat{W}^{(l)}} \in \mathbb{R}^{n_{l}\times n_{l-1}}$ and the true gradients  $\frac{\partial \mathcal{L}\left(W; (\bm{x},\bm{\bar{y}})\right)}{\partial W^{(l)}} \in \mathbb{R}^{n_{l}\times n_{l-1}}$ satisfy
 \begin{equation*}
\frac{\partial \mathcal{\widehat  L}(\widehat{W}; (\bm{x},\bm{\bar{y}}))}{\partial \widehat {W}^{(l)}}=\frac{1}{R^{(l)}} \circ \frac{\partial \mathcal{L}\left(W; (\bm{x},\bm{\bar{y}})\right)}{\partial {W}^{(l)}} +
  \bm{r}^{\top} \bm \sigma^{(l)} - \upsilon \bm \beta^{(l)},
\end{equation*}
 where  $\bm{r}=\bm \gamma \circ \bm{r}^{(a)}$, ~$\upsilon=\bm{r}^{\top}\bm{r}$,~
 $\bm \sigma^{(l)} = \alpha\frac{\partial \hat {\bm y}^{(2L-1)}}{\partial \widehat {W}^{(l)}}+ \Big(\frac{\partial \mathcal{\hat L}(\widehat{W} (\bm{x},\bm{\bar{y}}))}{\partial \hat{\bm y}^{(2L-1)}}\Big)^{\top}\frac{\partial  \alpha}{\partial \widehat {W}^{(l)}}$, and 
 $\bm \beta^{(l)} = \alpha \frac{\partial \alpha}{\partial \widehat {W}^{(l)}}$.

 \end{theorem}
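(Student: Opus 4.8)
\textbf{Proof proposal for Theorem \ref{gradient}.}

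The plan is to compute the gradient of the perturbed loss $\mathcal{\widehat L}(\widehat{W};(\bm x,\bm{\bar y}))$ with respect to $\widehat{W}^{(l)}$ by the chain rule, then use Theorem \ref{theorem:output} to re-express everything in terms of the true forward-pass quantities $\bm y^{(l)}$ and the true weights $W^{(l)}$, and finally read off the claimed identity. The crucial structural input is the set of algebraic identities for $\{R^{(l)}\}_{l=1}^{2L-1}$ stated at the top of Appendix \ref{appendix_theorem4} (namely $R^{(2l-1)}D_{1/\bm s^{(l-1)}}=D_{\bm r^{(l)}}E^{(l)}$, $R^{(2l)}D_{\bm r^{(l)}}=D_{1/\bm s^{(l)}}E^{(l)}$, and the boundary cases), which let one commute the perturbation matrices past the diagonal scalings produced by the ReLU activations. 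The key observation making the computation tractable is that, because of Theorem \ref{theorem:output}, the perturbed output at every hidden odd/even layer is just the true output rescaled by $\bm r^{(l)}$ or $1/\bm s^{(l)}$, and ReLU commutes with multiplication by positive vectors; so the only genuinely nonlinear interference comes from the additive term $R^{(a)}$ in the last layer, which contributes the $\alpha\bm r$ offset to $\hat{\bm y}^{(2L-1)}$.

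First I would set up notation: write the MSE loss as $\mathcal{\widehat L}=\tfrac12\|\hat{\bm y}^{(2L-1)}-\bm{\bar y}\|_2^2$ and similarly $\mathcal L=\tfrac12\|\bm y^{(2L-1)}-\bm{\bar y}\|_2^2$. Since $\hat{\bm y}^{(2L-1)}=\bm y^{(2L-1)}+\alpha\bm r$ by Theorem \ref{theorem:output}, I would expand $\mathcal{\widehat L}=\mathcal L+\alpha\, \bm r^{\top}(\bm y^{(2L-1)}-\bm{\bar y})+\tfrac12\alpha^2\upsilon$ with $\upsilon=\bm r^{\top}\bm r$. Differentiating with respect to $\widehat W^{(l)}$ via the chain rule gives three groups of terms: (i) the derivative of $\mathcal L$ composed with the derivative of $\bm y^{(2L-1)}$ through $\widehat W^{(l)}$; (ii) terms carrying a factor $\bm r^{\top}$ coming from the cross term; and (iii) terms carrying $\upsilon$ coming from the quadratic term. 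Group (i) should, after applying the commutation identities for $R^{(l)}$ to convert $\partial \hat{\bm y}/\partial \widehat W^{(l)}$-type Jacobians into $\partial \bm y/\partial W^{(l)}$-type ones, collapse to $\tfrac{1}{R^{(l)}}\circ \tfrac{\partial \mathcal L}{\partial W^{(l)}}$; the appearance of the reciprocal $1/R^{(l)}$ (rather than $R^{(l)}$) is exactly what one expects, since differentiating with respect to a scaled weight divides by the scale. Groups (ii) and (iii) are then packaged into $\bm r^{\top}\bm\sigma^{(l)}$ and $-\upsilon\bm\beta^{(l)}$ respectively, with $\bm\sigma^{(l)}$ and $\bm\beta^{(l)}$ defined precisely as the Jacobian expressions in the statement — this is essentially a matter of matching terms, using $\partial\alpha/\partial\widehat W^{(l)}$ and $\partial\hat{\bm y}^{(2L-1)}/\partial\widehat W^{(l)}$ as the atomic quantities rather than unfolding them.

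The main obstacle I anticipate is the bookkeeping in group (i): carefully propagating the diagonal rescalings $D_{\bm r^{(l)}}$, $D_{1/\bm s^{(l)}}$ backwards through the layered chain rule and verifying that all the intermediate $\bm r^{(l')}$, $\bm s^{(l')}$ for $l'\neq$ the relevant layer cancel in telescoping fashion, leaving precisely the single factor $1/R^{(l)}$. One must treat the two parities $l=2l'-1$ and $l=2l'$ separately and handle the first layer ($l=1$, no incoming rescaling) and the last layer ($l=2L-1$, where the additive $R^{(a)}$ enters) as boundary cases; the commutation identities at the head of the appendix are tailored exactly for this. A secondary subtlety is that ReLU's derivative is a $0/1$ diagonal mask, and one needs that multiplying the pre-activation by a positive vector does not change this mask, so the perturbed and true activation patterns agree at every hidden layer — this is what lets the masks pass through cleanly. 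Once these commutations are verified layer by layer, assembling the three groups into the stated formula is immediate.
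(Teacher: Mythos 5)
Your proposal is correct and takes essentially the same route as the paper: both rely on Theorem \ref{theorem:output} (so $\hat{\bm y}^{(2L-1)}=\bm y^{(2L-1)}+\alpha\bm r$ and hence $\partial\mathcal{\widehat L}/\partial\hat{\bm y}^{(2L-1)}=\partial\mathcal{L}/\partial\bm y^{(2L-1)}+\alpha\bm r^{\top}$), on the entrywise scaling fact $\partial\mathcal{L}/\partial\widehat W^{(l)}=\frac{1}{R^{(l)}}\circ\partial\mathcal{L}/\partial W^{(l)}$, and on regrouping the remaining terms into $\bm r^{\top}\bm\sigma^{(l)}-\upsilon\bm\beta^{(l)}$, exactly as in the paper's chain-rule computation. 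The only cosmetic difference is that you expand the squared loss into $\mathcal{L}+\alpha\bm r^{\top}(\bm y^{(2L-1)}-\bar{\bm y})+\tfrac12\alpha^{2}\upsilon$ before differentiating (the paper differentiates first and then regroups), and the layer-by-layer telescoping you anticipate for group (i) is unnecessary, since the one-line "scaled weight divides by the scale" argument you also mention is all the paper uses.
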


 \begin{proof}
For $\bm x$ is a $u \times v$ matrix, we regard $\bm x$ as a vector of $\mathbb{R}^{uv}$.
According to Theorem \ref{theorem:output}, we have $\bm{\hat{y}}^{(2L-1)} =\bm y^{(2L-1)} +\alpha \bm{r}$, and thus the perturbed loss function is 
\begin{eqnarray*}
   \mathcal{\widehat{L}}\left(\widehat{W}; (\bm{x},\bm{\bar{y}})\right)&=&\frac{1}{2}\parallel\bm \hat{y}^{(2L-1)} - \bar{\bm y}\parallel^{2}_{2}\\
   &=&\frac{1}{2}\parallel\bm y^{(2L-1)} +\alpha \bm{r}- \bar{\bm y}\parallel^{2}_{2}
\end{eqnarray*}
Then,
\begin{small}
\[\frac{\partial \mathcal{\widehat L}\left(\widehat{W}; (\bm{x},\bm{\bar{y}})\right)}{\partial \hat {\bm y}^{(2L-1)}}=\left(\hat {\bm y}^{(2L-1)}-\bar{\bm y}\right)^{\top}=\frac{\partial \mathcal{ L}\left(W; (\bm{x},\bm{\bar{y}})\right)}{\partial {\bm y}^{(2L-1)}}+ \alpha \bm r^{\top},\]
\end{small}
By the chain rule, we can derive that
\begin{eqnarray*}
   && \frac{\partial \mathcal{\widehat  L}(\widehat{W}; (\bm{x},\bm{\bar{y}}))}{\partial \widehat {W}^{(l)}}=\frac{\partial \mathcal{\widehat L}(\widehat{W}; (\bm{x},\bm{\bar{y}}))}{\partial \hat {\bm y}^{(2L-1)}}
  \frac{\partial \hat {\bm y}^{(2L-1)}}{\partial  \widehat {W}^{(l)}}\\
  &=&\frac{\partial \mathcal{L}(W; (\bm{x},\bm{\bar{y}}))}{\partial {\bm y}^{(2L-1)}} \frac{\partial \hat {\bm y}^{(2L-1)}}{\partial  \widehat {W}^{(l)}}+  \alpha \bm r^{\top}
  \frac{\partial \hat {\bm y}^{(2L-1)}}{\partial  \widehat {W}^{(l)}} \\
  &=&\frac{\partial \mathcal{L}(W; (\bm{x},\bm{\bar{y}}))}{\partial {\bm y}^{(2L-1)}}\frac{\partial  ({\bm y}^{(2L-1)}+\alpha \bm{r}) }{\partial  \widehat {W}^{(l)}}+  \alpha \bm r^{\top}
  \frac{\partial \hat {\bm y}^{(2L-1)}}{\partial  \widehat {W}^{(l)}} \\
  &=&\frac{\partial \mathcal{L}(W; (\bm{x},\bm{\bar{y}}))}{\partial \widehat {W}^{(l)}} +\alpha \bm r^{\top}
  \frac{\partial \hat {\bm y}^{(2L-1)}}{\partial  \widehat {W}^{(l)}}+\\
  &&\left(\frac{\partial \mathcal{\widehat L}(\widehat{W}; (\bm{x},\bm{\bar{y}}))}{\partial \hat{{\bm y}}^{(2L-1)}} -  \alpha \bm r^{\top}\right)\frac{\partial (\alpha \bm r )}{\partial \widehat {W}^{(l)}}\\
  &=&\frac{\partial \mathcal{L}(W; (\bm{x},\bm{\bar{y}}))}{\partial \widehat {W}^{(l)}} 
  - \upsilon \alpha \frac{\partial  \alpha}{\partial \widehat {W}^{(l)}} +\\
  &&\bm r^{\top} \left( \alpha\frac{\partial \hat {\bm y}^{(2L-1)}}{\partial \widehat {W}^{(l)}}
  + \Big(\frac{\partial \mathcal{\widehat L}(\widehat{W}; (\bm{x},\bm{\bar{y}}))}{\partial \hat{\bm y}^{(2L-1)}}\Big)^{\top}\frac{\partial  \alpha}{\partial \widehat {W}^{(l)}}\right)\\
  &=& \frac{1}{R^{(l)}} \circ \frac{\partial \mathcal{L}(W; (\bm{x},\bm{\bar{y}}))}{\partial {W}^{(l)}} +
  \bm{r}^{\top}  \bm \sigma^{(l)} - \upsilon \bm \beta^{(l)}.
\end{eqnarray*}
 \end{proof}

\emph{Proof of Theorem \ref{theorem:relation}}:  Suppose $\{W_{\mathbf{org}}^{(l)}\}_{l=1}^L$ is the original global model  and $W=\{W^{(l)}\}_{l=1}^{2L-1}$ is the expanded global model as given in Eq. \eqref{eq:revisedW}. Denote the original true output vector as $\bm{y}_{\mathbf{org}}^{(l)}$. Then  by mathematical induction, for $1 \leq l \leq L-1$, we have
\begin{eqnarray*}
    \bm{y}_{\mathbf{org}}^{(l)}&=&ReLU(W_{\mathbf{org}}^{(l)}\bm{y}_{\mathbf{org}}^{(l-1)})=ReLU(W^{(2l-1)}\bm{y}^{(2l-3)})\\
    &=&ReLU(W^{(2l-1)}\bm{y}^{(2l-2)})=\bm{y}^{(2l-1)},
\end{eqnarray*}
\begin{eqnarray*}
    \bm{y}_{\mathbf{org}}^{(L)}&=&W_{\mathbf{org}}^{(L)}\bm{y}_{\mathbf{org}}^{(L-1)}=W^{(2L-1)}\bm{y}^{(2L-3)}\\
    &=&W^{(2L-1)}\bm{y}^{(2L-2)}=\bm{y}^{(2L-1)},
\end{eqnarray*}
Here we follow the fact that for $2 \leq l \leq L$,
\[\bm y^{(2l-2)}=ReLU(W^{(2l-2)}\bm y^{(2l-3)})=\bm y^{(2l-3)}.\]
Then from the above, we have
\[\mathcal{L}\left(W; (\bm{x},\bm{\bar{y}})\right)=\frac{1}{2}\parallel\bm y_{\mathbf{org}}^{(L)}- \bar{\bm y}\parallel^{2}_{2}=\mathcal{L}\left(W_{\mathbf{org}}; (\bm{x},\bm{\bar{y}})\right),\]
Hence 
\[\frac{\partial \mathcal{L}\left(W_{\mathbf{org}}; (\bm{x},\bm{\bar{y}})\right)}{\partial {W}_{\mathbf{org}}^{(l)}}=\frac{\partial \mathcal{L}\left(W; (\bm{x},\bm{\bar{y}})\right)}{\partial {W}^{(2l-1)}}.\]
Combining with Theorem \ref{gradient}, we obtain
\begin{eqnarray*}
    \begin{aligned}
        &\frac{\partial \mathcal{L}\left(W_{\mathbf{org}}; (\bm{x},\bm{\bar{y}})\right)}{\partial {W}_{\mathbf{org}}^{(l)}}= \frac{\partial \mathcal{L}\left(W; (\bm{x},\bm{\bar{y}})\right)}{\partial {W}^{(2l-1)}}\\
        & =R^{(2l-1)}\circ\Big(\frac{\partial\mathcal{\widehat  L}(\widehat{W};(\bm{x},\bm{\bar{y}}))}{\partial \widehat {W}^{(2l-1)}}-\bm{r}^{\top} \bm \sigma^{(2l-1)} + \upsilon \bm \beta^{(2l-1)}\Big).
    \end{aligned}
\end{eqnarray*}
By the definitions of $\nabla F( W_{\mathbf{org}}^{(l)}, D_{k})$, $\nabla F(\widehat{W}^{(2l-1)}, \mathcal{D}_{k})$, $\bm{\Psi}_{k}^{(2l-1)}$, and $\bm \Phi_{k}^{(2l-1)}$,
we have 
\begin{eqnarray*}
 &&\nabla F( W_{\mathbf{org}}^{(l)}, D_{k})\\
   &=&  \frac{1}{|\mathcal{D}_{k}|}\sum_{(\bm{x}_{i}, \bm{\bar{y}}_{i})\in \mathcal{D}_{k}}R^{(2l-1)}\circ\Big(\frac{\partial\mathcal{\widehat  L}(\widehat{W};(\bm{x}_i,\bm{\bar{y}}_i))}{\partial \widehat {W}^{(2l-1)}}-\\
    &&\bm{r}^{\top} \bm \sigma_{(\bm{x}_{i}, \bm{\bar{y}}_{i})}^{(2l-1)} + \upsilon \bm \beta_{(\bm{x}_{i}, \bm{\bar{y}}_{i})}^{(2l-1)}\Big)\\
    &=&R^{(2l-1)}\circ\Big(\frac{1}{|\mathcal{D}_{k}|}\sum_{(\bm{x}_{i}, \bm{\bar{y}}_{i})\in \mathcal{D}_{k}}\big(\frac{\partial\mathcal{\widehat  L}(\widehat{W};(\bm{x}_i,\bm{\bar{y}}_i))}{\partial \widehat {W}^{(2l-1)}}-\\
    &&\bm{r}^{\top} \bm \sigma_{(\bm{x}_{i}, \bm{\bar{y}}_{i})}^{(2l-1)} + \upsilon \bm \beta_{(\bm{x}_{i}, \bm{\bar{y}}_{i})}^{(2l-1)}\big)\Big)\\
   &=&R^{(2l-1)} \circ \Big(\nabla F(\widehat{W}^{(2l-1)}, D_{k})-\bm{r}^{\top} \bm \Psi_{k}^{(2l-1)}+\upsilon \bm \Phi_{k}^{(2l-1)}\Big).
\end{eqnarray*}

\subsection{Proofs of Theorems \ref{distribution} and \ref{distribution2}}\label{appendix_theorem45}
\begin{proof}
According to Eqs. \eqref{constraint:r}, \eqref{eq:Rrevised1} and \eqref{constraint:epsilon}, for any $1\leq i \leq n_{l}$ and  $1\leq j \leq n_{l-1}$, we can prove Theorems \ref{distribution} and \ref{distribution2} by dividing $l$ into the following three cases.

\noindent 1) When $l = 1$, we can obtain
\[R^{(1)}_{ij} \cdot \bm\eta_{kij}^{(1)}= \bm r_i^{(1)}\cdot \bm\eta_{kij}^{(1)} \textnormal{ and } R^{(1)}_{ij} \cdot \Delta_{k, v}^{(1)}[ij]=\bm r_i^{(1)}\cdot \Delta_{k, v}^{(1)}[ij].\]
Since $\bm r_i^{(1)} \sim \mathcal{DN}^{*}(1)$,  $\bm\eta_{kij}^{(1)}\sim \mathcal{DN}(\sigma_{\eta}, 1)$ and $\Delta_{k, v}^{(1)}[ij] \sim \mathcal{DN}(\sigma_{\Delta}, 1)$, from Theorem \ref{defactorization}, $R^{(1)}_{ij} \cdot \bm\eta_{kij}^{(1)} \sim \mathcal{N}(0, \sigma_{\eta}^{2})$ and $R^{(1)}_{ij} \cdot \Delta_{k, v}^{(1)}[ij]\sim \mathcal{N}(0, \sigma_{\Delta}^{2})$.

\noindent 2) When $1 < l < L$, we can get 
 \begin{equation*}
      \left\{
      \begin{aligned}
         & R^{(2l-1)}_{ij} \cdot \bm\eta_{kij}^{(2l-1)}= \bm r_i^{(l)}\bm{s}^{(l-1)}_{j}\bm\eta_{kij}^{(2l-1)}\\
     & R^{(2l-1)}_{ij} \cdot \Delta_{k, v}^{(2l-1)}[ij]=\bm r_i^{(l)}\bm{s}^{(l-1)}_{j}\Delta_{k, v}^{(2l-1)}[ij]
      \end{aligned} \right.
  \end{equation*}
  Since $\bm r_i^{(l)}, \bm{s}^{(l-1)}_{j} \sim \mathcal{DN}^{*}(2)$, $\bm\eta_{kij}^{(2l-1)} \sim \mathcal{DN}(\sigma_{\eta}, 1)$ and $\Delta_{k, v}^{(2l-1)}[ij] \sim \mathcal{DN}(\sigma_{\Delta}, 1)$, from Theorem \ref{defactorization}, $R^{(2l-1)}_{ij} \cdot \bm\eta_{kij}^{(2l-1)} \sim \mathcal{N}(0, \sigma_{\eta}^{2})$ and $R^{(2l-1)}_{ij} \cdot \Delta_{k, v}^{(2l-1)}[ij]\sim \mathcal{N}(0, \sigma_{\Delta}^{2})$.

\noindent 3)  When $l = L$, 
  \begin{equation*}
      \left\{
      \begin{aligned}
         & R^{(2L-1)}_{ij} \cdot \bm\eta_{kij}^{(2L-1)}  = \bm{s}_{j}^{(L-1)}\bm\eta_{kij}^{(2L-1)}\\
     &R^{(2L-1)}_{ij} \cdot \Delta_{k, v}^{(2L-1)}[ij]  = \bm{s}_{j}^{(L-1)}\Delta_{k, v}^{(2L-1)}[ij]
      \end{aligned} \right.
  \end{equation*}
  Since $\bm{s}_{j}^{(L-1)}\sim \mathcal{DN}^{*}(1)$, $\bm\eta_{kij}^{(2L-1)} \sim \mathcal{DN}(\sigma_{\eta}, 1)$ and $\Delta_{k, v}^{(2L-1)}[ij] \sim \mathcal{DN}(\sigma_{\Delta}, 1)$, according to Theorem \ref{defactorization}, $R^{(2L-1)}_{ij} \cdot \bm\eta_{kij}^{(2L-1)} \sim \mathcal{N}(0, \sigma_{\eta}^{2})$ and $R^{(2L-1)}_{ij} \cdot \Delta_{k, v}^{(2L-1)}[ij] \sim \mathcal{N}(0, \sigma_{\Delta}^{2})$.

Thus, we deduce that $R^{(2l-1)}\circ \bm\eta_{kij}^{(2L-1)}\sim \mathcal{N}(0, \sigma_{\eta}^{2})$ and $R^{(2l-1)}\circ \Delta_{k, v}^{(2l-1)} \sim \mathcal{N}(0, \sigma_{\Delta}^{2})$ for $1\leq l \leq L$.
\end{proof}

\begin{table*}[h]
        \caption{\small Time cost analysis of various methods on datasets. The table compares different methods with varying numbers of participants ($K=5, 100$) for both server-side and client-side operations. All times are in seconds. The symbol ``\textbf{--}'' indicates that the corresponding operation is not applicable for that method on the specified side. The \textbf{Total Time} column represents the time required to complete one epoch of training. }
        \label{tab:time_cost_detail}
	\centering
	\begin{tabular}{cccccccccccc}
	\toprule
			\multirow{3}{*}{\textbf{Dataset}}&&	\multirow{3}{*}{\textbf{Method}}	&
			\multicolumn{5}{c}{\textbf{Server}} &
			\multicolumn{3}{c}{\textbf{Client}} &
			\multirow{3}{*}{\textbf{Total Time }}			\\
			\cmidrule(lr){4-8} \cmidrule(lr){9-11}
		&&	& MP	& Recover & Agg		& Grad\_Dec	& Key\_Gen		& Train	& Grad\_Enc	& DP	&	\\
    \midrule
\multirow{10}{*}{\textbf{CIFAR10}} &
\multirow{5}{*}{\textbf{K=5}}
& \textbf{FedAVG \cite{McMahanMRHA17}} & \textbf{--} & \textbf{--} & $0.05$ & \textbf{--} & \textbf{--} & $1.32$ & \textbf{--} & \textbf{--} & $1.37$ \\
&& \textbf{TP-SMC \cite{TruexBASLZZ19}} & \textbf{--} & \textbf{--} & $0.81$ & $5.48$ & $0.23$ & $1.53$ & $23.15$ & $4.70$ & $34.37$ \\
&& \textbf{HybridAlpha \cite{XuBZAL19}} & \textbf{--} & \textbf{--} & $358.74$ & $550.51$ & $0.61$ & $1.33$ & $132.65$ & $303.35$ & $1350.18$ \\
&& \textbf{PILE \cite{TangSLZXQ23}} & $0.01$ & $0.02$ & $896.47$ & $2227.14$ & $5.56$ & $1.49$ & $502.17$ & \textbf{--} & $3635.48$ \\
&& \textbf{Our scheme} & $0.01$ & $0.01$ & $0.05$ & \textbf{--} & \textbf{--} & $1.51$ & \textbf{--} & $0.68$ & $2.19$ \\

% \cline{2-12} 
% &\multirow{5}{*}{\textbf{K=10}}
% & \textbf{FedAVG \cite{McMahanMRHA17}} & \textbf{--} & \textbf{--} & $0.06$ & \textbf{--} & \textbf{--} & $1.34$ & \textbf{--} & \textbf{--} & $1.40$ \\
% && \textbf{TP-SMC \cite{TruexBASLZZ19}} & \textbf{--} & \textbf{--} & $0.86$ & $5.24$ & $0.25$ & $1.52$ & $39.49$ & $4.80$ & $52.16$ \\
% && \textbf{HybridAlpha \cite{XuBZAL19}} & \textbf{--} & \textbf{--} & $531.22$ & $736.60$ & $0.64$ & $1.33$ & $152.27$ & $319.51$ & $1741.57$ \\
% && \textbf{PILE \cite{TangSLZXQ23}} & $0.01$ & $0.03$ & $1779.79$ & $2501.01$ & $7.35$ & $1.49$ & $562.47$ & \textbf{--} & $4852.15$ \\
% && \textbf{Our scheme} & $0.01$ & $0.02$ & $0.06$ & \textbf{--} & \textbf{--} & $1.51$ & \textbf{--} & $0.73$ & $2.33$ \\

% \cline{2-12} 
% &     \multirow{5}{*}{\textbf{K=50}}
% & \textbf{FedAVG \cite{McMahanMRHA17}} & \textbf{--} & \textbf{--} & $0.06$ & \textbf{--} & \textbf{--} & $1.36$ & \textbf{--} & \textbf{--} & $1.42$ \\
% && \textbf{TP-SMC \cite{TruexBASLZZ19}} & \textbf{--} & \textbf{--} & $17.50$ & $5.61$ & $0.28$ & $1.45$ & $26.26$ & $4.95$ & $43.05$ \\
% && \textbf{HybridAlpha \cite{XuBZAL19}} & \textbf{--} & \textbf{--} & $1995.04$ & $890.55$ & $1.40$ & $1.32$ & $138.52$ & $348.11$ & $3374.94$ \\
% && \textbf{PILE \cite{TangSLZXQ23}} & $0.01$ & $0.10$ & $4725.77$ & $3064.31$ & $74.19$ & $1.44$ & $523.12$ & \textbf{--} & $8388.94$ \\
% && \textbf{Our scheme} & $0.01$ & $0.08$ & $0.07$ & \textbf{--} & \textbf{--} & $1.45$ & \textbf{--} & $0.75$ & $2.36$ \\

\cline{2-12} 
&\multirow{5}{*}{\textbf{K=100}}
& \textbf{FedAVG \cite{McMahanMRHA17}} & \textbf{--} & \textbf{--} & $0.07$ & \textbf{--} & \textbf{--} & $1.40$ & \textbf{--} & \textbf{--} & $1.47$ \\
&& \textbf{TP-SMC \cite{TruexBASLZZ19}} & \textbf{--} &\textbf{--}& $35.82$ & $5.75$ & $0.31$ & $1.46$ & $28.29$ & $5.02$ & $76.65$ \\
&& \textbf{HybridAlpha \cite{XuBZAL19}} & \textbf{--} &\textbf{--}& $3581.74$ & $1021.16$ & $40.38$ & $1.24$ & $137.60$ & $308.87$ & $5077.49$ \\
&& \textbf{PILE \cite{TangSLZXQ23}} & $0.01$ & $0.19$ & $6485.76$ & $3659.73$ & $120.50$ & $1.41$ & $539.25$ & \textbf{--} & $10802.59$ \\
&& \textbf{Our scheme} & $0.01$ & $0.17$ & $0.08$ & \textbf{--} & \textbf{--} & $1.41$ & \textbf{--} & $0.77$ & $2.44$ \\

\midrule 
%cifar100
\multirow{10}{*}{\textbf{CIFAR100}}
&\multirow{5}{*}{\textbf{K=5}}
& \textbf{FedAVG \cite{McMahanMRHA17}} 
& \textbf{--} & \textbf{--} & $0.25$ & \textbf{--} & \textbf{--} & $5.36$ & \textbf{--} & \textbf{--} & $5.61$  \\
&& \textbf{TP-SMC \cite{TruexBASLZZ19}}
& \textbf{--} & \textbf{--} & $3.35$ & $11.82$ & $0.32$ & $5.82$ & $50.96$ & $27.46$ & $68.74$  \\
&& \textbf{HybridAlpha \cite{XuBZAL19}} 
& \textbf{--} & \textbf{--} & $1906.74$ & $1730.26$ & $0.69$ & $5.40$ & $564.85$ & $808.13$ & $5100.72$  \\
&& \textbf{PILE \cite{TangSLZXQ23}} 
& $0.02$ & $0.01$ & $2762.33$ & $4077.67$ & $5.65$ & $5.73$ & $1455.40$ & \textbf{--} & $8306.81$  \\
&& \textbf{Our scheme} 
& $0.02$ & $0.01$ & $0.28$ & \textbf{--} & \textbf{--} & $5.77$ & \textbf{--} & $0.59$ & $6.67$  \\

% \cline{2-12} 
% &     \multirow{5}{*}{\textbf{K=10}}
% & \textbf{FedAVG \cite{McMahanMRHA17}} 
% & \textbf{--} & \textbf{--} & $0.27$ & \textbf{--} & \textbf{--} & $5.05$ & \textbf{--} & \textbf{--} & $5.32$  \\
% && \textbf{TP-SMC \cite{TruexBASLZZ19}}
% & \textbf{--} & \textbf{--} & $3.86$ & $11.27$ & $0.57$ & $6.50$ & $51.65$ & $22.21$ & $96.07$   \\
% && \textbf{HybridAlpha \cite{XuBZAL19}} 
% & \textbf{--} & \textbf{--} & $1899.86$ & $1655.88$ & $0.93$ & $5.34$ & $602.81$ & $703.64$ & $4868.46$   \\
% && \textbf{PILE \cite{TangSLZXQ23}} 
% & $0.02$ & $0.02$ & $4584.77$ & $4159.71$ & $24.27$ & $5.14$ & $1521.63$ & \textbf{--} & $10295.56$  \\
% && \textbf{Our scheme} 
% & $0.02$ & $0.02$ & $0.29$ & \textbf{--} & \textbf{--} & $5.42$ & \textbf{--} & $0.53$ & $6.28$  \\

% \cline{2-12} 
% &\multirow{5}{*}{\textbf{K=50}}
% & \textbf{FedAVG \cite{McMahanMRHA17}} 
% & \textbf{--} & \textbf{--} & $0.38$ & \textbf{--} & \textbf{--} & $6.15$ & \textbf{--} & \textbf{--} & $6.53$  \\
% && \textbf{TP-SMC \cite{TruexBASLZZ19}}
% & \textbf{--} & \textbf{--} & $37.54$ & $11.39$ & $2.64$ & $5.16$ & $53.28$ & $24.70$ & $134.72$  \\
% && \textbf{HybridAlpha \cite{XuBZAL19}} 
% & \textbf{--} & \textbf{--} & $4678.56$ & $1550.53$ & $2.88$ & $6.08$ & $575.82$ & $732.31$ & $7546.18$  \\
% && \textbf{PILE \cite{TangSLZXQ23}} 
% & $0.02$ & $0.11$ & $10306.81$ & $4218.43$ & $172.31$ & $5.45$ & $1462.86$ & \textbf{--} & $16166.00$  \\
% && \textbf{Our scheme} 
% & $0.02$ & $0.10$ & $0.39$ & \textbf{--} & \textbf{--} & $5.91$ & \textbf{--} & $0.53$ & $6.95$  \\

\cline{2-12} 
&\multirow{5}{*}{\textbf{K=100}}
& \textbf{FedAVG \cite{McMahanMRHA17}} 
& \textbf{--} & \textbf{--} & $0.54$ & \textbf{--} & \textbf{--} & $5.38$ & \textbf{--} & \textbf{--} & $5.92$  \\
&& \textbf{TP-SMC \cite{TruexBASLZZ19}}
& \textbf{--} & \textbf{--} & $69.21$ & $12.17$ & $0.38$ & $5.33$ & $54.73$ & $21.48$ & $163.30$  \\
&& \textbf{HybridAlpha \cite{XuBZAL19}} 
& \textbf{--} & \textbf{--} & $13989.87$ & $1417.68$ & $112.93$ & $5.25$ & $576.01$ & $836.78$ & $16938.52$  \\
&& \textbf{PILE \cite{TangSLZXQ23}} 
& $0.03$ & $0.18$ & $18729.23$ & $4275.51$ & $356.89$ & $5.06$ & $1492.73$ & \textbf{--} & $24859.63$  \\
&& \textbf{Our scheme} 
& $0.02$ & $0.17$ & $0.53$ & \textbf{--} & \textbf{--} & $5.93$ & \textbf{--} & $0.52$ & $7.18$  \\

	\bottomrule
	\end{tabular}

\end{table*}

\subsection{Proof of Theorem \ref{theo:para}}\label{appendix_theorem6}
\begin{proof}
    Firstly, note that given $\{\bm{r}^{(l)}\circ \bm{s}^{(l)}\}_{l=1}^{L}$,  for arbitrary noises $\{\bm r^{(l)}\in \mathbb{R}_{>0}^{n_{l}}\}^{L}_{l=1}$, there exist $\{\bm{s}^{(l)}\in \mathbb{R}_{>0}^{n_{l}}\}_{l=1}^{L}$. Besides, given $\{\widehat W^{(2l-1)}\}^{L}_{l=1}$ and $\{\widehat W^{(2l)}=diag(1/(\bm{r}_1^{(l)}\bm{s}_1^{(l)}), 1/(\bm{r}_2^{(l)}\bm{s}_2^{(l)}), \cdots, 1/(\bm{r}_{n_{l}}^{(l)}\bm{s}_{n_{l}}^{(l)}))\}^{L-1}_{l=1}$,  then for arbitrary noises $\{\bm r^{(l)}\}^{L}_{l=1}$ , there exist  global model parameters $W=\{W^{(l)}\}^{2L-1}_{l=1}$ such that the corresponding perturbed global model parameters are $\widehat {W}=\{\widehat W^{(l)}\}^{2L-1}_{l=1}$. 
Precisely, we construct the model parameters $W=\{ W^{(l)}\}^{2L-1}_{l=1}$ as follows:  
\begin{equation*}
\left\{
\begin{aligned}
 &  W^{(2l-1)}=\frac{1}{R^{(2l-1)}}\circ\widehat W^{(2l-1)}, \textnormal{ for } l=1,2,\dots, L-1,\\
& W^{(2l)}=I^{(l)}, \textnormal{ for } l=1,2,\dots, L-1,\\
 & W^{(2L-1)}=\frac{1}{R^{(2L-1)}}\circ(\widehat W^{(2L-1)}-R^{(a)}),
\end{aligned}\right.
\end{equation*}
and the noises $\{R^{(a)}, \{R^{(l)}\}_{l=1}^{2L-1}\}$ are given as
    \begin{equation*}\label{eq:newconstraint:Rrevised}
        \left\{
         \begin{aligned}
            &   R^{(1)}_{ij}=\bm r^{(1)}_i\\
            &  R^{(2l-1)}_{ij}=\bm r_i^{(l)}\bm{s}^{(l-1)}_{j}, \textnormal{ if $1 < l \leq L-1$} \\
            &  R^{(2l)}_{ij}=\left\{
                \begin{aligned}
                    &\frac{1}{\bm{s}^{(l)}_{i}\bm r_i^{(l)}}, \textnormal{ if $i=j$}\\
                    & 0, ~~~~~~~\textnormal{ if $i\neq j$} \\
                \end{aligned}\right.,\textnormal{if $1 \leq l \leq L-1$}\\
            &   R^{(2L-1)}_{ij}=\bm{s}_{j}^{(L-1)}\\
            & R^{(a)}_{ij}= \bm \gamma_i \cdot \bm r^{(a)}_{i}, \\
        \end{aligned}\right. 
    \end{equation*}
Then, for $1\leq l\leq L-1$, we have $R^{(2l-1)} \circ W^{(2l-1)} = R^{(2l-1)} \circ\frac{1}{R^{(2l-1)}}\circ\widehat W^{(2l-1)}=\widehat W^{(2l-1)}$,~
$R^{(2l)} \circ W^{(2l)} = R^{(2l)} \circ I^{(l)}=\widehat W^{(2l)}$,
and $R^{(2L-1)}  \circ W^{(2L-1)} + R^{(a)} = R^{(2L-1)}\circ\frac{1}{R^{(2L-1)}}\circ\widehat W^{(2L-1)} - R^{(2L-1)}\circ\frac{1}{R^{(2L-1)}}\circ R^{(a)}+R^{(a)}=\widehat W^{(2L-1)}.$
 Due to the arbitrariness of  $\{\bm r^{(l)}\}_{l=1}^{L}$ and $\bm r^{(a)}$, the set \big\{$\{ W^{(2l-1)}\}^{L}_{l=1}$: the corresponding  perturbed global model parameters are the given $\widehat {W}=\{\widehat W^{(l)}\}^{2L-1}_{l=1}$ \big\} is a positive measure set. However, \{$\{ W^{(2l-1)}\}^{L}_{l=1}$: $W=\{ W^{(2l-1)}\}^{L}_{l=1}$ are exactly the true global model parameters\} is a zero measure set. Therefore, the possibility that the clients obtain the true global model parameters equals 0.
\end{proof}

\section{Other Experimental Results}
\label{appendix_performance_detaile}
\subsection{Computational Efficiency Details}
\label{appendix_Computational_efficiency_details}
To explain the complexity of the proposed scheme, especially in terms of setup and maintenance of noise mechanisms, we add detailed comparisons in Tab. \ref{tab:time_cost_detail}. It shows the time taken for specific operations on both the server and client sides, including model perturbation (MP), recovery (Recover), aggregation (Agg), gradient decryption (Grad\_Dec), key generation (Key\_Gen), training (Train), gradient encryption (Grad\_Enc), and differential privacy (DP). The ``Total Time'' column summarizes the time required for one epoch of training. 

From the table, we can see that the computational costs of our $\mathbf{MP\_Server}$ ($\sim0.01$ for CIFAR10 and $\sim0.02$ for CIFAR100) and $\mathbf{DDP\_Client}$ ($\sim0.7$ for CIFAR10 and CIFAR100) are essentially negligible. Furthermore, our scheme does not introduce any additional overhead for model training or gradient aggregation, as experimental results show that the time for ``Agg'' and ``train'' in our method is nearly identical to that of FedAvg. However, the costs of introducing privacy protection in SOTAs are very expensive.

\subsection{Reconstruct Attack Image Results}
\label{appendix_Reconstruct}
We evaluate the robustness of our proposed scheme against reconstruction attacks, where adversaries attempt to recover the original training data from global model. Tab. \ref{tab:reconstructionResult} provides a visual comparison of reconstructed images across various methods. The ``Target'' represents the original ground-truth images, serving as a baseline for comparison.
It demonstrates that the reconstruction results of our scheme are essentially unrecognizable and devoid of meaningful information, while SOTAs exhibit varying degrees of reconstruction fidelity, revealing significant information about the original dataset. As a result, our scheme achieves a superior level of privacy preservation.

\begin{table}[htbp]
\centering
\caption{The Visualization of Reconstructed Results}
\label{tab:reconstructionResult}
\begin{tabular}{m{2.5cm}<{\centering}|m{1.1cm}<{\centering}m{1.1cm}<{\centering}m{1.1cm}<{\centering}}
  	\toprule
\textbf{Method} & \multicolumn{3}{c}{\textbf{Reconstructed Images}} \\
	\midrule
\textbf{Target} & \includegraphics[width=1.0cm]{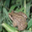} & \includegraphics[width=1.0cm]{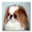} & \includegraphics[width=1.0cm]{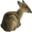} \\
\textbf{FedAVG \cite{McMahanMRHA17}} & \includegraphics[width=1.0cm]{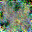} & \includegraphics[width=1.0cm]{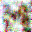} & \includegraphics[width=1.0cm]{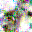} \\
\textbf{PILE \cite{TangSLZXQ23}}      & \includegraphics[width=1.0cm]{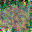} & \includegraphics[width=1.0cm]{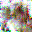} & \includegraphics[width=1.0cm]{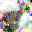} \\
\textbf{TP-SMC \cite{TruexBASLZZ19}} & \includegraphics[width=1.0cm]{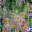} & \includegraphics[width=1.0cm]{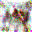} & \includegraphics[width=1.0cm]{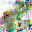} \\
\textbf{HybridAlpha \cite{XuBZAL19}} & \includegraphics[width=1.0cm]{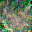} & \includegraphics[width=1.0cm]{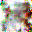} & \includegraphics[width=1.0cm]{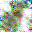} \\
\textbf{MP-CDP} & \includegraphics[width=1.0cm]{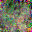} & \includegraphics[width=1.0cm]{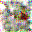} & \includegraphics[width=1.0cm]{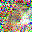} \\
\textbf{Our scheme} & \includegraphics[width=1.0cm]{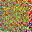} & \includegraphics[width=1.0cm]{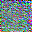} & \includegraphics[width=1.0cm]{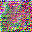} \\

     \bottomrule
\end{tabular}
\end{table}

\end{document}